\icmltitlerunning{Generalization Error of Random Features Models}
\begin{document}

\twocolumn[
\icmltitle{Exact Gap between Generalization Error and Uniform Convergence in Random Feature Models}

\begin{icmlauthorlist}
\icmlauthor{Zitong Yang}{eecs}
\icmlauthor{Yu Bai}{sf}
\icmlauthor{Song Mei}{stats}
\end{icmlauthorlist}

\icmlaffiliation{eecs}{Department of Electrical Engineering and Computer Sciences, University of California, Berkeley.}
\icmlaffiliation{sf}{Salesforce Research.}
\icmlaffiliation{stats}{Department of Statistics, University of California, Berkeley}
\icmlcorrespondingauthor{Zitong Yang}{zitong@berkeley.edu}
\icmlcorrespondingauthor{Yu Bai}{yu.bai@salesforce.com}
\icmlcorrespondingauthor{Song Mei}{songmei@berkeley.edu}
\vskip 0.3in]
\printAffiliationsAndNotice{}

\begin{abstract}
Recent work showed that there could be a large gap between the classical uniform convergence bound and the actual test error of zero-training-error predictors (interpolators) such as deep neural networks. 
To better understand this gap, we study the uniform convergence in the nonlinear random feature model and perform a precise theoretical analysis on how uniform convergence depends on the sample size and the number of parameters.
We derive and prove analytical expressions for three quantities in this model: 1) classical uniform convergence over norm balls, 2) uniform convergence over interpolators in the norm ball (recently proposed by~\citet{zhou2021uniform}), and 3) the risk of minimum norm interpolator. 
We show that, in the setting where the classical uniform convergence bound is vacuous (diverges to $\infty$), uniform convergence over the interpolators still gives a non-trivial bound of the test error of interpolating solutions. 
We also showcase a different setting where classical uniform convergence bound is non-vacuous, but uniform convergence over interpolators can give an improved sample complexity guarantee.
Our result provides a first exact comparison between the test errors and uniform convergence bounds for interpolators beyond simple linear models.
\end{abstract}

\section{Introduction}\label{sec:introduction}

\begin{figure*}[ht]
  \begin{center}
    \subfigure[Noiseless response ($\tau^2=0$)]{
    \includegraphics[width=.31\textwidth]{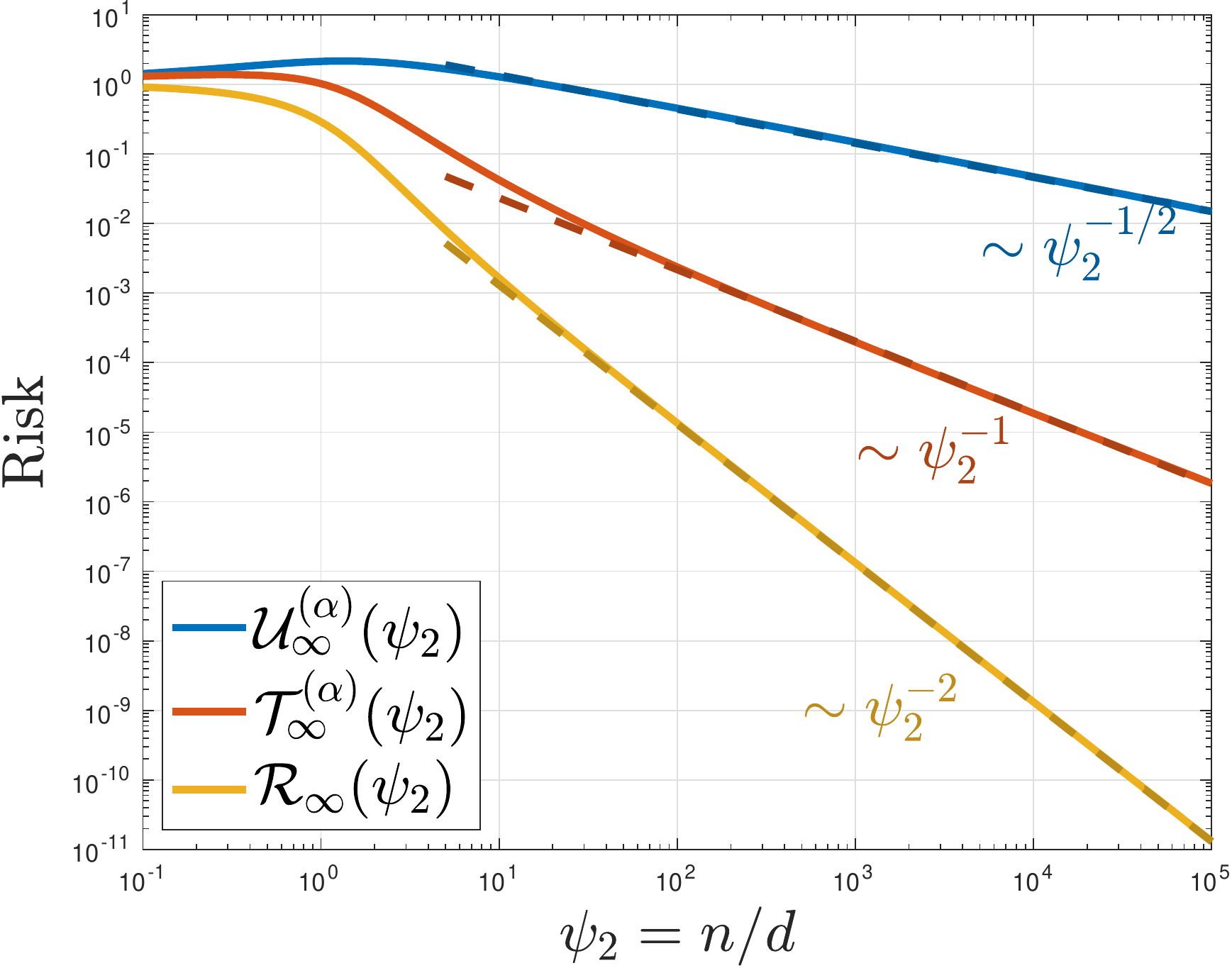}
    \label{fig:kernel_noiseless_utr}
    }
    \subfigure[Noisy response ($\tau^2=0.1$)]{
    \includegraphics[width=.31\textwidth]{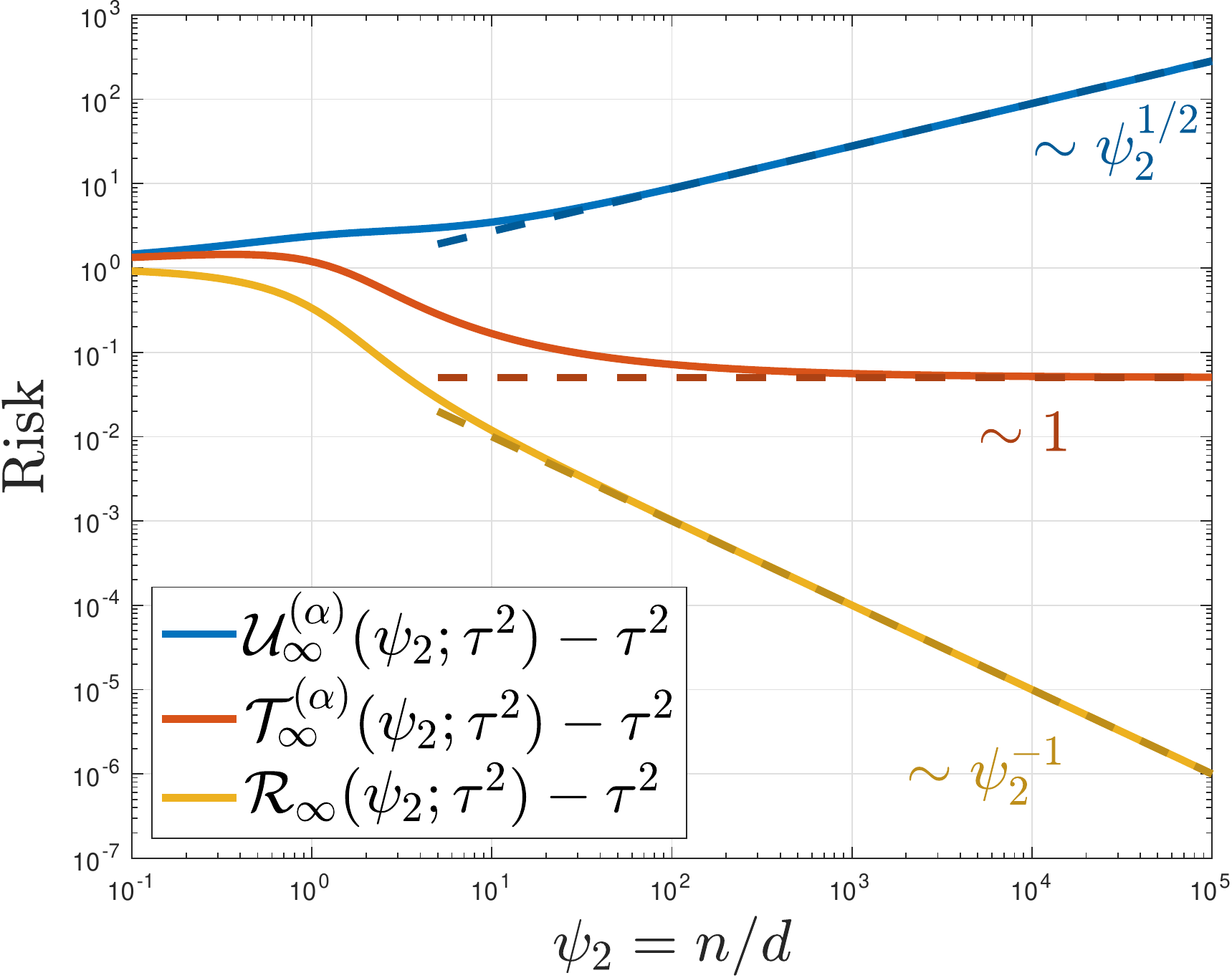}
    \label{fig:kernel_noisy_utr}
    }
    \subfigure[Minimum norm $\cA_\infty(\psi_2)$]{
    \includegraphics[width=.31\textwidth]{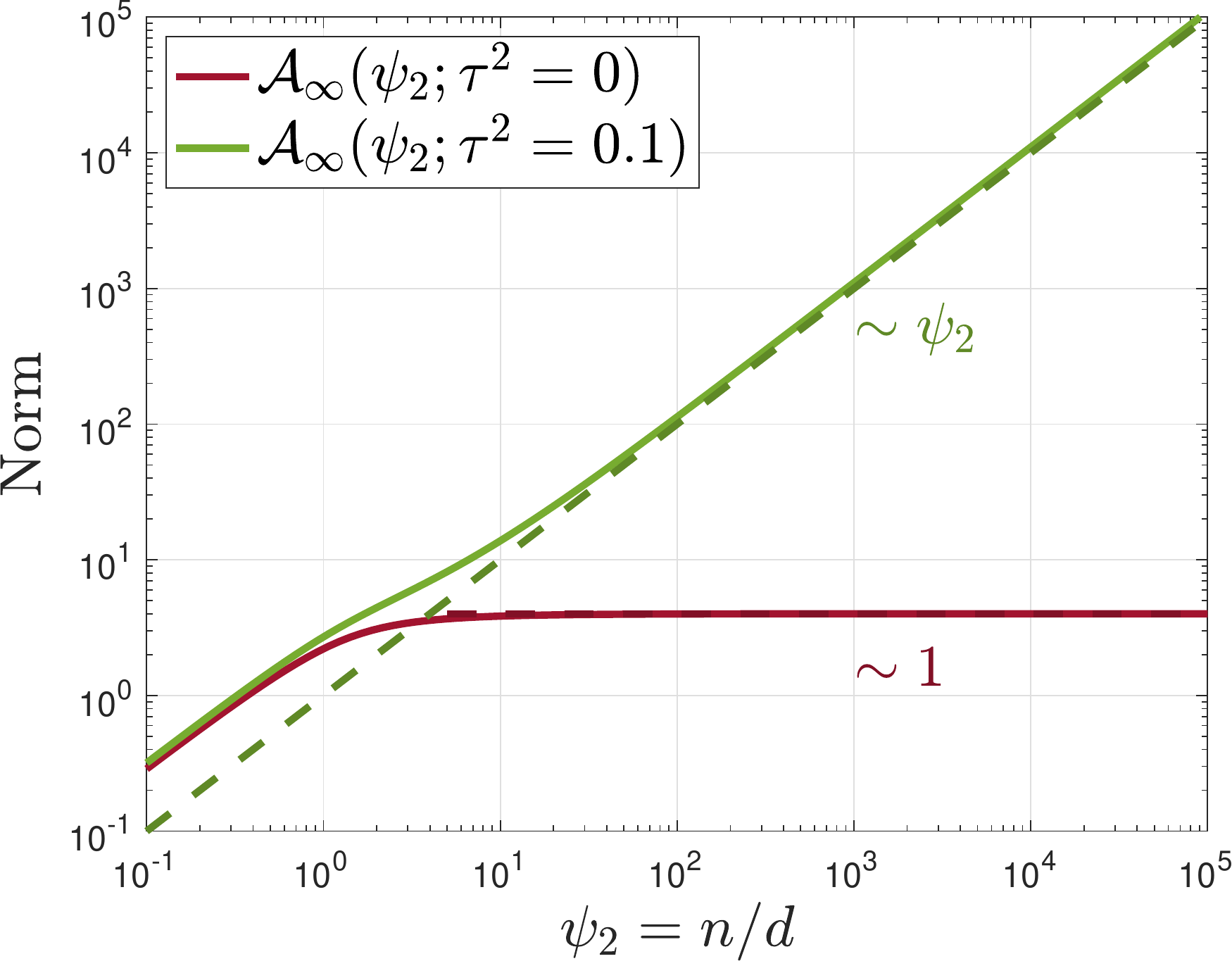}
    \label{fig:kernel_norm}
    }
    \vskip -0.1in
    \caption{Random feature regression with activation function $\sigma(x) = \max(0, x)- 1/\sqrt{2\pi}$, target function $f_d(\bx)=\<\bbeta, \bx\>$ with $\|\bbeta\|_2^2=1$, and $\psi_1=\infty$. The horizontal axes are the number of samples $\psi_2 = \lim_{d \to \infty} n/d$. The solid lines are the the algebraic expressions derived in the main theorem (Theorem \ref{thm:main_theorem}). The dashed lines are the function $\psi_2^p$ in the log scale. 
    Figure \ref{fig:kernel_noiseless_utr} and \ref{fig:kernel_noisy_utr}: Comparison of the classical uniform convergence in the norm ball of size level $\alpha = 1.5$ (Eq. \eqref{eqn:cU_inf_alpha}, blue curve), the uniform convergence over interpolators in the same norm ball (Eq. \eqref{eqn:cT_inf_alpha}, red curve), the risk of minimum norm interpolator (Eq. \eqref{eqn:cR_inf}, yellow curve).
    Figure \ref{fig:kernel_norm}: Minimum norm required to interpolate the training data (Eq. \eqref{eqn:cA_inf}).}
  \label{fig:kernel_noiseless}
  \end{center}
  \vskip -0.2in
\end{figure*}

Uniform convergence—the supremum difference between the training and test errors over a certain function class—is a powerful tool in statistical learning theory for understanding the generalization performance of predictors. 
Bounds on uniform convergence usually take the form of $\sqrt{\text{complexity}/n}$ \cite{vapnik95}, where the numerator represents the complexity of the function class, and $n$ is the sample size. If such a bound is tight, then the predictor is not going to generalize well whenever the function class complexity is too large.


However, it is shown in recent theoretical and empirical work that overparametized models such as deep neural networks could generalize well, even in the interpolating regime in which the model exactly memorizes the data~\citep{zhang2016understanding,belkin2019reconciling}. 
As interpolation (especially for noisy training data) usually requires the predictor to be within a function class with high complexity, this challenges the classical methodology of using uniform convergence to bound generalization.
For example, \citet{belkin2018understand} showed that interpolating noisy data with kernel machines requires exponentially large norm in fixed dimensions.
The large norm would effectively make the uniform convergence bound $\sqrt{\text{complexity}/n}$ vacuous.
\citet{zico2019unable} empirically measured the spectral-norm bound in~\citet{NIPS2017_b22b257a} and find that for interpolators, the bound increases with $n$, and is thus vacuous at large sample size. Towards a more fine-grained understanding, we ask the following
\begin{quote}
  {\bf Question}: How large is the gap between uniform convergence and the actual generalization errors for interpolators?
\end{quote}

In this paper, we study this gap in the random features model from \citet{aliben}. This model can be interpreted as a linearized version of two-layer neural networks \citep{jacot2018neural} and exhibit some similar properties to deep neural networks such as double descent \citep{belkin2019reconciling}. We consider two types of uniform convergence in this model:
\begin{itemize} 
  \item $\cU:$ The classical uniform convergence over a norm ball of radius $\sqrt{A}$.
  \item $\cT:$ The modified uniform convergence over the same norm ball of size $\sqrt{A}$ but only include the interpolators, proposed in \citet{zhou2021uniform}.
\end{itemize}
Our main theoretical result is the exact asymptotic expressions of two versions of uniform convergence $\cU$ and $\cT$ in terms of the number of features, sample size, as well as other relevant parameters in the random feature model.
Under some assumptions, we prove that the actual uniform convergence concentrates to these asymptotic counterparts. To further compare these uniform convergence bounds with the actual generalization error of interpolators, we adopt
\begin{itemize}
	  \item $\cR:$ the generalization error (test error) of the minimum norm interpolator.
\end{itemize}
from \citet{mm19}.
To make $\cU$, $\cT$, $\cR$ comparable with each other, we choose the radius of the norm ball $\sqrt{A}$ to be slightly larger than the norm of the minimum norm interpolator.
Our limiting $\cU$, $\cT$ (with norm ball of size $\sqrt{A}$ as chosen above), and $\cR$ depend on two main variables: $\psi_1=\lim_{d\to\infty} N/d$ representing the number of parameters, and $\psi_2=\lim_{d\to\infty}  n/d$ representing the sample size.
Our formulae for $\cU, \cT$ and $\cR$ yield three major observations.

\begin{enumerate}
\item \textbf{Sample Complexity in the Noisy Regime:}
  When the training data contains label noise (with variance $\tau^2$), we find that the norm required to interpolate the noisy training set grows linearly with the number of samples $\psi_2$ (green curve in Figure \ref{fig:kernel_norm}). As a result, the standard uniform convergence bound $\cU$ grows with $\psi_2$ at the rate $\cU\sim\psi_2^{1/2}$, leading to a vacuous bound on the generalization error (Figure \ref{fig:kernel_noisy_utr}). 
  
  In contrast, in the same setting, we show the uniform convergence over interpolators $\cT\sim 1$ is a constant for large $\psi_2$, and is only order one larger than the actual generalization error $\cR\sim 1$. Further, the excess versions scale as $\cT-\tau^2\sim 1$ and $\cR-\tau^2\sim \psi_2^{-1}$.

\item \textbf{Sample Complexity in the Noiseless Regime:} When the training set does not contain label noise, the generalization error $\cR$ decays faster: $\cR\sim\psi_2^{-2}$. In this setting, we find that the classical uniform convergence $\cU\sim\psi_2^{-1/2}$ and the uniform convergence over interpolators $\cT\sim\psi_2^{-1}$. This shows that, even when the classical uniform convergence already gives a non-vacuous bound, there still exists a sample complexity separation among the classical uniform convergence $\cU$, the uniform convergence over interpolators $\cT$, and the actual generalization error $\cR$.

\item \textbf{Dependence on Number of Parameters:}
  In addition to the results on $\psi_2$, we find that $\cU, \cT$ and $\cR$ decay to its limiting value at the same rate $1/\psi_1$.
  This shows that both $\cU$ and $\cT$ correctly predict that as the number of features $\psi_1$ grows, the risk $\cR$ would decrease.  
\end{enumerate}

These results provide a more precise understanding of uniform convergence versus the actual generalization errors, under a natural model that captures a lot of essences of nonlinear overparametrized learning.

\subsection{Related work}
\paragraph{Classical theory of uniform convergence.}
Uniform convergence dates back to the empirical process theory of \citet{givenko1933} and \citet{cantelli1933}. Application of uniform convergence to the framework of empirical risk minimization usually proceeds through Gaussian and Rademacher complexities \cite{bartlettAndMendelson, bartlett2005} or VC and fat shattering dimensions \cite{vapnik95, bartlett1998the}. 

\paragraph{Modern take on uniform convergence.}
A large volume of recent works showed that overparametrized interpolators  could generalize well \cite{zhang2016understanding, belkin18a, behnm2015search, advani2020high, bartlett2020benign,belkin2018overfit, pmlr-v89-belkin19a, Nakkiran2020Deep, yang2020rethinking, belkin2019reconciling, mm19,spigler2019jamming}, suggesting that the classical uniform convergence theory may not be able to explain generalization in these settings \cite{zhang2016understanding}. Numerous efforts have been made to remedy the original uniform convergence theory using the Rademacher complexity \cite{pmlr-v40-Neyshabur15, pmlr-v75-golowich18a, neyshabur2018the,NIPS2009_f7664060,NEURIPS2019_cf9dc5e4}, the compression approach \cite{pmlr-v80-arora18b}, covering numbers \cite{NIPS2017_b22b257a}, derandomization \cite{negrea2020defense} and PAC-Bayes methods \cite{dziugaite2017computing,neyshabur2018a,nagarajan2018deterministic}. Despite the progress along this line, \citet{zico2019unable,bartlett2021failures} showed that in certain settings ``any uniform convergence'' bounds cannot explain generalization. Among the pessimistic results, \citet{zhou2021uniform} proposes that uniform convergence over interpolating norm ball could explain generalization in an overparametrized linear setting. 
Our results show that in the nonlinear random feature model, there is a sample complexity gap between the excess risk and uniform convergence over interpolators proposed in \citet{zhou2021uniform}.

\paragraph{Random features model and kernel machines.}
A number of papers studied the generalization error of kernel machines
\cite{caponnetto2007optimal, jacot2020kernel, wainwright2019high} and random features models
\cite{rahimi2009weighted, rudi2017generalization, bach2015equivalence, ma2020towards} in the non-asymptotic settings, in which the generalization error bound depends on the RKHS norm. However, these bounds cannot characterize the generalization error for interpolating solutions. In the last three years, a few papers \cite{belkin2018understand, liang2020just, liang2019risk} showed that interpolating solutions of kernel ridge regression can also generalize well in high dimensions. 
Recently, a few papers studied the generalization error of random features model in the proportional asymptotic limit in various settings \cite{hastie2019surprises, louart2018random, mm19, montanari2019generalization, gerace2020generalisation, d2020double, yang2020rethinking, adlam2020understanding,dhifallah2020precise,hu2020universality}, where they precisely characterized the asymptotic generalization error of interpolating solutions, and showed that double-descent phenomenon \cite{belkin2019reconciling,advani2020high} exists in these models. A few other papers studied the generalization error of random features models in the polynomial scaling limits \cite{ghorbani2019linearized, ghorbani2020neural, mei2021generalization}, where other interesting behaviors were shown. 

Precise asymptotics for the Rademacher complexity of some \emph{underparameterized} learning models was calculated using statistical physics heuristics in \citet{abbaras2020rademacher}. 
In our work, we instead focus on the uniform convergence of \emph{overparameterized} random features model.  


\section{Problem formulation}\label{sec:model_setup_def_limit}
In this section, we present the background needed to understand the insights from our main result.
In Section \ref{sec:model_setup} we define the random feature regression task that this paper focuses on.
In Section \ref{sec:def_limit}, we informally present the limiting regime our theory covers.

\subsection{Model setup}\label{sec:model_setup}

Consider a dataset $(\bx_i, y_i)_{i \in [n]}$ with $n$ samples. Assume that the covariates follow $\bx_i \sim_{iid} \Unif(\S^{d-1}(\sqrt d))$, and responses satisfy $y_i = f_d(\bx_i) + \eps_i$, with the noises satisfying $\eps_i \sim_{iid} \cN(0, \tau^2)$ which are independent of $(\bx_i)_{i \in [n]}$. We will consider both the noisy ($\tau^2 > 0$) and noiseless ($\tau^2 = 0$) settings. 

We fit the dataset using the random features model. Let $(\btheta_j)_{j \in [N]} \sim_{iid} \Unif(\S^{d-1}(\sqrt d))$ be the random feature vectors. Given an activation function $\sigma: \R \to \R$, we define the random features function class $\cF_{\RF}(\bTheta)$ by
\[
\cF_{\RF}(\bTheta) \equiv \Big\{ f(\bx) = \sum_{j = 1}^N a_j \sigma\big(\< \bx, \btheta_j\>/\sqrt d \big): \ba \in \R^N \Big\}.
\]

\paragraph{Generalization error of the minimum norm interpolator.} Denote the population risk and the empirical risk of a predictor $\ba \in \R^N$ by
\begin{align}
R(\ba) =&~ \E_{\bx, y}~ \Big( y - \sum_{j = 1}^N a_j \sigma(\< \bx, \btheta_j\>/\sqrt d) \Big)^2,\label{eqn:pop_risk}\\
\what R_n(\ba) =&~ \frac{1}{n}\sum_{i = 1}^n \Big( y_i - \sum_{j = 1}^N a_j \sigma(\< \bx_i, \btheta_j\>/\sqrt d) \Big)^2, \label{eqn:emp_risk}
\end{align}
and the regularized empirical risk minimizer with vanishing regularization by
\[
\ba_{\min} = \lim_{\lambda \to 0+} \arg\min_{\ba} \Big[ \what R_n(\ba) + \lambda \|\ba\|_2^2 \Big].
\]
In the overparameterized regime ($N>n$), under mild conditions, we have $\min_{\ba} \what R_n(\ba) = \what R_n(\ba_{\min})=0$. In this regime, $\ba_{\min}$ can be interpreted as the minimum $\ell_2$ norm interpolator. 

A quantity of interest is the generalization error of this predictor, which gives (with a slight abuse of notation)
\begin{align}
R(N, n, d) \equiv  R(\ba_{\min}). \label{eqn:risk_min_l2}
\end{align}

\paragraph{Uniform convergence bounds.} We denote the uniform convergence bound over a norm ball and the uniform convergence over interpolators in the norm ball by
\begin{align}
&U(A, N, n, d) \equiv  \sup_{(N/d) \| \ba \|_2^2 \le A} \Big( R(\ba) - \what R_n(\ba) \Big),
\label{eqn:uniform}\\
&T(A, N, n, d) \equiv \sup_{(N/d) \| \ba \|_2^2 \le A, \what R_n(\ba) = 0} R(\ba). \label{eqn:uniform_zeroloss}
\end{align}
Here the scaling factor $N/d$ of the norm ball is such that the norm ball converges to a non-trivial RKHS norm ball with size $\sqrt{A}$ as $\psi_1 \to \infty$ (limit taken after $N/d \to \psi_1$). Note that in order for the maximization problem in \eqref{eqn:uniform_zeroloss} to have a non-empty feasible region, we need $\what R_n(\ba_{\min}) = 0$ and need to take $A \geq (N/d)\|\ba_{\min}\|_2^2$: we will show that in the region $N > n$ with sufficiently large $A$, this happens with high probability. 

By construction, for any $A\geq (N/d)\|\ba_{\min}\|_2^2$, we have $U(A) \ge T(A) \ge R(\ba_{\min})$ (see Figuire \ref{fig:simulation}). So a natural problem is to quantify the gap among $U(A)$, $T(A)$, and $R(\ba_{\min})$, which is our goal in this paper.  

\begin{figure}[ht]
  \begin{center}
    \includegraphics[width=0.35\textwidth]{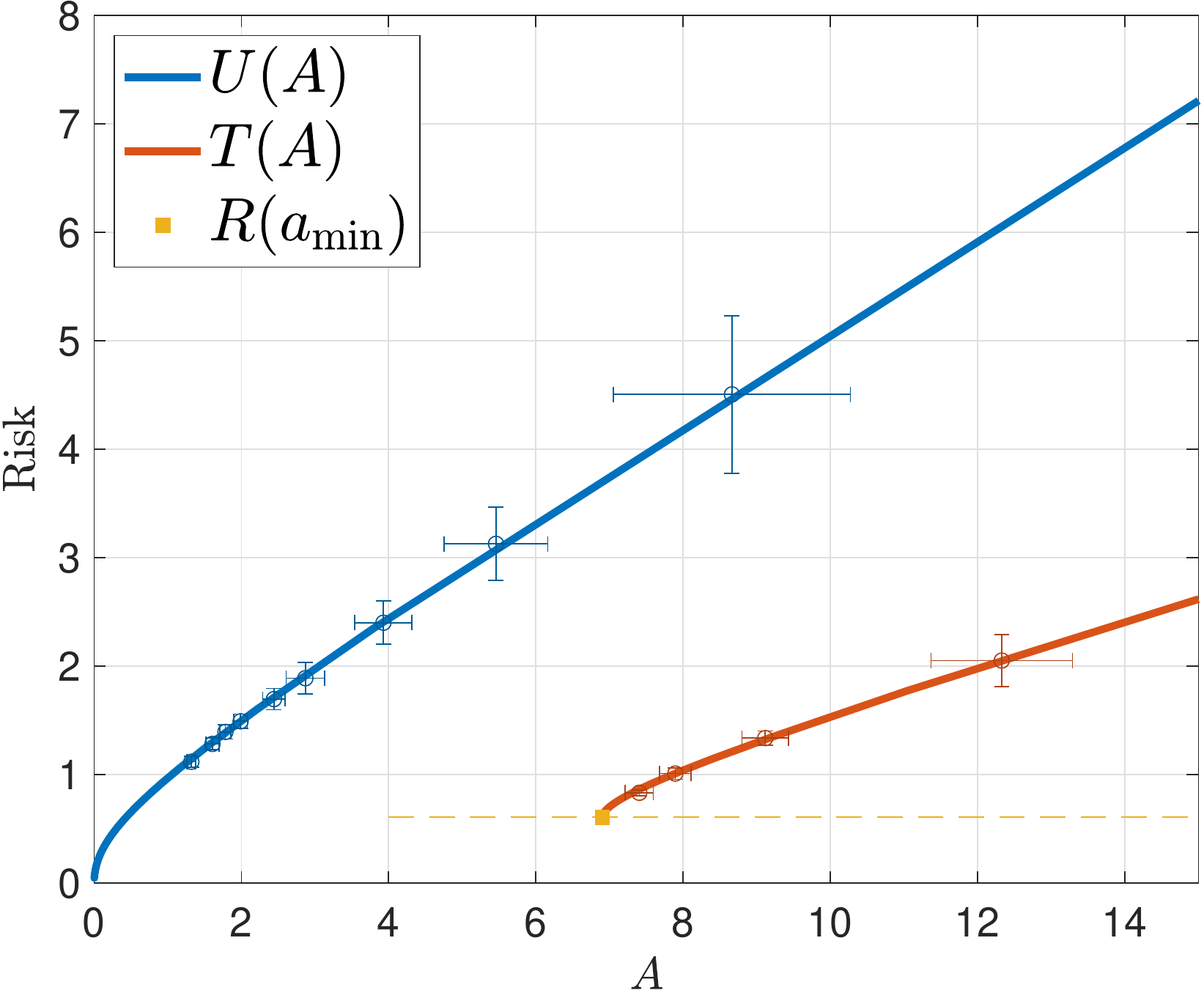}
    \vskip -0.1in
    \caption{
    Illustration of uniform convergence $U$ (c.f. eq. \eqref{eqn:uniform}), uniform convergence over interpolators $T$ (c.f. eq. \eqref{eqn:uniform_zeroloss}), and minimum norm interpolator $R(\ba_{\min})$. We take $y_i =\<\bx_i, \bbeta\>$ for some $\|\bbeta\|_2^2=1$, and take the ReLU activation function $\sigma(x) = \max\{ x, 0\}$. Solid lines are our theoretical predictions $\cU$ and $\cT$ (cf.~\eqref{eqn:u_limit} \&~\eqref{eqn:t_limit}). Points with error bars are obtained from simulations with the number of features $N=500$, number of samples $n=300$, and covariate dimension $d=200$. The error bar reports $1/\sqrt{20}\times$standard deviation over $20$ instances. See Appendix \ref{sec:simulation_detail} for details.
    }
  \label{fig:simulation}
  \end{center}
  \vskip -0.2in
\end{figure}

\subsection{High dimensional regime}\label{sec:def_limit}

We approach this problem in the limit $d \to \infty$ with $N/d \to \psi_1$ and $n/d \to \psi_2$ (c.f. Assumption \ref{ass:linear}). We further assume the setting of a linear target function $f_d$ and a nonlinear activation function $\sigma$ (c.f. Assumptions \ref{ass:linear_target} and \ref{ass:activation}). 
In this regime, our main result Theorem \ref{thm:main_theorem} will show that, the uniform convergence $U$ and the uniform convergence over interpolators $T$ will converge to deterministic functions, i.e., writing here informally, 
\begin{align}
U(A, N, n, d) \overset{d \to \infty}{\rightarrow} \cU(A, \psi_1, \psi_2), \label{eqn:u_limit}\\
T(A, N, n, d) \overset{d \to \infty}{\rightarrow}\cT(A, \psi_1, \psi_2), \label{eqn:t_limit}
\end{align}
where $\cU$ and $\cT$ will be defined in Definition~\ref{def:formula_U_T} (which depends on the definition of some other quantities that are defined in Appendix \ref{sec:analytic_expression_overline} and heuristically presented in Remark \ref{rmk:heuristic_def}). In addition to $\cU$ and $\cT$, Theorem 1 of \citet{mm19} implies the following convergence
\begin{align}
 (N/d) \|\ba_{\min}\|_2^2 \overset{d \to \infty}{\rightarrow} &~ \cA(\psi_1, \psi_2), \label{eqn:cA}\\
R(\ba_{\min}) \overset{d \to \infty}{\rightarrow} &~ \cR(\psi_1, \psi_2). \label{eqn:cR}
\end{align}
The precise algebraic expression of equation \eqref{eqn:cA} and \eqref{eqn:cR} was given in Definition 1 of \citet{mm19}, and we include in Appendix \ref{sec:analytic_expression_overline} for completeness. We will sometimes refer to $\cU, \cT, \cA, \cR$ without explicitly mark their dependence on $A, \psi_1, \psi_2$ for notational simplicity.

\paragraph{Kernel regime. }
\citet{aliben} have shown that, as $N \to \infty$, the random feature space $\cF_{\RF}(\bTheta)$ (equipped with proper inner product) converges to the RKHS (Reproducing Kernel Hilbert Space) induced by the kernel
\[
H(\bx, \bx') = \E_{\bw \sim \Unif(\S^{d-1})}[\sigma(\< \bx, \bw\>) \sigma(\< \bx', \bw)\>].
\]
We expect that, if we take limit $\psi_1\rightarrow\infty$ after $N, d, n\rightarrow\infty$, the formula of $\cU$ and $\cT$ will coincide with the corresponding asymptotic limit of $U$ and $T$ for kernel ridge regression with the kernel $H$. This intuition has been mentioned in a few papers \cite{mm19, d2020double, jacot2020implicit}. In this spirit, we denote

\begin{align}
\cU_\infty(A, \psi_2) \equiv&~ \lim_{\psi_1 \to \infty} \cU(A, \psi_1, \psi_2), 
\label{eqn:cU_inf}\\
\cT_\infty(A, \psi_2) \equiv&~ \lim_{\psi_1 \to \infty} \cT(A, \psi_1, \psi_2),
\label{eqn:cT_inf}\\
\cA_\infty(\psi_2) \equiv&~ \lim_{\psi_1 \to \infty} \cA(\psi_1, \psi_2),
\label{eqn:cA_inf}\\
\cR_\infty(\psi_2) \equiv&~ \lim_{\psi_1 \to \infty} \cR(\psi_1, \psi_2).
\label{eqn:cR_inf}
\end{align}

We will refer to the quantities $\{\cU_\infty, \cT_\infty, \cA_\infty, \cR_\infty\}$ as the $\{$uniform convergence in norm ball, uniform convergence over interpolators in norm ball, minimum $\ell_2$ norm of interpolators, and generalization error of interpolators$\}$ of kernel ridge regression. 

\paragraph{Low norm uniform convergence bounds.} There is a question of which norm $A$ to choose in $\cU$ and $\cT$ to compare with $\cR$. 
In order for $U$ and $T$ to serve as proper bounds for $R(\ba_{\min})$, we need to take at least $A \geq \psi_1\|\ba_{\min}\|_2^2$. Therefore, we will choose 
\begin{equation}\label{eqn:low_norm}
A = \alpha \psi_1 \|\ba_{\min}\|_2^2,
\end{equation}
for some $\alpha>1$ (e.g., $\alpha = 1.1$). 
Note $\psi_1 \|\ba_{\min}\|_2^2 \to \cA(\psi_1, \psi_2)$ as $d \to \infty$. So for a fixed $\alpha > 1$, we further define 
\begin{align}
\cU^{(\alpha)}(\psi_1, \psi_2) \equiv&~ \cU(\alpha \cA(\psi_1, \psi_2), \psi_1, \psi_2), 
\label{eqn:cU_alpha}\\
\cT^{(\alpha)}(\psi_1, \psi_2) \equiv&~ \cT(\alpha \cA(\psi_1, \psi_2), \psi_1, \psi_2), 
\label{eqn:cT_alpha}
\end{align}
and their kernel version,
\begin{align}
\cU^{(\alpha)}_\infty(\psi_2) \equiv&~ \lim_{\psi_1 \to \infty} \cU^{(\alpha)}(\psi_1, \psi_2),
\label{eqn:cU_inf_alpha}\\
\cT^{(\alpha)}_\infty(\psi_2) \equiv&~ \lim_{\psi_1 \to \infty} \cT^{(\alpha)}(\psi_1, \psi_2). 
\label{eqn:cT_inf_alpha}
\end{align}
This definition ensures that $\cR(\psi_1, \psi_2) \le \cT^{(\alpha)}(\psi_1, \psi_2) \le \cU^{(\alpha)}(\psi_1, \psi_2)$ and $\cR_\infty(\psi_2) \le \cT^{(\alpha)}_\infty(\psi_2) \le \cU^{(\alpha)}_\infty(\psi_2)$.

\section{Asymptotic power laws and separations}\label{sec:power_law}

In this section, we evaluate the algebraic expressions derived in our main result (Theorem~\ref{thm:main_theorem}) as well as the quantities $\cU^{(\alpha)}$, $\cT^{(\alpha)}$, $\cA$, and $\cR$, before formally presenting the theorem. We examine their dependence with respect to the noise level $\tau^2$, the number of features $\psi_1 = \lim_{d \to \infty} N/d$, and the sample size $\psi_2 = \lim_{d \to \infty} n/d$, and we further infer their asymptotic power laws for large $\psi_1$ and $\psi_2$.

\subsection{Norm of the minimum norm interpolator}\label{sec:kernel_norm}

Since we are considering uniform convergence bounds over the norm ball of size $\alpha$ times $\cA_\infty(\psi_2)$ (the norm of the min-norm interpolator), let's first examine how $\cA_\infty(\psi_2)$ scale with $\psi_2$. 
As we shall see, $\cA_\infty(\psi_2)$ behaves differently in the noiseless ($\tau^2=0$) and noisy ($\tau^2>0$) settings, so here we explicitly mark the dependence on $\tau^2$, i.e. $\cA_\infty(\psi_2;\tau^2)$.

The inferred asymptotic power law gives (c.f. Figure \ref{fig:kernel_norm})
\[
\begin{aligned}
  \cA_\infty(\psi_2; \tau^2>0) \sim&~ \psi_2,\\
  \cA_\infty(\psi_2; \tau^2=0) \sim&~ 1,
\end{aligned}
\]
where $X_1(\psi) \sim X_2(\psi)$ for large $\psi$ means that 
\[
\lim_{\psi \to \infty} \log(X_1(\psi)) / \log(X_2(\psi)) = 1.
\]
In words, when there is no label noise ($\tau^2=0$), we can interpolate infinite data even with a finite norm. When the responses are noisy $(\tau^2>0)$, interpolation requires a large norm that is proportional to the number of samples.

On a high level, our statement echoes the finding of \citet{belkin2018understand}, where they study a binary classification problem using the kernel machine, and prove that an interpolating classifier requires RKHS norm to grow at least exponentially with $n^{1/d}$ for fixed dimension $d$. Here instead we consider the high dimensional setting and we show a linear grow in $\psi_2 = \lim_{d \to \infty} n/d$.

\subsection{Kernel regime with noiseless data}\label{sec:kernel_noiseless}
 We first look at the noiseless setting ($\tau^2=0$) and present the asymptotic power law for the uniform convergence $\cU_\infty^{(\alpha)}$ over the low-norm ball, the uniform convergence over interpolators $\cT_\infty^{(\alpha)}$ in the low norm ball, and the minimum norm risk  $\cR_\infty$ from \eqref{eqn:cU_inf_alpha} \eqref{eqn:cT_inf_alpha} \eqref{eqn:cR_inf}, respectively.

In this setting, the inferred asymptotic power law of $\cU_\infty^{(\alpha)}(\psi_2)$, $\cT_\infty^{(\alpha)}(\psi_2)$, and $\cR_\infty(\psi_2)$ gives (c.f. Figure \ref{fig:kernel_noiseless_utr})
\[
\begin{aligned}
\cU_\infty^{(\alpha)}(\psi_2;\tau^2=0) &\sim \psi_2^{-1/2},\\
\cT_\infty^{(\alpha)}(\psi_2;\tau^2=0) &\sim \psi_2^{-1},\\
\cR_\infty^{(\alpha)}(\psi_2;\tau^2=0) &\sim \psi_2^{-2}.
\end{aligned}
\]
As we can see, all the three quantities converge to $0$ in the large sample limit, which indicates that uniform convergence is able to explain generalization in this setting. yet uniform convergence bounds do not correctly capture the convergence rate (in terms of $\psi_2$) of the generalization error.

\subsection{Kernel regime with noisy data}\label{sec:kernel_noisy}
In the noisy setting (fix $\tau^2>0$), the Bayes risk (minimal possible risk) is $\tau^2$. We study the excess risk and the excess version of uniform convergence bounds by subtracting the Bayes risk $\tau^2$. The inferred asymptotic power law gives (c.f. Figure \ref{fig:kernel_noisy_utr})
\[
\begin{aligned}
\cU^{(\alpha)}_\infty(\psi_2;\tau^2) - \tau^2 &\sim \psi_2^{1/2},\\
\cT^{(\alpha)}_\infty (\psi_2;\tau^2) - \tau^2 &\sim 1,\\
\cR_\infty(\psi_2;\tau^2) - \tau^2 &\sim \psi_2^{-1}.
\end{aligned}
\]
In the presence of label noise, the excess risk $\cR_\infty- \tau^2$ vanishes in the large sample limit. In contrast, the classical uniform convergence $\cU_\infty$ becomes vacuous,  whereas the uniform convergence over interpolators $\cT_\infty$ converges to a constant, which gives a non-vacuous bound of $\cR_\infty$. 

The decay of the excess risk of minimum norm interpolators even in the presence of label noise is no longer a surprising phenomenon in high dimensions \cite{liang2019risk, ghorbani2019linearized, bartlett2020benign}. A simple explanation of this phenomenon is that the nonlinear part of the activation function $\sigma$ has an implicit regularization effect \cite{mm19}. 

The divergence of $\cU^{(\alpha)}_\infty$ in the presence of response noise is partly due to that $\cA_\infty(\psi_2)$ blows up linearly in $\psi_2$ (c.f. Section \ref{sec:kernel_norm}). In fact, we can develop a heuristic intuition that $\cU_\infty(A, \psi_2; \tau^2) \sim A / \psi_2^{1/2}$. 
Then the scaling  $\cU^{(\alpha)}_\infty(\psi_2; \tau^2 > 0) \sim \cA_\infty(\psi_2; \tau^2 > 0) / \psi_2^{1/2} \sim \psi_2^{1/2}$ can be explained away by the power law of $\cA_\infty(\psi_2; \tau^2 > 0) \sim \psi_2$. 
In other words, the complexity of the function space  of interpolators grows faster than the sample size $n$, which leads to the failure of uniform convergence in explaining generalization. This echoes the findings in \citet{zico2019unable}. 

To illustrate the scaling $\cU_\infty(A, \psi_2)\sim A/\psi_2^{1/2}$. We fix all other parameters ($\mu_1, \mu_\star, \tau, F_1$), and examine the dependence of $\cU_\infty$ on $A$ and $\psi_2$. We choose $A = A(\psi_2)$ according to different power laws $A(\psi_2) \sim \psi_2^p$ for $p = 0, 0.25, 0.5, 0.75, 1$. The inferred asymptotic power law gives $\cU_{\infty}(A(\psi_2), \psi_2)\sim \psi_2^{p-0.5}$ (c.f. Figure \ref{fig:uAfunc}). This provides an evidence for the relation $\cU_\infty(A,\psi_2) \sim A/ \psi_2^{1/2}$. 
\begin{figure}[ht]
  \begin{center}
    \includegraphics[width=0.35\textwidth]{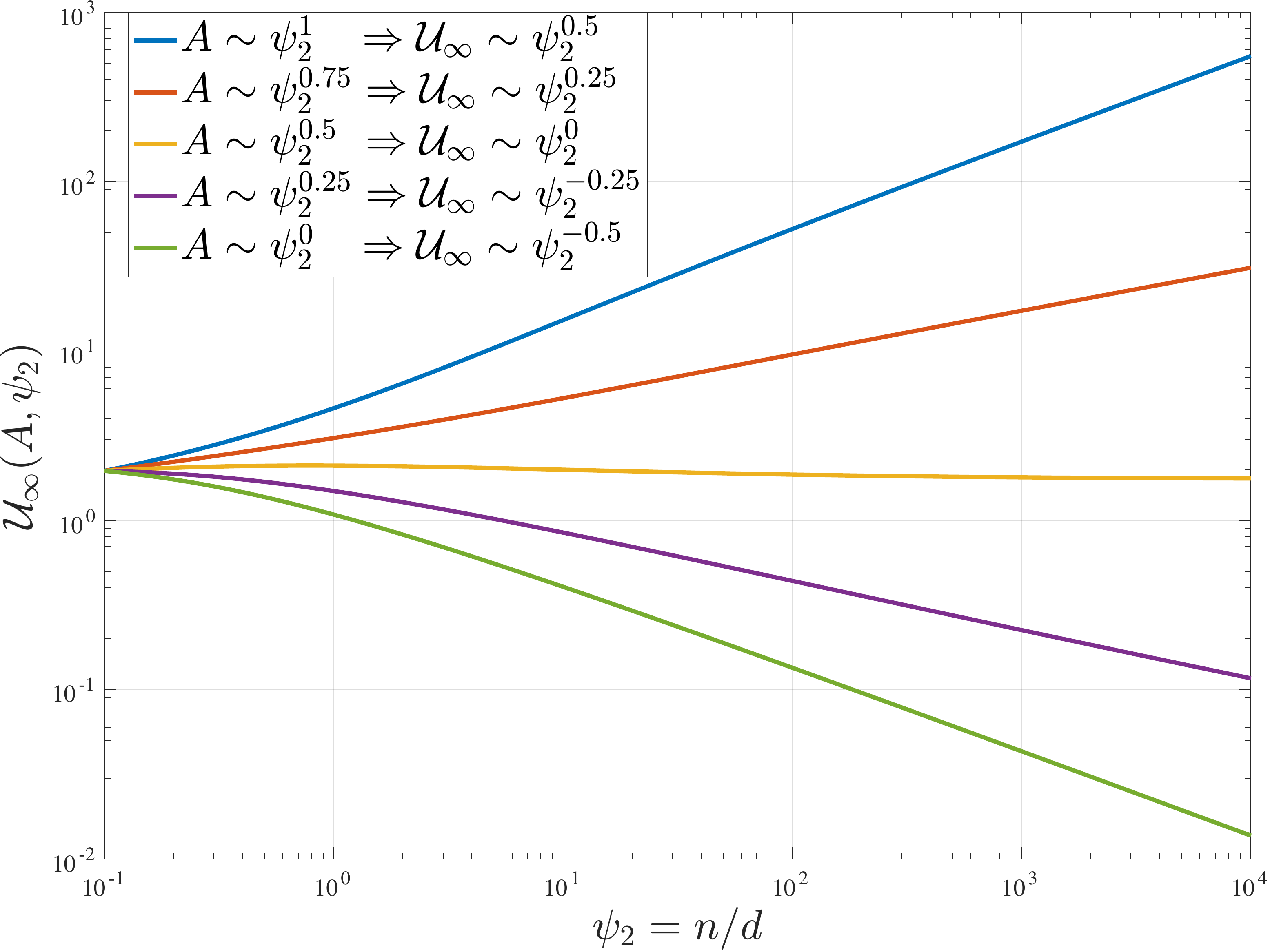}
    \vskip -0.1in
    \caption{Uniform convergence $\cU_\infty(A(\psi_2), \psi_2)$ over the norm ball in the kernel regime $\psi_1\rightarrow\infty$. The size of the norm ball $A = A(\psi_2)$ is chosen according to different power laws as shown in the legend.}
  \label{fig:uAfunc}
  \end{center}
  \vskip -0.2in
\end{figure}

\subsection{Finite-width regime}\label{sec:finite_width}

\begin{figure*}[ht]
  \begin{center}
  	\subfigure[]{
    \includegraphics[width=.33\textwidth]{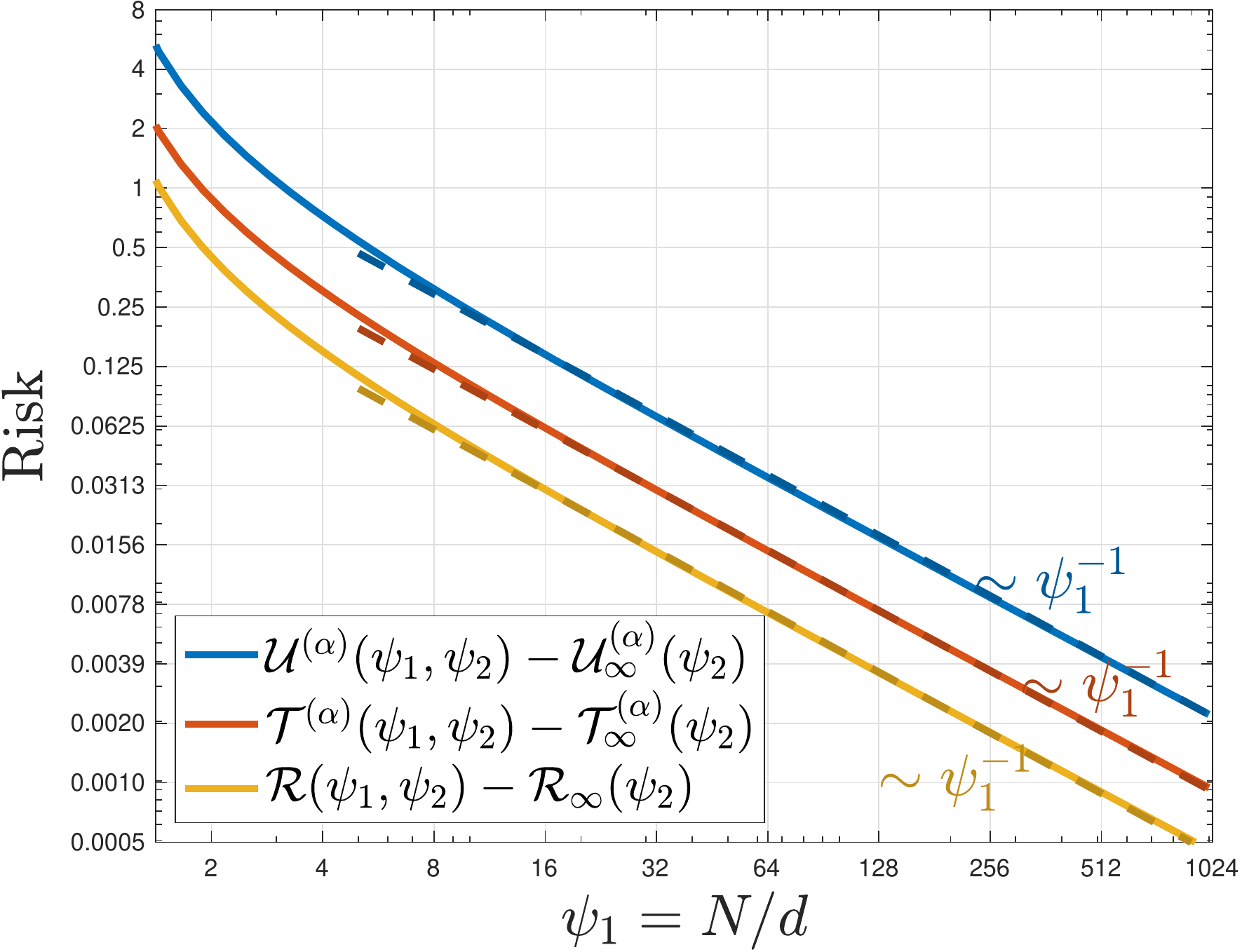}
    \label{fig:rf_utr}
    }
    \subfigure[]{
    \includegraphics[width=.33\textwidth]{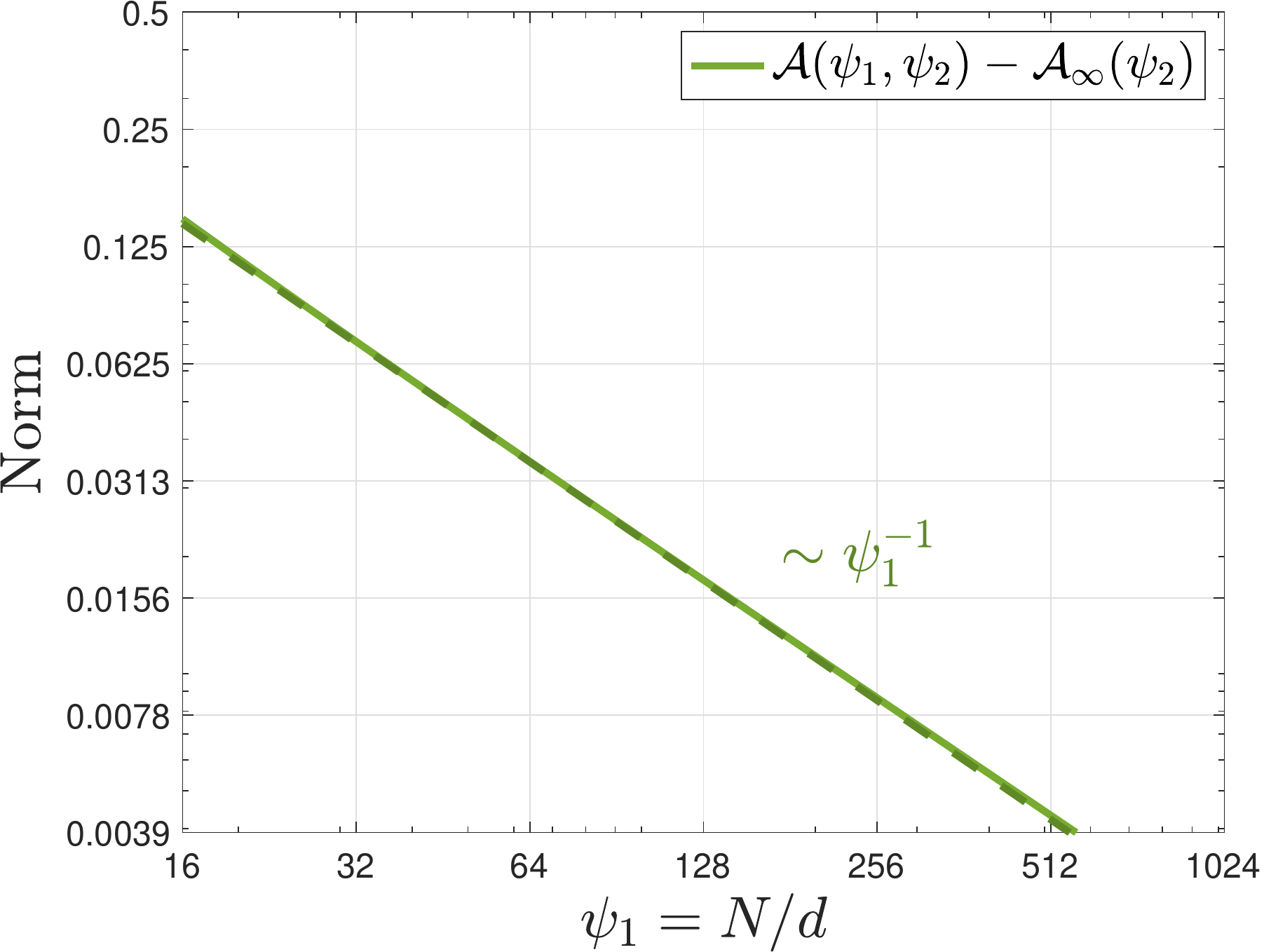}
    \label{fig:rf_norm}
    }
    \vskip -0.1in
    \caption{Random feature regression with the number of sample $\psi_2=1.5$, activation function $\sigma(x) = \max(0, x)- 1/\sqrt{2\pi}$, target function $f_d(\bx)=\<\bbeta, \bx\>$ with $\|\bbeta\|_2^2=1$, and noise level $\tau^2=0.1$. The horizontal axes are the number of features $\psi_1$. The solid lines are the the algebraic expressions derived in the main theorem (Theorem \ref{thm:main_theorem}). The dashed lines are the function $\psi_1^p$ in the log scale.
    Figure \ref{fig:rf_utr}: Comparison of the classical uniform convergence in the norm ball of size level $\alpha = 1.5$ (Eq. \eqref{eqn:cU_alpha}, blue curve), the uniform convergence over interpolators in the same norm ball (Eq. \eqref{eqn:cT_alpha}, red curve), the risk of minimum norm interpolator (Eq. \eqref{eqn:cR}, yellow curve).
    Figure \ref{fig:rf_norm}: Minimum norm required to interpolate the training data (Eq. \eqref{eqn:cA}).}
  \label{fig:rf_psi1}
  \end{center}
  \vskip -0.2in
\end{figure*}

Here we shift attention to the dependence of $\cU$, $\cT$, and $\cR$ on the number of features $\psi_1$.
We fix the number of training samples $\psi_2$, noise level $\tau^2 > 0$, and norm level $\alpha > 1$ similar as before. Since $\cU^{\alpha}\rightarrow\cU^{\alpha}_\infty, \cT^{\alpha}\rightarrow\cT^{\alpha}_\infty$ and $\cR\rightarrow\cR_\infty$ as $\psi_1\rightarrow\infty$, we look at the dependence of $\cU^{\alpha}-\cU^{\alpha}_\infty, \cT^{\alpha}-\cT^{\alpha}_\infty$ and $\cR^{\alpha}-\cR^{\alpha}_\infty$ with respect to $\psi_1$. The inferred asymptotic law gives (c.f. Figure \ref{fig:rf_psi1})
\[
\begin{aligned}
\cU^{(\alpha)}(\psi_1, \psi_2) - \cU^{(\alpha)}_\infty(\psi_2) &\sim \psi_1^{-1},\\
\cT^{(\alpha)}(\psi_1, \psi_2) - \cT^{(\alpha)}_\infty(\psi_2) &\sim \psi_1^{-1},\\
\cR(\psi_1, \psi_2) - \cR_\infty(\psi_2) &\sim \psi_1^{-1}, \\
\cA(\psi_1, \psi_2) - \cA_\infty(\psi_2) &\sim \psi_1^{-1}.
\end{aligned}
\]


Note that large $\psi_1$ should be interpreted as the model being heavily overparametrized (a large width network).
This asymptotic power law implies that, both uniform convergence bounds correctly predict the decay of the test error with the increase of the number of features.

\paragraph{Remark on power laws.} For the derivation of the power laws in this section, instead of working with the analytical formula, we adopt an empirical approach: we perform linear fits with the inferred slopes, upon the numerical evaluations (of these expressions defined in Definition \ref{def:formula_U_T}) in the log-log scale. However, these linear fits are for the analytical formulae and do not involve randomness, and thus reliably indicate the true decay rates.


\section{Main theorem}

In this section, we state the main theorem that presents the asymptotic expressions for the uniform convergence bounds. We will start by stating a few assumptions, which fall into two categories: Assumption \ref{ass:linear_target}, \ref{ass:activation}, and \ref{ass:linear}, which specify the setup for the learning task; Assumption \ref{ass:overline_U_invertable} and \ref{ass:exchange_limit}, which are technical in nature. 

\subsection{Modeling assumptions}
The three assumptions in this subsection specify the target function, the activation function, and the limiting regime.
\begin{assumption}[Linear target function]\label{ass:linear_target} 
We assume that $f_d\in L^2(\S^{d-1}(\sqrt{d}))$ with $f_d(\bx) = \< \bbeta^{(d)}, \bx\>$, where $\bbeta^{(d)} \in \R^d$ and 
\[
\lim_{d \to \infty} \| \bbeta^{(d)} \|_2^2 = \normf_1^2.
\]
\end{assumption}

We remark here that, if we are satisfied with heuristic formulae instead of rigorous results, we are able to deal with non-linear target functions, where the additional nonlinear part is effectively increasing the noise level $\tau^2$. This intuition was first developed in \cite{mm19}. 

\begin{assumption}[Activation function]\label{ass:activation}
Let $\sigma \in C^2(\R)$ with $\vert \sigma(u) \vert, \vert\sigma'(u)\vert, \vert\sigma''(u)\vert \le c_0 e^{c_1 \vert u \vert}$ for some constant $c_0, c_1 < \infty$. Define 
\[
\ob_0 \equiv \E[\sigma(G)], ~ \ob_1 \equiv \E[G \sigma(G)], ~ \ob_\star^2 \equiv \E[\sigma(G)^2] - \ob_0^2 - \ob_1^2,
\]
where expectation is with respect to $G \sim \cN(0, 1)$. Assume $\ob_0 = 0$, $0 < \ob_1^2, \ob_\star^2 < \infty$.
\end{assumption}

The assumption that $\ob_0 = 0$ is not essential and can be relaxed with a certain amount of additional technical work. 

\begin{assumption}[Proportional limit]\label{ass:linear}
Let $N = N(d)$ and $n = n(d)$ be sequences indexed by $d$. We assume that the following limits exist in $(0, \infty)$: 
\[
\lim_{d \to \infty} N(d) / d = \psi_1, ~~~~~~~ \lim_{d \to \infty} n(d) / d = \psi_2. 
\]

\end{assumption}

\subsection{Technical assumptions}

We will make some assumptions upon the properties of some random matrices that appear in the proof. These assumptions are technical and we believe they can be proved under more natural assumptions. However, proving them requires substantial technical work, and we defer them to future work. We note here that these assumptions are often implicitly required in papers that present intuitions using heuristic derivations. Instead, we ensure the mathematical rigor by listing them. See Section \ref{sec:discuss} for more discussions upon these assumptions.

We begin by defining some random matrices which are the key quantities that are used in the proof of our main results.
\begin{definition}[Block matrix and log-determinant]\label{def:log_determinant_A}
Let $\bX = (\bx_1, \ldots, \bx_n)^\sT \in \R^{n \times d}$ and $\bTheta = (\btheta_1, \ldots, \btheta_N)^\sT \in \R^{N \times d}$, where $\bx_i, \btheta_a \sim_{iid} \text{\normalfont Unif}(\S^{d-1}(\sqrt{d}))$, as mentioned in Section \ref{sec:model_setup}. Define 
\begin{align}
  \bZ &= \frac{1}{\sqrt{d}}\sigma\left( \frac{\bX\bTheta^\sT}{\sqrt{d}} \right),~~ \bZ_1 = \frac{\mu_1}{d}\bX\bTheta^\sT, \nonumber \\
  \bQ &= \frac{\bTheta\bTheta^\sT}{d},~~~~~~~~~~~~~~~~~~~ \bH = \frac{\bX\bX^\sT}{d},~ 
  \label{eqn:def_Q_H_Z_Z1}
\end{align}
and for $\bq = (s_1, s_2, t_1, t_2, q) \in \R^5$, we define
\[
\bA(\bq) \equiv \begin{bmatrix}
s_1 \id_N + s_2 \bQ & \bZ^\sT + p \bZ_1^\sT \\
\bZ + p \bZ_1 & t_1 \id_n + t_2 \bH\\
\end{bmatrix}. 
\]
Finally, we define the log-deteminant of $\bA(\bq)$ by
\[
G_d(\xi; \bq) \equiv \frac{1}{d} \sum_{i = 1}^{N + n} \Log \lambda_i\Big(\bA(\bq) - \xi \id_{n + N}\Big).
\]
Here $\Log$ is the complex logarithm with branch cut on the negative real axis and $\{\lambda_i(\bA)\}_{i \in [n + N]}$ is the set of eigenvalues of $\bA$. 
\end{definition}

The following assumption states that for properly chosen $\lambda$, some specific random matrices are well-conditioned. As we will see in the next section, this ensures that the dual problems in Eq. (\ref{eqn:uniform_lag}) and (\ref{eqn:uniform_zeroloss_lag}) are bounded with high probability. 


\def\bT{{\boldsymbol T}}
\def\nullZ{{\rm null}}
\def\proj{{\mathsf P}}
\begin{assumption}[Invertability]\label{ass:overline_U_invertable}
Consider the asymptotic limit as specified in Assumption \ref{ass:linear} the activation function as in Assumption \ref{ass:activation}. We assume the following. 
\begin{itemize} 
\item Denote $\overline \bU(\lambda) = \ob_1^2 \bQ  + (\ob_\star^2 - \psi_1 \lambda ) \id_N - \psi_2^{-1}\bZ^\sT \bZ$. There exists $\eps > 0$ and $\lambdau =\lambdau(\psi_1, \psi_2, \ob_1^2, \ob_\star^2)$, such that for any fixed $\lambda \in (\lambdau, \infty) \equiv \Lambdau$, with high probability, we have
\[
\overline \bU(\lambda) \preceq - \eps \id_N. 
\]
\item Denote $\overline \bT(\lambda) = \proj_{\nullZ} [\ob_1^2 \bQ  + (\ob_\star^2 - \psi_1 \lambda ) \id_N ] \proj_{\nullZ}$ where $\proj_{\nullZ} = \id_N - \bZ^\dagger \bZ$. There exists $\eps > 0$ and $\lambdat =\lambdat(\psi_1, \psi_2, \ob_1^2, \ob_\star^2)$, such that for any fixed $\lambda \in (\lambdat, \infty) \equiv \Lambdat$, with high probability we have 
\[
\overline \bT(\lambda) \preceq - \eps \proj_{\nullZ},
\]
and $\bZ$ has full row rank with $\sigma_{\min}(\bZ) \ge \eps$ (which requires $\psi_1 > \psi_2$). 
\end{itemize}
\end{assumption}

The following assumption states that the order of limits and derivatives regarding $G_d$ can be exchanged. 
\begin{assumption}[Exchangeability of limits]\label{ass:exchange_limit}
We denote
\[
\begin{aligned}
\cSu =&~ \{ (\mu_\star^2 - \lambda \psi_1, \mu_1^2, \psi_2, 0,0; \psi_1, \psi_2): \lambda \in (\lambdau, \infty) \},\\
\cSt =&~ \{ (\mu_\star^2 - \lambda \psi_1, \mu_1^2, 0, 0,0; \psi_1, \psi_2): \lambda \in (\lambdat, \infty) \},
\end{aligned} 
\]
where $\lambdau$ and $\lambdat$ are given in Assumption \ref{ass:overline_U_invertable} and depend on $(\psi_1, \psi_2, \ob_1^2, \ob_\star^2)$. For any fixed $(\bq ; \bpsi) = (s_1, s_2, t_1, t_2, p; \psi_1, \psi_2) \in \cSu \cup \cSt$, in the asymptotic limit as in Assumption \ref{ass:linear}, for $k = 1, 2$, we have
\[
\begin{aligned}
\lim_{u \to 0_+} \lim_{d \to \infty} \E[ \nabla_\bq^k G_d(\bi u; \bq)] = \lim_{u \to 0_+ } \nabla_\bq^k \Big( \lim_{d \to \infty} \E[ G_d(\bi u; \bq)] \Big),\\
\end{aligned}
\]
and 
\[
\begin{aligned}
\Big \| \nabla_\bq^k  G_d(0; \bq) - \lim_{u \to 0+} \lim_{d \to \infty} \E[\nabla_\bq^k  G_d(\bi u; \bq)] \Big\| = o_{d, \P}(1),  
\end{aligned}
\]
where $o_{d, \P}(1)$ stands for convergence to $0$ in probability. 
\end{assumption}

\subsection{From constrained forms to Lagrangian forms}

Before we give the asymptotics of $U$ and $T$ as defined in Eq. $\eqref{eqn:uniform}$ and $\eqref{eqn:uniform_zeroloss}$, we first consider their dual forms which are more amenable in analysis. These are given by
\begin{align}
\overline U(\lambda, N, n, d) \equiv&~ \sup_{\ba}  \Big[ R(\ba) - \what R_n(\ba) - \psi_1 \lambda \| \ba \|_2^2 \Big], \label{eqn:uniform_lag}\\
\overline T(\lambda, N, n, d) \equiv&~ \sup_{\ba}\inf_{\bmu} \Big[ R(\ba) - \lambda\psi_1 \| \ba \|_2^2 \label{eqn:uniform_zeroloss_lag}\\
&~ + 2 \< \bmu, \bZ \ba - \by/\sqrt{d}\ \>\Big]. \nonumber
\end{align}
The proposition below shows that the strong duality holds upon the constrained forms and their dual forms. 
\begin{proposition}[Strong Duality]\label{prop:strong_duality}
For any $A > 0$, we have 
\[
\begin{aligned}
U(A, N, n, d) =&~ \inf_{\lambda \ge 0} \Big[ \overline U(\lambda, N, n, d) + \lambda A \Big]. \\
\end{aligned}
\]
Moreover, for any $A > \psi_1 \|\ba_{\min} \|_2^2$, we have  
\[
\begin{aligned}
T(A, N, n, d) =&~ \inf_{\lambda \ge 0}  \Big[ \overline T(\lambda, N, n, d) + \lambda A \Big]. \\
\end{aligned}
\]
\end{proposition}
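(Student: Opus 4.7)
My plan is to recognize both identities as strong duality for a quadratic program with a single Euclidean ball constraint (a trust region subproblem), so the S-lemma applies even though the objectives are indefinite quadratics. The key observation is that $R(\ba)$ and $\what R_n(\ba)$ are both quadratic in $\ba$, so $R(\ba) - \what R_n(\ba)$ and the Lagrangians at hand are quadratic forms, and the constraints are a Euclidean ball (possibly intersected with an affine subspace in the second identity). Throughout I treat $\psi_1$ inside the Lagrangians as the pre-limit shorthand $N/d$.

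For the first identity, the unconstrained quadratic $R(\ba) - \what R_n(\ba)$ is generically indefinite and the constraint $(N/d)\|\ba\|_2^2 \le A$ satisfies Slater at $\ba = 0$ whenever $A > 0$ (the case $A=0$ being trivial). The S-lemma / strong duality for the trust region subproblem then gives
\begin{equation*}
U(A,N,n,d) = \inf_{\lambda \ge 0} \sup_{\ba}\Bigl[R(\ba) - \what R_n(\ba) - \lambda\bigl((N/d)\|\ba\|_2^2 - A\bigr)\Bigr],
\end{equation*}
which is exactly $\inf_{\lambda \ge 0}[\overline U(\lambda,N,n,d)+\lambda A]$. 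The inner supremum is $+\infty$ for small $\lambda$ and finite (with a maximizer attained) once $\lambda$ dominates the largest eigenvalue of the quadratic part of $R-\what R_n$, so the infimum is nontrivial.

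For the second identity, I would first eliminate the equality constraint by a null-space parameterization. The condition $\what R_n(\ba) = 0$ is equivalent to $\bZ\ba = \by/\sqrt{d}$ (by the definition of $\bZ$), and since $\ba_{\min}$ is the minimum-norm interpolator with $\ba_{\min} \perp \ker(\bZ)$, every interpolator has the form $\ba = \ba_{\min} + \bv$ with $\bv \in \ker(\bZ)$ and $\|\ba\|_2^2 = \|\ba_{\min}\|_2^2 + \|\bv\|_2^2$. This rewrites $T(A,N,n,d)$ as the maximization of the quadratic $\bv \mapsto R(\ba_{\min}+\bv)$ on the ball $\{\bv \in \ker(\bZ):\psi_1\|\bv\|_2^2 \le A - \psi_1\|\ba_{\min}\|_2^2\}$, which has strictly positive radius by hypothesis and satisfies Slater at $\bv=0$. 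Another application of the S-lemma (now on the subspace $\ker(\bZ)$) yields strong duality for this subproblem. To identify the result with $\inf_{\lambda}[\overline T(\lambda)+\lambda A]$, observe that $\inf_{\bmu} 2\langle\bmu,\bZ\ba - \by/\sqrt{d}\rangle$ equals $0$ when $\bZ\ba=\by/\sqrt{d}$ and $-\infty$ otherwise, so $\overline T(\lambda) = \sup_{\ba:\bZ\ba=\by/\sqrt{d}}[R(\ba)-\lambda\psi_1\|\ba\|_2^2]$; substituting the null-space parameterization and using orthogonality gives $\overline T(\lambda) + \lambda A = \sup_{\bv\in\ker(\bZ)}[R(\ba_{\min}+\bv) - \lambda\psi_1\|\bv\|_2^2] + \lambda(A - \psi_1\|\ba_{\min}\|_2^2)$, matching the S-lemma output.

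The main obstacle is conceptual rather than technical: the objective quadratics are indefinite, so standard convex Lagrangian duality does not apply, and the identities genuinely rely on the hidden convexity of quadratic optimization with a single ball constraint. Once the S-lemma is invoked as a black box (via its SDP lifting / Sturm--Zhang rank-one decomposition), the remaining work is bookkeeping: verifying Slater at $\ba = 0$ and $\bv = 0$, checking that the penalized objectives become strongly concave for large enough $\lambda$ (so the inner suprema are attained), and handling the degenerate corner case $\ker(\bZ) = \{0\}$ (excluded by the overparameterized regime considered in the paper), in which $T(A) = R(\ba_{\min})$ and the identity is immediate.
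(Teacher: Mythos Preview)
Your proposal is correct and follows essentially the same approach as the paper: both parts rely on strong duality for a quadratic program with a single quadratic constraint (the trust-region/S-lemma result, cited in the paper as Appendix B.1 of Boyd--Vandenberghe), and for $T$ both you and the paper reduce to this case via a null-space parameterization of the interpolation set $\{\ba:\bZ\ba=\by/\sqrt{d}\}=\ba_{\min}+\ker(\bZ)$ before unwinding the $\inf_{\bmu}$ to recover the $\overline T$ form. Your use of the orthogonality $\ba_{\min}\perp\ker(\bZ)$ to simplify the ball constraint is a slight streamlining, but otherwise the arguments coincide.
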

The proof of Proposition \ref{prop:strong_duality} is based on a classical result which states that strongly duality holds for quadratic programs with single quadratic constraint (Appendix B.1 in \citet{boyd_vandenberghe_2004}).
\subsection{Expressions of $\cU$ and $\cT$}

Proposition \ref{prop:strong_duality} transforms our task from computing the asymptotics of $U$ and $T$ to that of $\overline U$ and $\overline T$. The latter is given by the following proposition. 

\begin{proposition}\label{prop:concentration_lag}
Let the target function $f_d$ satisfy Assumption \ref{ass:linear_target}, the activation function $\sigma$ satisfy Assumption \ref{ass:activation}, and $(N, n, d)$ satisfy Assumption \ref{ass:linear}. In addition, let Assumption \ref{ass:overline_U_invertable} and \ref{ass:exchange_limit} hold. Then for $\lambda \in \Lambdau$, with high probability the maximizer in Eq. (\ref{eqn:uniform_lag}) can be achieved at a unique point $\overline \ba_U(\lambda)$, and we have
\[
\begin{aligned}
 \overline U(\lambda, N, n, d) = \overline\cU(\lambda, \psi_1, \psi_2) + o_{d, \P}(1),\\
 \psi_1\|\overline\ba_U(\lambda)\|_2^2  = \cA_U(\lambda, \psi_1, \psi_2) + o_{d, \P}(1).\\
  \end{aligned}
\]
Moreover, for any $\lambda \in \Lambdat$, with high probability the maximizer in Eq. (\ref{eqn:uniform_zeroloss_lag}) can be achieved at a unique point $\overline \ba_T(\lambda)$, and we have
\[
\begin{aligned}
\overline T(\lambda, N, n, d) = \overline\cT(\lambda, \psi_1, \psi_2) + o_{d, \P}(1),\\
\psi_1\|\overline\ba_T(\lambda)\|_2^2 = \cA_T(\lambda, \psi_1, \psi_2) + o_{d, \P}(1) .\\
\end{aligned}
\]
The functions $\overline \cU, \overline \cT, \cA_U, \cA_T$ are given in Definition \ref{def:analytic_expression_overline} in Appendix \ref{sec:analytic_expression_overline}. 
\end{proposition}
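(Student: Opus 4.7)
\textbf{Proof proposal for Proposition \ref{prop:concentration_lag}.} The plan is to (i) reduce each variational problem to a strictly concave quadratic program whose unique maximizer and optimum value have closed forms; (ii) express those scalar quantities as first- and second-order partial derivatives of the log-determinant $G_d$ at carefully chosen $\bq \in \cSu$ or $\cSt$; (iii) use Assumption \ref{ass:exchange_limit} to interchange $d\to\infty$, $u\to 0_+$, and differentiation, reducing everything to the deterministic equivalent of $\lim_{d\to\infty}\E[G_d(\bi u;\bq)]$; and (iv) conclude via concentration of $G_d$ around its expectation.

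For step (i), expand $R(\ba)-\what R_n(\ba)-\psi_1\lambda\|\ba\|_2^2$ in the monomials $a_j a_k$ and $a_j$ using the moment formulas for $\sigma(\langle \bx,\btheta_a\rangle/\sqrt d)$ on $\bx\sim\Unif(\S^{d-1}(\sqrt d))$. The quadratic part of the Hessian in $\ba$ is, up to $o_{d,\P}(1)$ in operator norm, the matrix $\overline\bU(\lambda)$ of Assumption \ref{ass:overline_U_invertable}, and the linear part is a random vector $\bb$ built from $\bbeta^{(d)}, \bX, \bTheta, \bepsilon$ via the same Gaussian-equivalent expansion. Under Assumption \ref{ass:overline_U_invertable} the Hessian is uniformly negative-definite with high probability, so the unique maximizer is $\overline\ba_U(\lambda) = -\overline\bU(\lambda)^{-1}\bb$ and the optimum equals $c-\bb^\sT\overline\bU(\lambda)^{-1}\bb$ for an explicit scalar $c$. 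For $\overline T(\lambda)$ the inner infimum over $\bmu$ is $-\infty$ unless $\bZ\ba=\by/\sqrt d$, so writing $\ba = \bZ^\dagger \by/\sqrt d + \proj_{\nullZ}\widetilde\ba$ (well-defined since $\bZ$ has full row rank by Assumption \ref{ass:overline_U_invertable}), the remaining program in $\widetilde\ba$ has Hessian $\overline\bT(\lambda)$ on $\text{null}(\bZ)$, which is negative-definite by assumption, yielding again a unique closed-form maximizer and optimum.

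For step (ii), the key observation is that the block matrix $\bA(\bq)$ is designed so that its Schur complement against the lower-right block recovers $\overline\bU(\lambda)$ when $(s_1,s_2,t_1,t_2,p)=(\mu_\star^2-\lambda\psi_1,\mu_1^2,\psi_2,0,0)$, and an analogous Schur structure recovers $\overline\bT(\lambda)$ for $\bq\in\cSt$; cross terms involving $\bZ_1 = (\mu_1/d)\bX\bTheta^\sT$ account for the coupling with the linear target. Consequently the scalars $\tfrac{1}{d}\tr\overline\bU(\lambda)^{-1}$, $\bb^\sT\overline\bU(\lambda)^{-1}\bb$, $\|\overline\ba_U(\lambda)\|_2^2$, and their $\overline T$-analogues are recovered as linear combinations of $\partial_{s_1},\partial_{s_2},\partial_{t_1},\partial_{t_2},\partial_p$ of $G_d(\bi u;\bq)$ at $u\to 0_+$, by the standard identity $\partial_\bq\log\det(\bA(\bq)-\xi\bI)=\tr[(\bA(\bq)-\xi\bI)^{-1}\partial_\bq\bA(\bq)]$. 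Assumption \ref{ass:exchange_limit} then lets us push $\lim_{d\to\infty}$ and $\lim_{u\to 0_+}$ past these derivatives, and replaces them by derivatives of the deterministic limit $g(\bi u;\bq)\equiv\lim_{d\to\infty}\E[G_d(\bi u;\bq)]$, which by definition yields $\overline\cU,\overline\cT,\cA_U,\cA_T$.

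Step (iii) requires computing $g$. This is a resolvent computation for a block random matrix whose off-diagonal block contains the nonlinear features $\bZ$ and its linear surrogate $\bZ_1$; by Gaussian equivalence for $\bZ$ (valid in the proportional regime of Assumption \ref{ass:linear}) it reduces to fixed-point equations of Marchenko–Pastur type, for which explicit solutions have appeared in the random-features literature (notably \citet{mm19}); invoking the same machinery delivers $g$ in closed form. Step (iv) is concentration: the log-determinant $G_d$ is a smooth and well-controlled functional of Gaussian-equivalent random matrices, so $G_d-\E G_d = o_{d,\P}(1)$ by Lipschitz/Hanson-Wright concentration, and Assumption \ref{ass:exchange_limit} upgrades this to concentration of the derivatives. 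The main obstacle is the deterministic-equivalent calculation in step (iii)—extracting the fixed-point system for $g$ and verifying analyticity at $\xi=\bi u$ uniformly for small $u>0$ so that the $u\to 0_+$ limit commutes with differentiation; all other steps are either algebraic (step (i)), definitional (step (ii)), or packaged into Assumption \ref{ass:exchange_limit}.
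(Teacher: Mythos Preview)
Your high-level architecture is right and matches the paper's: reduce to a strictly concave quadratic program via Assumption \ref{ass:overline_U_invertable}, write the optimum and optimizer norm as functionals of $\obM^{-1}$ (with $\obM$ the Schur complement of $\bA(\bq)$), identify those functionals with partial derivatives of $G_d$, and invoke Assumption \ref{ass:exchange_limit} together with the deterministic equivalent from \citet{mm19} for $G_d\to g$.

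There is, however, a genuine gap in step (ii). You claim that $\bb^\sT\overline\bU(\lambda)^{-1}\bb$ and $\|\overline\ba_U(\lambda)\|_2^2=\bb^\sT\overline\bU(\lambda)^{-2}\bb$ are recovered as linear combinations of $\partial_\bq G_d$. But $\bA(\bq)$ depends only on $\bX,\bTheta$ and the parameter $\bq$, so derivatives of $G_d$ produce normalized traces like $\tfrac1d\Trace(\obM^{-1}\bQ)$, $\tfrac1d\Trace(\bZ\obM^{-1}\bZ_1^\sT)$, etc.---never the quadratic forms in $\bb$, which carry the additional randomness of $\bbeta^{(d)}$ and $\beps$. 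The paper closes this gap with a two-layer concentration that you do not mention: first, by rotational invariance of $\bx_i,\btheta_j$, one may replace the deterministic $\bbeta^{(d)}$ by a random $\bbeta\sim\Unif(\S^{d-1}(F_1))$ without changing the law of $\overline U,\overline T$; second, one takes $\E_{\bbeta,\beps}$ of each quadratic form, which (since $\E[\bbeta\bbeta^\sT]=(F_1^2/d)\id_d$ and $\E[\beps\beps^\sT]=\tau^2\id_n$) converts them into exactly the traces that are $\partial_\bq G_d$; third, one separately bounds $\Var_{\bbeta,\beps}$ of each term by $O(d^{-1})\cdot\Trace(\obM^{-1}\bA\obM^{-1}\bB)/d$ for suitable $\bA,\bB$, and those variance traces are themselves entries of $\nabla_\bq^2 G_d$ and hence $O_{d,\P}(1)$. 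Your step (iv) only addresses concentration of $G_d$ over $\bX,\bTheta$, not this inner layer over $\bbeta,\beps$; without it, the passage from $\bb^\sT\obM^{-1}\bb$ to $\partial_\bq g$ is not justified.

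A smaller remark: for $\overline T$ you parametrize the feasible set as $\bZ^\dagger\by/\sqrt d+\proj_{\nullZ}\widetilde\ba$ and work with $\overline\bT(\lambda)$ directly. The paper instead writes the KKT system as a single $(N+n)\times(N+n)$ block equation in $(\ba,\bmu)$, so that the optimizer and optimum are again bilinear forms in $\obM^{-1}$ with $\obM=\bA(\bq_T)$ itself (not a Schur complement). This is what makes the $\overline T$ case match the $G_d$ machinery verbatim at $\bq_T=(\mu_\star^2-\lambda\psi_1,\mu_1^2,0,0,0)$; your parametrization is equivalent but you would need to check that the resulting traces over $\text{null}(\bZ)$ still line up with $\partial_\bq G_d$ at $\bq_T$, which is less transparent.
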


\begin{remark}\label{rmk:heuristic_def}
\def\ext{{\rm ext}}
Here we present the heuristic formulae of $\overline \cU, \overline \cT, \cA_U, \cA_T$, and defer their rigorous definition to the appendix. Define a function $g_0(\bq; \bpsi)$ by
\begin{equation}\label{eqn:def_g0_heuristic}
\begin{aligned}
&g_0(\bq; \bpsi) \equiv~ \ext_{z_1, z_2} \Big[ \log\big( (s_2 z_1 + 1)(t_2 z_2 + 1) \\
&- \ob_1^2 (1 + p)^2 z_1 z_2 \big) - \ob_\star^2 z_1 z_2 + s_1 z_1 +  t_1 z_2 \\
&- \psi_1 \log (z_1 / \psi_1) - \psi_2 \log (z_2 / \psi_2) - \psi_1 - \psi_2 \Big], 
\end{aligned}
\end{equation}
where $\ext$ stands for setting $z_1$ and $z_2$ to be stationery (which is a common symbol in statistical physics heuristics). We then take
\[
\begin{aligned}
 \overline \cU(\lambda, \bpsi) = \normf_1^2 ( 1 -  \mu_1^2  \gamma_{s_2} -  \gamma_{p} - \gamma_{t_2} ) + \tau^2( 1 - \gamma_{t_1}),  \\
\end{aligned}
\]
where $\gamma_a \equiv \partial_a g_0(\bq; \bpsi)  \vert_{\bq = (\mu_\star^2 - \lambda\psi_1, \mu_1^2, \psi_2, 0,0)}$ for the symbol $a \in \{s_1, s_2, t_1, t_2, p \}$, and 
\[
\begin{aligned}
\overline \cT(\lambda, \bpsi) = \normf_1^2 ( 1 -  \mu_1^2  \nu_{s_2} -  \nu_{p} - \nu_{t_2} ) + \tau^2( 1 - \nu_{t_1}), 
\end{aligned}
\]
where we define $\nu_a \equiv \partial_a g_0(\bq; \bpsi) \vert_{\bq = (\mu_\star^2 - \lambda\psi_1, \mu_1^2, 0, 0,0)}$ for symbols $a \in \{s_1, s_2, t_1, t_2, p \}$. 
Finally $\cA_U = - \partial_\lambda \overline \cU$, $\cA_T = - \partial_\lambda \overline \cT$. By a further simplification, we can express these formulae to be rational functions of $(\ob_1^2, \ob_\star^2, \lambda, \psi_1, \psi_2, m_1, m_2)$ where $(m_1, m_2)$ is the stationery point of the variational problem in Eq. (\ref{eqn:def_g0_heuristic})  (c.f. Remark \ref{rmk:simplification}). 
\end{remark}

We next define $\cU$ and $\cT$ to be dual forms of $\overline \cU$ and $\overline \cT$. 

\begin{definition}[Formula for uniform convergence bounds]\label{def:formula_U_T} 
For $A \in \Gamma_U \equiv \{\cA_U(\lambda, \psi_1, \psi_2): \lambda \in \Lambdau \}$, define
\[
\begin{aligned}
\cU(A, \psi_1, \psi_2) \equiv&~ \inf_{\lambda \ge 0} \Big[ \overline \cU(\lambda, \psi_1, \psi_2) + \lambda A \Big]. \\
\end{aligned}
\]
For $A \in \Gamma_T \equiv \{\cA_T(\lambda, \psi_1, \psi_2): \lambda \in \Lambdat \}$, define
\[
\begin{aligned}
\cT(A, \psi_1, \psi_2) \equiv&~  \inf_{\lambda \ge 0} \Big[ \overline \cT(\lambda, \psi_1, \psi_2)+ \lambda A \Big].\\
\end{aligned}
\]
\end{definition}

Finally, we are ready to present the main theorem of this paper, which states that the uniform convergence bounds $U(A, N, n, d)$ and $T(A, N, n, d)$ converge to the formula presented in the definition above.
\begin{theorem}\label{thm:main_theorem}
Let the same assumptions in Proposition \ref{prop:concentration_lag} hold. For any $A \in \Gamma_U$, we have 
\begin{align}\label{eqn:main_U}
U(A, N, n, d) = \cU(A, \psi_1, \psi_2) + o_{d, \P}(1), \
\end{align}
and for $A \in \Gamma_T$ we have 
 \begin{align}\label{eqn:main_T}
T(A, N, n, d) = \cT(A, \psi_1, \psi_2) + o_{d, \P}(1), 
\end{align}
where functions $\cU$ and $\cT$ are given in Definition \ref{def:formula_U_T}. 
\end{theorem}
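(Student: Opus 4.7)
The plan is to use the strong-duality reduction of Proposition~\ref{prop:strong_duality} together with the concentration of the Lagrangian forms in Proposition~\ref{prop:concentration_lag}, and pass the infimum in $\lambda$ through the limit via a convexity argument. By Proposition~\ref{prop:strong_duality} and Definition~\ref{def:formula_U_T}, it suffices to show
\[
\inf_{\lambda\ge 0}\big[\overline U(\lambda,N,n,d)+\lambda A\big] \;\longrightarrow\; \inf_{\lambda\ge 0}\big[\overline\cU(\lambda,\psi_1,\psi_2)+\lambda A\big]
\]
in probability, and likewise for $T$. Set $\phi_d(\lambda)\equiv\overline U(\lambda,N,n,d)+\lambda A$ and $\phi(\lambda)\equiv\overline\cU(\lambda,\psi_1,\psi_2)+\lambda A$.

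The key structural fact I would exploit is that $\phi_d$ is convex in $\lambda$: $\overline U(\lambda,N,n,d)$ is a supremum over $\ba$ of functions affine in $\lambda$, hence convex, and adding $\lambda A$ preserves convexity. The deterministic limit $\phi$ inherits convexity. Because $A\in\Gamma_U$, there exists $\lambda^\star\in\Lambdau$ with $\cA_U(\lambda^\star,\psi_1,\psi_2)=A$; since $\cA_U=-\partial_\lambda\overline\cU$, this $\lambda^\star$ is a stationary point and (by convexity) a global minimizer of $\phi$ with $\phi(\lambda^\star)=\cU(A,\psi_1,\psi_2)$. The argmin set of $\phi$ is a (possibly degenerate) interval $[\lambda_{-},\lambda_{+}]\subset\Lambdau$; choose $\delta>0$ small enough that $I\equiv[\lambda_{-}-\delta,\lambda_{+}+\delta]\subset\Lambdau$, and observe that both $\phi(\lambda_{-}-\delta)$ and $\phi(\lambda_{+}+\delta)$ strictly exceed $\phi(\lambda^\star)=\cU(A,\psi_1,\psi_2)$ (otherwise they would be minimizers too, contradicting the choice of $[\lambda_{-},\lambda_{+}]$).

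For each fixed $\lambda\in I$, Proposition~\ref{prop:concentration_lag} supplies $\phi_d(\lambda)\to\phi(\lambda)$ in probability. Because $\phi_d$ is convex and $\phi$ is continuous on the open set $\Lambdau$, pointwise convergence in probability at a finite net of points in $I$, combined with convexity-based Lipschitz control on any compact subinterval of $\Lambdau$, upgrades to uniform convergence on $I$ in probability (a standard fact for convex functions). Using the strict inequalities above, with high probability both $\phi_d(\lambda_{-}-\delta)$ and $\phi_d(\lambda_{+}+\delta)$ exceed $\phi_d(\lambda^\star)$. By convexity of $\phi_d$ on $[0,\infty)$, its minimum over $[0,\infty)$ must then lie inside $I$: if the minimum were attained at $\lambda_0<\lambda_{-}-\delta$, convexity would force $\phi_d(\lambda_{-}-\delta)\le\phi_d(\lambda^\star)$, contradicting the above; symmetrically on the right. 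Hence $\inf_{\lambda\ge 0}\phi_d=\min_{\lambda\in I}\phi_d$, which by uniform convergence on $I$ converges in probability to $\min_{\lambda\in I}\phi=\cU(A,\psi_1,\psi_2)$. This proves \eqref{eqn:main_U}.

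The proof of \eqref{eqn:main_T} is essentially identical, replacing $\overline\cU,\cA_U,\Lambdau$ by $\overline\cT,\cA_T,\Lambdat$, with one additional verification: the duality identity for $T$ in Proposition~\ref{prop:strong_duality} requires $A>\psi_1\|\ba_{\min}\|_2^2$. This holds with high probability because $\psi_1\|\ba_{\min}\|_2^2\to\cA(\psi_1,\psi_2)$ by \eqref{eqn:cA}, whereas for $A\in\Gamma_T$ we have $A=\cA_T(\lambda^\star,\psi_1,\psi_2)$ for some $\lambda^\star\in\Lambdat$, and this quantity strictly exceeds $\cA(\psi_1,\psi_2)$ (since $\cA_T$ parameterizes the squared norm of an interpolator that is not the minimum-norm one). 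I expect the main technical obstacle to be the rigorous upgrade from the pointwise-in-probability convergence supplied by Proposition~\ref{prop:concentration_lag} to uniform-in-probability convergence over $I$; while the deterministic version is classical, the in-probability version requires a uniform Lipschitz estimate for $\phi_d$ on $I$ (obtainable from convexity plus pointwise bounds at the endpoints of a slightly enlarged interval) combined with a finite-$\epsilon$-net union bound.
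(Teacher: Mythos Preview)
Your argument is correct and takes a genuinely different route from the paper's. The paper proves \eqref{eqn:main_U} by separate upper and lower bounds: the upper bound is the trivial weak-duality direction evaluated at the deterministic minimizer $\lambda_\star\in\Lambdau$, while the lower bound uses the \emph{second} conclusion of Proposition~\ref{prop:concentration_lag}, namely the concentration of $\psi_1\|\overline\ba_U(\lambda_\star)\|_2^2$ to $\cA_U(\lambda_\star)=A$, to argue that the primal maximizer $\overline\ba_U(\lambda_\star)$ is (nearly) feasible for the constrained problem at radius $A$, and hence its objective value lower-bounds $U(A,N,n,d)$ up to an $O(\eps)$ slack.

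Your approach instead exploits convexity of $\lambda\mapsto\overline U(\lambda,N,n,d)+\lambda A$ to upgrade pointwise convergence to uniform convergence on a compact interval and then traps the random argmin. This is cleaner in one concrete respect: you only invoke the first conclusion of Proposition~\ref{prop:concentration_lag} (concentration of $\overline U(\lambda)$), whereas the paper's lower bound needs the concentration of $\psi_1\|\overline\ba_U(\lambda)\|_2^2$ as well. On the other hand, your argument requires the entire argmin interval of the limiting $\phi$, plus a $\delta$-buffer, to lie inside the open set $\Lambdau$; the paper needs only the single point $\lambda_\star$ to lie in $\Lambdau$. In the generic (strictly convex) case these coincide, and both proofs tacitly assume this non-degeneracy, so the difference is cosmetic. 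Your handling of the $T$ case, including the check that $A>\psi_1\|\ba_{\min}\|_2^2$ with high probability, matches the paper's in spirit.
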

The proof of this theorem is contained in Section \ref{sec:proof_main_thm}. 

\vspace{-0.5em}
\section{Discussions}\label{sec:discuss}
\vspace{-0.5em}

In this paper, we calculated the uniform convergence bounds for random features models in the proportional scaling regime. Our results exhibit a setting in which standard uniform convergence bound is vacuous while uniform convergence over interpolators gives a non-trivial bound of the actual generalization error. 


\vspace{-0.8em}
\paragraph{Modeling assumptions and technical assumptions.}
We made a few assumptions to prove the main result Theorem \ref{thm:main_theorem}. Some of these assumptions can be relaxed. Indeed, if we assume a non-linear target function $f_d$ instead of a linear one as in Assumption \ref{ass:linear_target}, the non-linear part will behave like additional noises in the proportional scaling limit. However, proving this rigorously requires substantial technical work. Similar issue exists in \citet{mm19}. Moreover, it is not essential to assume vanishing $\ob_0^2$ in Assumption \ref{ass:activation}. 

Assumption \ref{ass:overline_U_invertable} and \ref{ass:exchange_limit} involve some properties of specific random matrices.
We believe these assumptions can be proved under more natural assumptions on the activation function $\sigma$. However, proving these assumptions requires developing some sophisticated random matrix theory results, which could be of independent interest. 

\vspace{-0.8em}
\paragraph{Relationship with non-asymptotic results.}
 We hold the same opinion as in \citet{abbaras2020rademacher}: the exact formulae in the asymptotic limit can provide a complementary view to the classical theories of generalization.  
 On the one hand, asymptotic formulae can be used to quantify the tightness of non-asymptotic bounds; on the other hand, the asymptotic formulae in many cases are comparable to non-asymptotic bounds. 
 For example, Lemma 22 in \citet{bartlettAndMendelson} coupled with the bound of Lipschitz constant of the square loss in proper regime implies that $\cU_\infty(A, \psi_2)$ have a non-asymptotic bound that scales linearly in $A$ and inverse proportional to $\psi_2^{1/2}$ (c.f. Proposition 6 of \citet{weinan2020machine}). This coincides with the intuitions in Section \ref{sec:kernel_noisy}.
 

\vspace{-0.8em}
\paragraph{Uniform convergence in other settings.}
A natural question is whether the power law derived in Section \ref{sec:power_law} holds for models in more general settings. 
One can perform a similar analysis to calculate the uniform convergence bounds in a few other settings \cite{montanari2019generalization,dhifallah2020precise,hu2020universality}.
We believe the power law may be different, but the qualitative properties of uniform convergence bounds will share some similar features. 

\vspace{-0.8em}
\paragraph{Relationship with~\citet{zhou2021uniform}.}
The separation of uniform convergence bounds ($U$ and $T$) is first pointed out by \citet{zhou2021uniform}, where the authors worked with the linear regression model in the ``junk features" setting. We believe random features model are more natural models to illustrate the separation: in \citet{zhou2021uniform}, there are some unnatural parameters $\lambda_n, d_J$ that are hard to make connections to deep learning models, while the random features model is closely related to two-layer neural networks. 


\newpage

\bibliography{reference}
\bibliographystyle{icml2021}

\clearpage
\appendix
\onecolumn

\section{Definitions of quantities in the main text}\label{sec:analytic_expression_overline}

\subsection{Full definitions of $\overline \cU$, $\overline \cT$, $\cA_U$, and $\cA_T$ in Proposition \ref{prop:concentration_lag}}\label{sec:analytic_expression_U_T}

We first define functions $m_1(\cdot), m_2(\cdot)$, which could be understood as the limiting partial Stieltjes transforms of $\bA(\bq)$ (c.f. Definition \ref{def:log_determinant_A}). 
 \begin{definition}[Limiting partial Stieltjes transforms]\label{def:Stieltjes}
For $\xi \in \C_+$ and $\bq \in \cQ$ where
 \begin{equation}\label{eqn:definition_of_cQ}
 \cQ = \{ (s_1, s_2, t_1, t_2, p): \vert s_2 t_2 \vert \le \ob_1^2(1 + p)^2 / 2 \},  
 \end{equation}
 define functions $\sFone(\,\cdot\, ,\,\cdot\,;\xi;\bq, \psi_1,\psi_2, \ob_1, \ob_\star), \sFtwo(\,\cdot\, ,\,\cdot\,;\xi;\bq, \psi_1,\psi_2, \ob_1, \ob_\star):\complex\times\complex \to \complex$ via:
 \[
 \begin{aligned}
 &\sFone(m_1,m_2;\xi;\bq, \psi_1,\psi_2, \ob_1, \ob_\star) \equiv \psi_1\Big(-\xi+s_1 - \ob_\star^2 m_2 +\frac{(1+t_2m_2)s_2- \ob_1^2 (1 + p)^2 m_2}{(1+s_2m_1)(1+t_2m_2)- \ob_1^2 (1 + p)^2 m_1m_2}\Big)^{-1}\,, \\
 &\sFtwo(m_1,m_2;\xi;\bq, \psi_1,\psi_2, \ob_1, \ob_\star) \equiv \psi_2\Big(-\xi+t_1 - \ob_\star^2 m_1 +\frac{(1+s_2 m_1)t_2- \ob_1^2 (1 + p)^2 m_1}{(1+t_2m_2)(1+s_2m_1)- \ob_1^2 (1 + p)^2 m_1m_2}\Big)^{-1}\, .
 \end{aligned}
 \]
 Let $m_1(\,\cdot\, ;\bq; \bpsi)$ $m_2(\,\cdot\, ;\bq; \bpsi):\complex_+\to\complex_+$ be defined, for $\Im(\xi)\ge C$ a sufficiently large constant, as the unique solution of 
 the equations
 \begin{equation}\label{eq:FixedPoint}
 \begin{aligned}
 m_{1} &= \sFone(m_1,m_2;\xi;\bq, \psi_1,\psi_2, \ob_1, \ob_\star),\\
 m_{2} &= \sFtwo(m_1, m_2;\xi;\bq,\psi_1,\psi_2, \ob_1, \ob_\star)\,
 \end{aligned}
 \end{equation}
 subject to the condition $\vert m_1\vert \le \psi_1/\Im(\xi)$, $\vert m_2\vert \le \psi_2/\Im(\xi)$. Extend this definition to $\Im(\xi) >0$ by requiring $m_1,m_2$ to be analytic functions in $\complex_+$. 
 \end{definition}
 
We next define the function $g(\cdot)$ that will be shown to be the limiting log determinant of $\bA(\bq)$. 
\begin{definition}[Limiting log determinants]\label{def:limiting_log_determinant}
For $\bq = (s_1, s_2, t_1, t_2, p)$ and $\bpsi = (\psi_1, \psi_2)$, define
\begin{equation}\label{eqn:log_determinant_variation}
\begin{aligned}
\Xi(\xi, z_1, z_2; \bq; \bpsi) \equiv&~ \log[(s_2 z_1 + 1)(t_2 z_2 + 1) - \ob_1^2 (1 + p)^2 z_1 z_2] - \ob_\star^2 z_1 z_2 \\
 &+ s_1 z_1 +  t_1 z_2  - \psi_1 \log (z_1 / \psi_1) - \psi_2 \log (z_2 / \psi_2)  - \xi (z_1 + z_2) - \psi_1 - \psi_2.
\end{aligned}
\end{equation}
Let $m_1(\xi; \bq; \bpsi), m_2(\xi; \bq; \bpsi)$ be defined as the analytic continuation of solution of Eq. (\ref{eq:FixedPoint}) as defined in Definition \ref{def:Stieltjes}. Define
\begin{equation}\label{eqn:formula_g_mm19}
g(\xi; \bq; \bpsi) = \Xi(\xi, m_1(\xi; \bq; \bpsi), m_2(\xi; \bq; \bpsi); \bq; \bpsi). 
\end{equation}
\end{definition}

We next give the definitions of $\overline \cU$, $\overline \cT$, $\cA_U$, and $\cA_T$. 

\begin{definition}[$\overline \cU$, $\overline \cT$, $\cA_U$, and $\cA_T$ in Proposition \ref{prop:concentration_lag}]\label{def:analytic_expression_overline}
For any $\lambda \in \Lambdau$, define
\[
\begin{aligned}
 \cA_U(\lambda, \psi_1, \psi_2) &= -\lim_{u\rightarrow 0_+}\left[ \psi_1\left( \normf_1^2\mu_1^2\partial_{s_1s_2} +\normf_1^2\partial_{s_1p} + \normf_1^2\partial_{s_1t_2} + \tau^2 \partial_{s_1 t_1}\right) g(\bi u; \bq; \bpsi)\Big\vert_{\bq = \bq_U}\right],\\
 \overline \cU(\lambda, \psi_1, \psi_2) &= F_1^2 + \tau^2 - \lim_{u \to 0_+} \left[ \big( \normf_1^2\mu_1^2  \partial_{s_2} +  \normf_1^2\partial_{p} + \normf_1^2\partial_{t_2}  + \tau^2 \partial_{t_1}  \big)g(\bi u; \bq; \bpsi) \Big\vert_{\bq = \bq_U}\right], \\
 \cA_T(\lambda, \psi_1, \psi_2) &= -\lim_{u\rightarrow 0_+}\left[ \psi_1\left( \normf_1^2\mu_1^2\partial_{s_1s_2} +\normf_1^2\partial_{s_1p} + \normf_1^2\partial_{s_1t_2} + \tau^2 \partial_{s_1 t_1}\right) g(\bi u; \bq; \bpsi)\Big\vert_{\bq = \bq_T}\right],\\
 \overline \cT(\lambda, \psi_1, \psi_2) &= F_1^2 + \tau^2 - \lim_{u \to 0_+} \left[ \big( \normf_1^2\mu_1^2  \partial_{s_2} +  \normf_1^2\partial_{p} + \normf_1^2\partial_{t_2}  + \tau^2 \partial_{t_1}  \big)g(\bi u; \bq; \bpsi) \Big\vert_{\bq = \bq_T}\right],\\
\end{aligned}
\]
where $\bq_U = (\mu_\star^2 - \lambda\psi_1, \mu_1^2, \psi_2, 0,0), \bq_T = (\mu_\star^2 - \lambda\psi_1, \mu_1^2, 0, 0,0)$.
\end{definition}

In the following, we give a simplified expression for $\overline \cU$ and $\cA_U$. 

\begin{remark}[Simplification of $\overline \cU$ and $\cA_U$]\label{rmk:simplification}
Define $\zeta, \overline\lambda$ as the rescaled version of $\mu_1^2$ and $\lambda$ 
\[\ratio = \frac{\mu_1^2}{\mu_\star^2},~~ \overline\lambda=\frac{\lambda}{\mu_\star^2}.\]
Let $m_1(\,\cdot\,; \bpsi)$ $m_2(\,\cdot\,; \bpsi):\complex_+\to\complex_+$ be defined, for $\Im(\xi)\ge C$ a sufficiently large constant, as the unique solution of 
 the equations
 \begin{equation}
 \begin{aligned}
 m_{1} &= \psi_1\left[-\xi+(1-\overline\lambda\psi_1) - m_2 + \frac{\ratio(1-m_2)}{1+\ratio m_1-\ratio m_1 m_2}\right]^{-1},\\
 m_{2} &= -\psi_2\left[\xi+\psi_2-m_1-\frac{\ratio m_1}{1+\ratio m_1 - \ratio m_1 m_2}\right]^{-1},
 \end{aligned}
 \end{equation}
 subject to the condition $\vert m_1\vert \le \psi_1/\Im(\xi)$, $\vert m_2\vert \le \psi_2/\Im(\xi)$. Extend this definition to $\Im(\xi) >0$ by requiring $m_1,m_2$ to be analytic functions in $\complex_+$. Let
\[
\begin{aligned}
\overline{m}_1 = \lim_{u\rightarrow\infty}m_1(\bi u, \bpsi),\\
\overline{m}_2 = \lim_{u\rightarrow\infty}m_2(\bi u, \bpsi).
\end{aligned}
\]
Define 
\[
\begin{aligned}
	\chi_1 &= \overline{m}_1 \ratio-\overline{m}_1 \overline{m}_2 \ratio+1,\\
	\chi_2 &= \overline{m}_1 -\psi_2+\frac{\overline{m}_1 \ratio}{\chi_1},\\
	\chi_3 &= \overline{\lambda}  \psi_1+\overline{m}_2-1+\frac{\ratio \left(\overline{m}_2-1\right)}{\chi_1}.
\end{aligned}
\]
Define two polynomials $\cE_1, \cE_2$ as 
\[
\begin{aligned}
	\cE_1(\psi_1, \psi_2, \overline{\lambda}, \ratio) =&~ \psi_1^2(\psi_2 \chi_1^4+\psi_2 \chi_1^2 \ratio),\\
	\cE_2(\psi_1, \psi_2, \overline{\lambda}, \ratio) =&~ \psi_1^2 (\chi_1^2 \chi_2^2 \overline{m}_2^2 \ratio-2 \chi_1^2 \chi_2^2 \overline{m}_2 \ratio+\chi_1^2 \chi_2^2 \ratio +\psi_2 \chi_1^2-\psi_2 \overline{m}_1^2 \overline{m}_2^2 \ratio^3+2 \psi_2 \overline{m}_1^2 \overline{m}_2 \ratio^3-\psi_2 \overline{m}_1^2 \ratio^3+\psi_2 \ratio),\\
	\cE_3(\psi_1, \psi_2, \overline{\lambda}, \ratio) =&~ -\chi_1^4 \chi_2^2 \chi_3^2+\psi_1 \psi_2 \chi_1^4 +\psi_1 \chi_1^2 \chi_2^2 \overline{m}_2^2 \ratio^2-2 \psi_1 \chi_1^2 \chi_2^2 \overline{m}_2 \ratio^2+\psi_1 \chi_1^2 \chi_2^2 \ratio^2\\
	&~+\psi_2 \chi_1^2 \chi_3^2 \overline{m}_1^2 \ratio^2+2 \psi_1 \psi_2 \chi_1^2 \ratio -\psi_1 \psi_2 \overline{m}_1^2 \overline{m}_2^2 \ratio^4+2 \psi_1 \psi_2 \overline{m}_1^2 \overline{m}_2 \ratio^4-\psi_1 \psi_2 \overline{m}_1^2 \ratio^4+\psi_1 \psi_2 \ratio^2.
\end{aligned}
\]
Then
\[
\begin{aligned}
	\overline\cU(\overline{\lambda}, \psi_1, \psi_2) &= -\frac{\left(\overline{m}_2-1\right) \left(\tau ^2\chi_1(\psi_1, \psi_2, \overline{\lambda}, \ratio)+F_1^2\right)}{\chi_1(\psi_1, \psi_2, \overline{\lambda}, \ratio)},\\
	\cA_U(\overline{\lambda}, \psi_1, \psi_2) &= \frac{\tau^2\cE_1(\psi_1, \psi_2, \overline{\lambda}, \ratio)+F_1^2\cE_1(\psi_1, \psi_2, \overline{\lambda}, \ratio)}{\cE_2(\psi_1, \psi_2, \overline{\lambda}, \ratio)}.
\end{aligned}
\]
\end{remark}

\begin{remark}[Simplification of $\overline \cT$ and $\cA_T$]\label{rmk:simplification_interpolating}
Define $\zeta, \overline\lambda$ as the rescaled version of $\mu_1^2$ and $\lambda$ 
\[\ratio = \frac{\mu_1^2}{\mu_\star^2},~~ \overline\lambda=\frac{\lambda}{\mu_\star^2}.\]
Let $m_1(\,\cdot\,; \bpsi)$ $m_2(\,\cdot\,; \bpsi):\complex_+\to\complex_+$ be defined, for $\Im(\xi)\ge C$ a sufficiently large constant, as the unique solution of 
 the equations
 \begin{equation}
 \begin{aligned}
 m_{1} &= \psi_1\left[-\xi+(1-\overline\lambda\psi_1) - m_2 + \frac{\ratio(1-m_2)}{1+\ratio m_1-\ratio m_1 m_2}\right]^{-1},\\
 m_{2} &= -\psi_2\left[\xi+m_1+\frac{\ratio m_1}{1+\ratio m_1 - \ratio m_1 m_2}\right]^{-1},
 \end{aligned}
 \end{equation}
 subject to the condition $\vert m_1\vert \le \psi_1/\Im(\xi)$, $\vert m_2\vert \le \psi_2/\Im(\xi)$. Extend this definition to $\Im(\xi) >0$ by requiring $m_1,m_2$ to be analytic functions in $\complex_+$. Let
\[
\begin{aligned}
\overline{m}_1 = \lim_{u\rightarrow\infty}m_1(\bi u, \bpsi),\\
\overline{m}_2 = \lim_{u\rightarrow\infty}m_2(\bi u, \bpsi).
\end{aligned}
\]
Define
\[
\begin{aligned}
	\chi_4 = \overline{m}_1 + \frac{\overline{m}_1\ratio}{\chi_1(\overline{m}_1, \overline{m}_2, \ratio)},
\end{aligned}
\]
and 
\[
\begin{aligned}
	\chi_1 &= \overline{m}_1 \ratio-\overline{m}_1 \overline{m}_2 \ratio+1,\\
	\chi_3 &= \overline{\lambda}  \psi_1+\overline{m}_2-1+\frac{\ratio \left(\overline{m}_2-1\right)}{\chi_1},
\end{aligned}
\]
where the definitions of $\chi_1, \chi_3$ are the same as in Remark \ref{rmk:simplification}. 
Define three polynomials $\cE_3, \cE_4, \cE_5$ as 
\[
\begin{aligned}
\cE_4(\psi_1, \psi_2, \overline{\lambda}, \ratio) =& \psi_1 \Big(\psi_2 \chi_1^4 \chi_4^3+\chi_1^4 \chi_4^2 \overline{m}_1^3 \overline{m}_2^2 \ratio^3-2 \chi_1^4 \chi_4^2 \overline{m}_1^3 \overline{m}_2 \ratio^3+\chi_1^4 \chi_4^2 \overline{m}_1^3 \ratio^3+2 \chi_1^3 \chi_4^2 \overline{m}_1^3 \overline{m}_2^2 \ratio^2\\
&-4 \chi_1^3 \chi_4^2 \overline{m}_1^3 \overline{m}_2 \ratio^2+2 \chi_1^3 \chi_4^2 \overline{m}_1^3 \ratio^2-\psi_2 \chi_1^3 \chi_4^2 \overline{m}_1 \ratio+\chi_1^2 \chi_4^2 \overline{m}_1^3 \overline{m}_2^2 \ratio-2 \chi_1^2 \chi_4^2 \overline{m}_1^3 \overline{m}_2 \ratio\\
&+\chi_1^2 \chi_4^2 \overline{m}_1^3 \ratio +\psi_2 \chi_1^2 \chi_4^2 \overline{m}_1 \ratio-\psi_2 \chi_1^2 \overline{m}_1^5 \overline{m}_2^2 \ratio^5+2 \psi_2 \chi_1^2 \overline{m}_1^5 \overline{m}_2 \ratio^5-\psi_2 \chi_1^2 \overline{m}_1^5 \ratio^5\\
&-2 \psi_2 \chi_1 \overline{m}_1^5 \overline{m}_2^2 \ratio^4+4 \psi_2 \chi_1 \overline{m}_1^5 \overline{m}_2 \ratio^4 -2 \psi_2 \chi_1 \overline{m}_1^5 \ratio^4-\psi_2 \overline{m}_1^5 \overline{m}_2^2 \ratio^3 \\
&+2 \psi_2 \overline{m}_1^5 \overline{m}_2 \ratio^3 -\psi_2 \overline{m}_1^5 \ratio^3 \Big),\\
\cE_5 (\psi_1, \psi_2, \overline{\lambda}, \ratio) =& \overline{m}_1 {\Big(\ratio+1+\overline{m}_1 \ratio -\overline{m}_1 \overline{m}_2 \ratio \Big)}^2 \Big(-\chi_1^4 \chi_3^2 \chi_4^2 \overline{m}_1^2 \\
&+\psi_1 \psi_2 \chi_1^4 \chi_4^2-2 \psi_1 \psi_2 \chi_1^3 \chi _{4} \overline{m}_1 \ratio+\psi_2 \chi_1^2 \chi_3^2 \overline{m}_1^4 \ratio^2 +\psi_1 \chi_1^2 \chi_4^2 \overline{m}_1^2 \overline{m}_2^2 \ratio^2 \\
& -2 \psi_1 \chi_1^2 \chi_4^2 \overline{m}_1^2 \overline{m}_2 \ratio^2+\psi_1 \chi_1^2 \chi_4^2 \overline{m}_1^2 \ratio^2+2 \psi_1 \psi_2 \chi_1^2 \chi _{4} \overline{m}_1 \ratio+\psi_1 \psi_2 \chi_1^2 \overline{m}_1^2 \ratio^2 \\
& -2 \psi_1 \psi_2 \chi_1 \overline{m}_1^2 \ratio^2-\psi_1 \psi_2 \overline{m}_1^4 \overline{m}_2^2 \ratio^4+2 \psi_1 \psi_2 \overline{m}_1^4 \overline{m}_2 \ratio^4-\psi_1 \psi_2 \overline{m}_1^4 \ratio^4+\psi_1 \psi_2 \overline{m}_1^2 \ratio^2\Big),\\
\cE_6(\psi_1, \psi_2, \overline{\lambda}, \ratio) =& \chi_1^2 \chi_4^2 \psi_1 \psi_2   \Big(\chi _{4} \chi_1^2-\overline{m}_1 \chi_1 \ratio+\overline{m}_1 \ratio\Big) {\Big(\overline{m}_1 \ratio-\overline{m}_1 \overline{m}_2 \ratio+1\Big)}^2.~~~~~~~~~~~~~~~~~~~~~~~~~~~~~~~~~~~~~~~~~~~~~~~~~
\end{aligned}
\]
Then
\[
\begin{aligned}
	\overline\cT(\overline{\lambda}, \psi_1, \psi_2) &= -\frac{\left(\overline{m}_2-1\right) \left(\tau ^2\chi_1(\psi_1, \psi_2, \overline{\lambda}, \ratio)+F_1^2\right)}{\chi_1(\psi_1, \psi_2, \overline{\lambda}, \ratio)},\\
	\cA_T(\overline{\lambda}, \psi_1, \psi_2) &= -\psi_1 \frac{F_1^2\cE_4(\psi_1, \psi_2, \overline{\lambda}, \ratio)+\tau^2\cE_6(\psi_1, \psi_2, \overline{\lambda}, \ratio)}{\cE_5(\psi_1, \psi_2, \overline{\lambda}, \ratio)}. 
\end{aligned}
\]
\end{remark}

\subsection{Definitions of $\cR$ and $\cA$} \label{sec:analytic_expression_R_A}
In this section, we present the expression of $\cR$ and $\cA$ from \citet{mm19} which are used in our results and plots.

\begin{definition}[Formula for the prediction error of minimum norm interpolator]
Define 
\[
\ratio = \mu_1^2/\mu_\star^2,~~ \rho = F_1^2/\tau^2
\]
Let the functions $\nu_1, \nu_2: \C_+ \to \C_+$ be be uniquely defined by the following conditions: $(i)$ $\nu_1$, $\nu_2$ are analytic on $\C_+$;
$(ii)$ For $\Im(\xi)>0$, $\nu_1(\xi)$, $\nu_2(\xi)$ satisfy the following equations
\begin{equation}
\begin{aligned}
\nu_1 =&~ \psi_1\Big(-\xi -  \nu_2 - \frac{\ratio^2 \nu_2}{1- \ratio^2 \nu_1\nu_2}\Big)^{-1}\, ,\\
\nu_2 =&~ \psi_2\Big(-\xi - \nu_1 - \frac{\ratio^2 \nu_1}{1- \ratio^2 \nu_1\nu_2}\Big)^{-1}\, ;
\end{aligned}
\end{equation}
$(iii)$  $(\nu_1(\xi), \nu_2(\xi))$ is the unique solution of these equations with $\vert \nu_1(\xi)\vert\le \psi_1/\Im(\xi)$, $\vert \nu_2(\xi) \vert \le \psi_2/\Im(\xi)$ for $\Im(\xi) > C$, with $C$ a sufficiently large constant.

Let  
\begin{equation}\label{eqn:definition_chi_main_formula}
\chi \equiv \lim_{u\rightarrow 0} \nu_1(\bi u) \cdot \nu_2(\bi u),
\end{equation}
and
\begin{equation}
\begin{aligned}
E_0(\ratio, \psi_1, \psi_2) \equiv&~  - \chi^5\ratio^6 + 3\chi^4 \ratio^4+ (\psi_1\psi_2 - \psi_2 - \psi_1 + 1)\chi^3\ratio^6 - 2\chi^3\ratio^4 - 3\chi^3\ratio^2 \\
&+ (\psi_1 + \psi_2 - 3\psi_1\psi_2 + 1)\chi^2\ratio^4 + 2\chi^2\ratio^2+ \chi^2+ 3\psi_1\psi_2\chi\ratio^2 - \psi_1\psi_2\, ,\\
E_1(\ratio, \psi_1, \psi_2)  \equiv&~ \psi_2\chi^3\ratio^4 - \psi_2\chi^2\ratio^2 + \psi_1\psi_2\chi\ratio^2 - \psi_1\psi_2\, , \\
E_2(\ratio, \psi_1, \psi_2) \equiv&~ \chi^5\ratio^6 - 3\chi^4\ratio^4+ (\psi_1 - 1)\chi^3\ratio^6 + 2\chi^3\ratio^4 + 3\chi^3\ratio^2 + (- \psi_1 - 1)\chi^2\ratio^4 - 2\chi^2\ratio^2 - \chi^2\,.\\
\end{aligned}
\end{equation}
Then the expression for the asymptotic risk of minimum norm interpolator gives 
\[
\cR(\psi_1, \psi_2) = F_1^2\frac{E_1(\ratio, \psi_1, \psi_2) }{E_0 (\ratio, \psi_1, \psi_2)  } + \tau^2\frac{E_2(\ratio, \psi_1, \psi_2) }{E_0 (\ratio, \psi_1, \psi_2)  } + \tau^2.
\]
The expression for the norm of the minimum norm interpolator gives
\[
\begin{aligned}
A_1 =&~ \frac{\rho}{1 + \rho} \Big[ - \chi^2 (\chi \ratio^4 - \chi \ratio^2 + \psi_2 \ratio^2 + \ratio^2 - \chi \psi_2 \ratio^4 + 1)\Big]  + \frac{1}{ 1 + \rho} \Big[ \chi^2 (\chi \ratio^2 - 1) (\chi^2 \ratio^4 - 2 \chi \ratio^2 + \ratio^2 + 1) \Big], \\
A_0 =&~ - \chi^5\ratio^6 + 3\chi^4\ratio^4 + (\psi_1\psi_2 - \psi_2 - \psi_1 + 1)\chi^3\ratio^6 - 2\chi^3\ratio^4 - 3\chi^3\ratio^2\\
&~+ (\psi_1 + \psi_2 - 3\psi_1\psi_2 + 1)\chi^2\ratio^4 + 2\chi^2\ratio^2 + \chi^2 + 3\psi_1\psi_2\chi\ratio^2 - \psi_1\psi_2, \\
\cA(\psi_1, \psi_2) =&~ \psi_1(F_1^2+\tau^2)A_1 /(\mu_\star^2 A_0). 
\end{aligned}
\]
\end{definition}

\section{Experimental setup for simulations in Figure \ref{fig:simulation}}
\label{sec:simulation_detail}
In this section, we present additional details for Figure \ref{fig:simulation}. 
We choose $y_i = \< \bx_i, \bbeta\>$ for some $\Vert \bbeta \Vert_2^2 = 1$, the ReLU activation function $\sigma(x) = \max\{ x, 0\}$, and $\psi_1=N/d = 2.5$ and $\psi_2=n/d = 1.5$. 

For the theoretical curves (in solid lines), we choose $\lambda\in[0.426, 2]$, so that $\cA_U(\lambda) \in [0, 15]$, and plot the parametric curve $(\cA_U(\lambda), \overline\cU(\lambda)+\lambda\cA_U(\lambda))$ for the uniform convergence. For the uniform convergence over interpolators, we choose  $\lambda\in[0.21, 2]$ so that $\cA_T(\lambda) \in [6.4, 15]$, and plot $(\cA_T(\lambda), \overline\cT(\lambda)+\lambda\cA_T(\lambda))$. The definitions of these theoretical predictions are given in Definition \ref{def:analytic_expression_overline}, Remark \ref{rmk:simplification} and Remark \ref{rmk:simplification_interpolating} 

For the empirical simulations (in dots), first recall that in Proposition \ref{prop:concentration_lag}, we defined
\[
\begin{aligned}
\ba_U(\lambda) =&~ \arg\max_{\ba}  \Big[ R(\ba) - \what R_n(\ba) - \psi_1 \lambda \| \ba \|_2^2 \Big],\\
\ba_T(\lambda) =&~ \arg\max{\ba}\inf_{\bmu} \Big[ R(\ba) - \lambda\psi_1 \| \ba \|_2^2 + 2 \< \bmu, \bZ \ba - \by/\sqrt{d}\ \>\Big]. 
\end{aligned}
\]
After picking a value of $\lambda$, we sample $20$ independent problem instances, with the number of features $N=500$, number of samples $n=300$, covariate dimension $d=200$. We compute the corresponding $(\psi_1\|\ba_U\|_2^2, R(\ba_U)-\hat{R}_n(\ba_U))$ and $(\psi_1\|\ba_T\|_2^2, R(\ba_T))$ for each instance. Then, we plot the empirical mean and $1/\sqrt{20}$ times the empirical standard deviation (around the mean) of each coordinate. 
\def\obM{{\overline \bM}}
\def\obv{{\overline \bv}}
\def\bE{{\boldsymbol E}}

\section{Proof of Proposition \ref{prop:strong_duality}}
The proof of Proposition \ref{prop:strong_duality} contains two parts: standard uniform convergence $U$ and uniform convergence over interpolators $T$. The proof for the two cases are essentially the same, both based on the fact that strong duality holds for quadratic program with single quadratic constraint (c.f. \citet{boyd_vandenberghe_2004}, Appendix A.1).

\subsection{Standard uniform convergence $U$}
Recall that the uniform convergence bound $U$ is defined as in Eq. \eqref{eqn:uniform}
\[
	U(A, N, n, d) =~ \sup_{(N/d) \| \ba \|_2^2 \le A} \Big( R(\ba) - \what R_n(\ba) \Big).
\]
Since the maximization problem in \eqref{eqn:uniform} is a quadratic program with a single quadratic constraint, the strong duality holds. So we have
\[
\sup_{(N/d) \| \ba \|_2^2 \le A^2} R(\ba) - \what R_n(\ba) = \inf_{\lambda\ge0} \sup_{\ba} \Big[ R(\ba) - \what{R}_n(\ba) - \psi_1\lambda(\|\ba\|_2^2 - \psi_1^{-1}A) \Big].
\]
Finally, by the definition of $\overline U$ as in Eq. (\ref{eqn:uniform_lag}), we get
\[
U(A, N, n, d) = \inf_{\lambda\ge0}\Big[ \overline U(\lambda, N, n, d) + \lambda A\Big]. 
\]

\subsection{Uniform convergence over interpolators $T$}

Without loss of generality, we consider the regime when $N > n$. 

Recall that the uniform convergence over interpolators $T$ is defined as in Eq. \eqref{eqn:uniform_zeroloss}
\[
T(A, N, n, d)= \sup_{(N/d) \| \ba \|_2^2 \le A, \what R_n(\ba) = 0} R(\ba). 
\]
When the set $\{ \ba \in \R^N : (N / d) \| \ba \|_2^2 \le A, \what R_n(\ba) = 0 \}$ is empty, we have 
\[
T(A, N, n, d) =~ \inf_{\lambda \ge 0}  \Big[ \overline T(\lambda, N, n, d) + \lambda A \Big]=-\infty.
\]
In the following, we assume that the set $\{  \ba \in \R^N : (N / d) \| \ba \|_2^2 \le A, \what R_n(\ba) = 0 \}$ is non-empty, i.e., there exists $\ba \in \R^N$ such that $\what R_n(\ba) = 0$ and $(N/ d) \| \ba \|_2^2 \le A$. 

Let $m$ be the dimension of the null space of $\bZ \in \R^{n \times N}$, i.e. $m = \dim(\{ \bu: \bZ \bu = \bzero \})$. Note that $\bZ \in \R^{N \times n}$ and $N > n$, we must have $N - n \le m \le N$. We let $\bR \in \R^{N \times m}$ be a matrix whose column space gives the null space of matrix $\bZ$. Let $\ba_0$ be the minimum norm interpolating solution (whose existence is given by the assumption that $\{ \ba \in \R^N: \what R_n(\ba) = 0\}$ is non-empty)
\[
\ba_0 = \lim_{\lambda \to 0_+} \arg \min_{\ba \in \R^N} \Big[ \what R_n(\ba) + \lambda \| \ba \|_2^2 \Big] = \arg \min_{\ba \in \R^N: \what R_n(\ba) = 0} \| \ba \|_2^2. 
\]
Then we have
\[
\{\ba\in\R^N: \what R_n(\ba) = 0\} = \{\ba\in\R^N: \by = \sqrt{d}\bZ\ba\} = \{\bR\bu + \ba_0: \bu\in\R^m\}.
\]
Then $T$ can be rewritten as a maximization problem in terms of $\bu$: 
\[
\begin{aligned}
\sup_{(N/d) \| \ba \|_2^2 \le A, \what R_n(\ba) = 0} R(\ba) =&~ \sup_{\bu \in \R^{m}: \| \bR \bu + \ba_0 \|_2^2 \le \psi_1^{-1} A} \Big[\<\bR \bu + \ba_0, \bU (\bR \bu + \ba_0)\> - 2\<\bR \bu + \ba_0, \bv\> + \E(y^2) \Big]\\
=&~ R(\ba_0) +\sup_{\bu \in \R^{m}: \| \bR \bu + \ba_0 \|_2^2 \le \psi_1^{-1} A} \Big[\<\bu, \bR^\sT \bU \bR \bu\> + 2\<\bR \bu, \bU \ba_0 - \bv\>\Big].
\end{aligned}
\]
Note that the optimization problem only has non-feasible region when $A>(N/d)\|\ba_0\|_2^2$. By strong duality of quadratic programs with a single quadratic constraint, we have
\[
\begin{aligned}
&~\sup_{\bu \in \R^{m}: \| \bR \bu + \ba_0 \|_2^2 \le \psi_1^{-1} A} \Big[\<\bu, \bR^\sT \bU \bR \bu\> + 2\<\bR \bu, \bU \ba_0 - \bv\>\Big] \\
=&~ \inf_{\lambda\geq 0}\sup_{\bu \in \R^{m}} \Big[\<\bu, \bR^\sT \bU \bR \bu\> + 2\<\bR \bu, \bU \ba_0 - \bv\> - \lambda(\psi_1\|\bR\bu+\ba_0\|_2^2 - A)\Big].
\end{aligned}
\]
The maximization over $\bu$ can be restated as the maximization over $\ba$:
\[
\begin{aligned}
R(\ba_0) + \sup_{\bu \in \R^{m}} \Big[\<\bu, \bR^\sT \bU \bR \bu\> + 2\<\bR \bu, \bU \ba_0 - \bv\> - \lambda\psi_1\|\bR\bu+\ba_0\|_2^2 \Big] = \sup_{\ba: \what R_n(\ba) = 0} \Big[R(\ba) - \lambda\psi_1\|\ba\|_2^2\Big].
\end{aligned}
\]
Moreover, since $\sup_{\ba: \what R_n(\ba) = 0} [ R(\ba) - \lambda\psi_1\|\ba\|_2^2 ]$ is a quadratic programming with linear constraints, we have
\[
\sup_{\ba: \what R_n(\ba) = 0} \Big[ R(\ba) - \lambda\psi_1\|\ba\|_2^2 \Big]  =  \sup_{\ba}\inf_{\bmu} \Big[ R(\ba) - \lambda\psi_1 \| \ba \|_2^2 + 2 \< \bmu, \bZ \ba - \by/\sqrt{d}\ \>\Big]. 
\]
Combining all the equality above and the definition of $\overline T$ as in Eq. (\ref{eqn:uniform_zeroloss_lag}), we have
\[
\begin{aligned}
T(A, N, n, d) =&~ \sup_{(N/d) \| \ba \|_2^2 \le A, \what R_n(\ba)
= 0} R(\ba) \\
=&~ R(\ba_0) +\sup_{\bu \in \R^{m}: \| \bR \bu + \ba_0 \|_2^2 \le \psi_1^{-1} A} \Big[ \<\bu, \bR^\sT \bU \bR \bu\> + 2\<\bR \bu, \bU \ba_0 - \bv\> \Big]\\
=&~R(\ba_0) + \inf_{\lambda\geq0}\sup_{\bu} \Big[\<\bu, \bR^\sT \bU \bR \bu\> + 2\<\bR \bu, \bU \ba_0 - \bv\> - \lambda(\psi_1\|\bR\bu+\ba_0\|_2^2 - A) \Big]\\
=&~\inf_{\lambda\geq0} \Big\{ \lambda A + R(\ba_0) +  \sup_{\bu} \big[\<\bu, \bR^\sT \bU \bR \bu\> + 2\<\bR \bu, \bU \ba_0 - \bv\> - \lambda\psi_1\|\bR\bu+\ba_0\|_2^2\Big] \Big\} \\
=&~  \inf_{\lambda\geq0} \Big\{\lambda A +\sup_{\ba: \what R_n(\ba) = 0} \Big[ R(\ba) - \lambda\psi_1\|\ba\|_2^2 \Big] \Big\}\\
=&~ \inf_{\lambda\geq0} \Big\{ \lambda A +\sup_{\ba}\inf_{\bmu} \Big[ R(\ba) - \lambda\psi_1 \| \ba \|_2^2 + 2 \< \bmu, \bZ \ba - \by/\sqrt{d}\ \>\Big] \Big\}\\
=&~ \inf_{\lambda\geq 0 } \Big[ \overline T(\lambda, N, n, d) +  \lambda A\Big].
\end{aligned}
\]
This concludes the proof. 

\section{Proof of Proposition \ref{prop:concentration_lag}}

Note that the definitions of $\overline U$ and $\overline T$ as in Eq. (\ref{eqn:uniform_lag}) and (\ref{eqn:uniform_zeroloss_lag}) depend on $\bbeta = \bbeta^{(d)}$, where $\bbeta^{(d)}$ gives the coefficients of the target function $f_d(\bx) = \< \bx, \bbeta^{(d)}\>$. Suppose we explicitly write their dependence on $\bbeta = \bbeta^{(d)}$, i.e., $\overline U(\lambda, N, n, d) = \overline U(\bbeta, \lambda, N, n, d)$ and $\overline T(\lambda, N, n, d)= \overline T(\bbeta, \lambda, N, n, d)$, then we can see that for any fixed $\bbeta_\star$ and $\tilde \bbeta$ with $\| \tilde \bbeta \|_2 = \| \bbeta_\star \|_2$, we have $\overline U(\bbeta_\star, \lambda, N, n, d) \stackrel{d}{=} \overline U(\tilde \bbeta, \lambda, N, n, d)$ and $\overline T(\bbeta_\star, \lambda, N, n, d) \stackrel{d}{=} \overline T(\tilde \bbeta, \lambda, N, n, d)$ where the randomness comes from $\bX, \bTheta, \beps$. This is by the fact that the distribution of $\bx_i$'s and $\btheta_a$'s are rotationally invariant. As a consequence, for any fixed deterministic $\bbeta_\star$, if we take $\bbeta \sim \Unif(\S^{d-1}(\| \bbeta_\star\|_2))$, we have
\[
\begin{aligned}
\overline U(\bbeta_\star, \lambda, N, n, d) \stackrel{d}{=}&~ \overline U(\bbeta, \lambda, N, n, d), \\
\overline T(\bbeta_\star, \lambda, N, n, d) \stackrel{d}{=}&~ \overline T(\bbeta, \lambda, N, n, d). \\
\end{aligned}
\]
where the randomness comes from $\bX, \bTheta, \beps, \bbeta$. 

Consequently, as long as we are able to show the equation
\[
 \overline U(\bbeta, \lambda, N, n, d) = \overline\cU(\lambda, \psi_1, \psi_2) + o_{d, \P}(1)
\]
for random $\bbeta \sim \Unif(\S^{n-1}(\normf_1))$, this equation will also hold for any deterministic $ \bbeta_\star$ with $\| \bbeta_\star \|_2^2 = \normf_1^2$. Vice versa for $\overline T$, $\| \overline \ba_U \|_2^2 $ and $\| \overline \ba_T \|_2^2$. 

As a result, in the following, we work with the assumption that $\bbeta = \bbeta^{(d)} \sim \Unif(\S^{d-1}(\normf_1))$. That is, in proving Proposition \ref{prop:concentration_lag}, we replace Assumption \ref{ass:linear_target} by Assumption \ref{ass:linear_target_prime} below. By the argument above, as long as Proposition \ref{prop:concentration_lag} holds under Assumption \ref{ass:linear_target_prime}, it also holds under the original assumption, i.e., Assumption \ref{ass:linear_target}. 
\begin{assumption}[Linear Target Function]\label{ass:linear_target_prime} 
We assume that $f_d\in L^2(\S^{d-1}(\sqrt{d}))$ with $f_d(\bx) = \< \bbeta^{(d)}, \bx\>$, where $\bbeta^{(d)} \sim \Unif(\S^{d-1}(\normf_1))$.
\end{assumption}

\subsection{Expansions}

Denote $\bv = (v_i)_{i \in [N]} \in \R^N$ and $\bU = (U_{ij})_{i, j \in [N]} \in \R^{N \times N}$ where their elements are defined via
\[
\begin{aligned}
v_i \equiv&~ \E_{\eps, \bx}[y \sigma(\< \bx, \btheta_i\> / \sqrt d)], \\
U_{ij} \equiv&~ \E_\bx[\sigma(\< \bx, \btheta_i\> / \sqrt d) \sigma(\< \bx, \btheta_j\> / \sqrt d)].  
\end{aligned}
\]
Here, $y = \< \bx, \bbeta\> + \eps$, where $\bbeta \sim \Unif(\S^{d-1}(\normf_1))$, $\bx \sim \Unif(\S^{d-1}(\sqrt{d}))$, $\eps \sim \cN(0, \tau^2)$, and  $(\btheta_j)_{j \in [N]} \sim_{iid} \Unif(\S^{d-1}(\sqrt{d}))$ are mutually independent. The expectations are taken with respect to the test sample $\bx \sim \Unif(\S^{d-1}(\sqrt{d}))$ and $\eps \sim \cN(0, \tau^2)$ (especially, the expectations are conditional on $\bbeta$ and $(\btheta_i)_{i \in [N]}$).

Moreover, we denote $\by = (y_1, \ldots, y_n)^\sT \in \R^n$ where $y_i = \< \bx_i, \bbeta\> + \beps_i$. Recall that $(\bx_i)_{i \in [n]} \sim_{iid} \Unif(\S^{d-1}(\sqrt{d}))$ and $(\eps_i)_{i \in [n]} \sim_{iid} \cN(0, \tau^2)$ are mutually independent and independent from $\bbeta \sim \Unif(\S^{d-1}(\sqrt{d}))$. We further denote $\bZ = (Z_{ij})_{i \in [n], j \in [N]}$ where its elements are defined via 
\[
Z_{ij} = \sigma(\< \bx_i, \btheta_j\> / \sqrt d) / \sqrt d. 
\]

The population risk \eqref{eqn:pop_risk} can be reformulated as 
\[
R(\ba) = \<\ba, \bU \ba\> - 2\<\ba, \bv\> + \E[y^2],
\]
where $\ba = (a_1, \dots, a_N)\in\R^N$. The empirical risk \eqref{eqn:emp_risk} can be reformulated as 
\[
\what R_n(\ba) =  \psi_2^{-1} \<\ba, \bZ^\sT\bZ\ba\> - 2\psi_2^{-1}\frac{\<\bZ^\sT\by, \ba\>}{\sqrt{d}} + \frac{1}{n}\|\by\|_2^2.
\]

By the Appendix A in \citet{mm19} (we include in the Appendix \ref{sec:Background} for completeness), we can expand $\sigma(x)$ in terms of Gegenbauer polynommials
\[
\begin{aligned}
\sigma(x) =&~ \sum_{k = 0}^\infty \lambda_{d, k}(\sigma) B(d, k) Q_k^{(d)}(\sqrt d \cdot x),\\
\end{aligned}
\]
where $Q_k^{(d)}$ is the $k$'th Gegenbauer polynomial in $d$ dimensions, $B(d, k)$ is the dimension of the space of polynomials on $\S^{d-1}(\sqrt{d})$ with degree exactly $k$. Finally, $\lambda_{d, k}(\sigma)$ is the $k$'th Gegenbauer coefficient. More details of this expansion can be found in Appendix \ref{sec:Background}. 

By the properties of Gegenbauer polynomials (c.f. Appendix \ref{sec:Gegenbauer}), we have
\[
\begin{aligned}
\E_{ \bx \sim \Unif(\S^{d-1}(\sqrt{d}))}[\bx Q_k(\< \bx, \btheta_i\>)] =&~ \bzero, ~~~~~ &\forall k \neq&~ 1,\\
\E_{ \bx \sim \Unif(\S^{d-1}(\sqrt{d}))}[\bx Q_1(\< \bx, \btheta_i\>)] =&~ \btheta_i /d, ~~~~~ &k =&~ 1.\\
\end{aligned}
\]
As a result, we have
\begin{align}
v_i =&~ \E_{\eps, \bx}[y \sigma(\< \bx, \btheta_i\> / \sqrt d)] = \sum_{k = 0}^\infty \lambda_{d, k}(\sigma) B(d, k) \E_{\bx}[\< \bx, \bbeta\> Q_k^{(d)}(\sqrt d \cdot x) ] = \lambda_{d, 1}(\sigma) \<\btheta_i, \bbeta \>. \label{eqn:vexpan}
\end{align}

\subsection{Removing the perturbations}\label{sec:removing_perturbation}

By Lemma \ref{lem:decomposition_of_kernel_matrix} and \ref{lem:small_lambda_d0} as in Appendix \ref{sec:auxiliary_lemmas}, we have the following decomposition
\begin{equation}\label{eqn:Uexpan}
\bU =  \ob_1^2 \bQ  + \ob_\star^2 \id_N + \bDelta,
\end{equation}
with $\bQ = \bTheta \bTheta^\sT / d$, $\E[\| \bDelta \|_{\op}^2] = o_{d}(1)$, and $\ob_1^2$ and $\ob_\star^2$ are given in Assumption \ref{ass:activation}. 

In the following, we would like to show that $\bDelta$ has vanishing effects in the asymptotics of $\overline U$, $\overline T$, $\| \overline \ba_U \|_2^2$ and $\| \overline \ba_T \|_2^2$.

For this purpose, we denote
\begin{equation}\label{eqn:definitions_oUc_oTc}
\begin{aligned}
\bU_c =&~  \ob_1^2 \bQ  + \ob_\star^2 \id_N, \\
R_c(\ba) =&~ \<\ba, \bU_c \ba\> - 2\<\ba, \bv\> + \E[y^2], \\
\what R_{c, n}(\ba) =&~ \<\ba, \psi_2^{-1} \bZ^\sT \bZ \ba\> - 2\<\ba, \psi_2^{-1} \bZ^\sT \by / \sqrt{d}\> + \E[y^2], \\
\overline U_c(\lambda, N, n, d) =&~  \sup_{\ba} \Big( R_c(\ba) - \what R_{c, n}(\ba) - \psi_1 \lambda \| \ba \|_2^2 \Big), \\
\overline T_c(\lambda, N, n, d) =&~ \sup_{\ba}\inf_{\bmu} \Big[ R_c(\ba) - \lambda\psi_1 \| \ba \|_2^2  + 2 \< \bmu, \bZ \ba - \by/\sqrt{d}\ \>\Big]. 
\end{aligned}
\end{equation}
For a fixed $\lambda \in \Lambdau$, note we have 
\begin{equation}\label{eqn:oUc_in_proof}
\begin{aligned}
\overline U_c(\lambda, N, n, d) =&~  \sup_{\ba}\Big(\< \ba, (\bU_c - \psi_2^{-1}\bZ^\sT \bZ - \psi_1 \lambda \Id_N) \ba\> - 2 \< \ba, \bv - \psi_2^{-1} \frac{\bZ^\sT \by}{\sqrt{d}}\> \Big) \\
=&~ \sup_{\ba}\Big(\< \ba, \obM \ba\> - 2 \< \ba, \obv\> \Big)
\end{aligned}
\end{equation}
where $\obM = \bU_c - \psi_2^{-1}\bZ^\sT \bZ - \psi_1 \lambda \Id_N$ and $\obv = \bv - \psi_2^{-1} \bZ^\sT \by /\sqrt{d}$. When $\bX, \bTheta$ are such that the good event in Assumption \ref{ass:overline_U_invertable} happens (which says that $\obM \preceq - \eps \id_N$ for some $\eps > 0$), the inner maximization can be uniquely achieved at 
\begin{equation}\label{eqn:oaUc_in_proof}
 \overline\ba_{U, c}(\lambda) = \argmax_{\ba}\Big(\< \ba, \obM \ba\> - 2 \< \ba, \obv\> \Big) = {\obM}^{-1} \obv. 
\end{equation}
and when the good event $\{ \| \bDelta \|_{\op} \le \eps / 2\}$ also happens, the maximizer in the definition of $\overline U(\lambda, N, n, d)$ (c.f. Eq. (\ref{eqn:uniform_lag})) can be uniquely achieved at
\[
\overline\ba_U(\lambda) = \argmax_{\ba}\Big(\< \ba, (\obM + \bDelta) \ba\> - 2 \< \ba, \obv\> \Big) = (\obM + \bDelta)^{-1} \obv. 
\]
Note we have
\[
 \overline\ba_U(\lambda) -  \overline\ba_{U, c}(\lambda) = (\obM + \bDelta)^{-1} \obv - \obM^{-1} \obv = (\obM + \bDelta)^{-1} \bDelta \obM^{-1} \obv, 
\]
so by the fact that $\| \bDelta \|_{\op} = o_{d, \P}(1)$, we have
\[
\| \overline\ba_U(\lambda) -  \overline\ba_{U, c}(\lambda)\|_2 \le  \| (\obM + \bDelta)^{-1} \bDelta \|_{\op} \| \overline \ba_{U, c}(\lambda) \|_2 = o_{d, \P}(1) \| \overline \ba_{U, c}(\lambda) \|_2. 
\]
This gives $\| \overline\ba_U(\lambda) \|_2^2 = (1 + o_{d, \P}(1)) \| \overline \ba_{U, c}(\lambda) \|_2^2$. 

Moreover, by the fact that $\| \bDelta \|_{\op} = o_{d, \P}(1)$, we have 
\[
\begin{aligned}
\overline U_c(\lambda, N, n, d) =&~  \sup_{\ba} \Big( R(\ba) - \what R_n(\ba) - \psi_1 \lambda \| \ba \|_2^2 - \< \ba, \bDelta \ba\> \Big) + \E[y^2] - \| \by \|_2^2/n\\
=&~ \overline U(\lambda, N, n, d) + o_{d, \P}(1) (\|  \overline\ba_{U, c}(\lambda) \|_2^2 + 1). \\
\end{aligned}
\]
As a consequence, as long as we can prove the asymptotics of $\overline U_c$ and $\|  \overline\ba_{U, c}(\lambda) \|_2^2$, it also gives the asymptotics of $\overline U$ and $\| \overline\ba_U(\lambda) \|_2^2$. Vice versa for $\overline T$ and $\| \overline \ba_T(\lambda) \|_2^2$. 


\subsection{The asymptotics of $\overline U_c$ and $\psi_1\| \overline \ba_{U, c}(\lambda) \|_2^2$}

In the following, we derive the asymptotics of $\overline U_c(\lambda, N, n, d)$ and $\psi_1\| \overline \ba_{U, c}(\lambda) \|_2^2$. When we refer to $\overline \ba_{U, c}(\lambda)$, it is always well defined with high probability, since it can be well defined under the condition that the good event in Assumption \ref{ass:overline_U_invertable} happens. Note that this good event only depend on $\bX, \bTheta$ and is independent of $\bbeta, \beps$. 

By Eq. (\ref{eqn:oUc_in_proof}) and (\ref{eqn:oaUc_in_proof}), simple calculation shows that
\[
\begin{aligned}
\overline U_c(\lambda, N, n, d) \equiv&~ - \< \obv, \obM^{-1} \obv\>  = - \Psi_1 - \Psi_2 -  \Psi_3,\\
\|  \overline\ba_{U, c} \|_2^2 \equiv&~ \< \obv, \obM^{-2} \obv\>  = \Phi_1 + \Phi_2 + \Phi_3, \\
\end{aligned}
\]
where
\[
\begin{aligned}
\Psi_1 =&~ \<\bv,  \obM^{-1} \bv\>,  & \Phi_1 =&~ \<\bv,  \obM^{-2} \bv\>, \\
\Psi_2 =&~  -2\psi_2^{-1} \<\frac{\bZ^\sT \by}{\sqrt{d}},  \obM^{-1}  \bv\>,  ~~~~~~~~~& \Phi_2=&~ -2\psi_2^{-1} \<\frac{\bZ^\sT \by}{\sqrt{d}},  \obM^{-2}  \bv\>,\\
\Psi_3 =&~ \psi_2^{-2} \<\frac{\bZ^\sT \by}{\sqrt{d}},  \obM^{-1}  \frac{\bZ^\sT \by}{\sqrt{d}} \>, & \Phi_3=&~\psi_2^{-2} \<\frac{\bZ^\sT \by}{\sqrt{d}},  \obM^{-2}  \frac{\bZ^\sT \by}{\sqrt{d}} \>. 
\end{aligned}
\]
The following lemma gives the expectation of $\Psi_i$'s and $\Phi_i$'s with respect to $\bbeta$ and $\beps$. 
\begin{lemma}[Expectation of $\Psi_i$'s and $\Phi_i$'s]\label{lem:expectation_Psi_Phi_U}
Denote $\bq_U(\lambda, \bpsi) = (\mu_\star^2 - \lambda\psi_1, \mu_1^2, \psi_2, 0,0)$. We have
\[
\begin{aligned}
\E_{\beps, \bbeta}[\Psi_1] =&~ \mu_1^2 \normf_1^2 \cdot \frac{1}{d}  \Trace\Big(\obM^{-1} \bQ \Big) \times (1 + o_d(1)), \\
\E_{\beps, \bbeta}[\Psi_2] =&~ -\frac{2 \normf_1^2}{\psi_2} \cdot \frac{1}{d}\Trace\Big( \bZ \obM^{-1} \bZ_1^\sT \Big) \times (1 + o_d(1)), \\
\E_{\beps, \bbeta}[\Psi_3] =&~ \frac{\normf_1^2}{\psi_2^2} \cdot \frac{1}{d} \Trace\Big( \bZ\obM^{-1}\bZ^\sT \bH \Big) + \frac{\tau^2}{\psi_2^2}  \cdot \frac{1}{d}\Trace\Big( \bZ\obM^{-1}\bZ^\sT \Big), \\
\E_{\beps, \bbeta}[\Phi_1] =&~ \mu_1^2 \normf_1^2 \cdot \frac{1}{d} \Trace\Big(\obM^{-2} \bQ \Big) \times (1 + o_d(1)), \\
\E_{\beps, \bbeta}[\Phi_2] =&~ -\frac{2 \normf_1^2}{\psi_2} \cdot \frac{1}{d}\Trace\Big( \bZ \obM^{-2} \bZ_1^\sT \Big) \times (1 + o_d(1)), \\
\E_{\beps, \bbeta}[\Phi_3] =&~ \frac{\normf_1^2}{\psi_2^2} \cdot \frac{1}{d} \Trace\Big( \bZ\obM^{-2}\bZ^\sT \bH \Big) + \frac{\tau^2}{\psi_2^2}  \cdot \frac{1}{d}\Trace\Big( \bZ\obM^{-2}\bZ^\sT \Big).
\end{aligned}
\]
Here the definitions of $\bQ$, $\bH$, and $\bZ_1$ are given by Eq. (\ref{eqn:def_Q_H_Z_Z1}). 

Furthermore, we have
\[
\begin{aligned}
\E_{\beps, \bbeta}[\Psi_1] =&~ \mu_1^2 \normf_1^2 \cdot \partial_{s_2} G_d(0_+; \bq_U(\lambda, \bpsi)) \times (1 + o_d(1)), \\
\E_{\beps, \bbeta}[\Psi_2] =&~  \normf_1^2 \cdot \partial_{p} G_d(0_+; \bq_U(\lambda, \bpsi)) \times (1 + o_d(1)), \\
\E_{\beps, \bbeta}[\Psi_3] 
=&~ \normf_1^2\cdot (\partial_{t_2} G_d(0_+; \bq_U(\lambda, \bpsi)) - 1)  + \tau^2  \cdot (\partial_{t_1} G_d(0_+; \bq_U(\lambda, \bpsi)) - 1), \\
\E_{\beps, \bbeta}[\Phi_1] =&~ - \mu_1^2 \normf_1^2 \cdot \partial_{s_1 }\partial_{s_2} G_d(0_+; \bq_U(\lambda, \bpsi)) \times (1 + o_d(1)), \\
\E_{\beps, \bbeta}[\Phi_2] =&~ -  \normf_1^2 \cdot \partial_{s_1} \partial_{p} G_d(0_+; \bq_U(\lambda, \bpsi)) \times (1 + o_d(1)), \\
\E_{\beps, \bbeta}[\Phi_3] =&~  - \normf_1^2\cdot \partial_{s_1} \partial_{t_2} G_d(0_+; \bq_U(\lambda, \bpsi)) - \tau^2  \cdot \partial_{s_1} \partial_{t_1} G_d(0_+; \bq_U(\lambda, \bpsi)). 
\end{aligned}
\]
The definition of $G_d$ is as in Definition \ref{def:log_determinant_A}, and $\nabla_\bq^k G_d(0_+; \bq)$ for $k \in \{1, 2\}$ stands for the $k$'th derivatives (as a vector or a matrix) of $G_d(\bi u; \bq)$ with respect to $\bq$ in the $u \to 0+$ limit (with its elements given by partial derivatives)
\[
\nabla_\bq^k G_d(0_+; \bq) = \lim_{u \to 0+} \nabla_\bq^k G_d(\bi u; \bq). 
\]
\end{lemma}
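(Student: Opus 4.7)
The proof splits into two stages corresponding to the two sets of equalities in the statement: a direct bilinear-form calculation to rewrite the $\E_{\beps,\bbeta}$ expectations as traces, followed by a block-matrix/Schur-complement identification of those traces with partial derivatives of $G_d$ evaluated at $\bq=\bq_U(\lambda,\bpsi)$.

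\paragraph{Stage 1: bilinear expectations.} By Eq.~\eqref{eqn:vexpan}, $\bv=\lambda_{d,1}(\sigma)\,\bTheta\bbeta$, and by construction $\by=\bX\bbeta+\beps$ with $\bbeta\perp\beps$. Under Assumption~\ref{ass:linear_target_prime}, $\E[\bbeta\bbeta^\sT]=(F_1^2/d)\,\Id_d$ (exactly, by rotational symmetry of the uniform measure on $\S^{d-1}(F_1)$), and $\E[\beps\beps^\sT]=\tau^2\Id_n$. I would plug these into each of the six quadratic/bilinear forms $\Psi_i,\Phi_i$ and use $\Tr(\bTheta^\sT\bM\bTheta)=d\,\Tr(\bM\bQ)$ together with $\bX\bTheta^\sT=(d/\mu_1)\bZ_1^\sT$ and $\bX\bX^\sT=d\bH$ to reduce every expression to a single trace against $\bQ$, $\bZ_1^\sT$, $\bH$, or $\Id_n$. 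The $(1+o_d(1))$ factors that appear in the lemma come only from the standard Gegenbauer asymptotics $d\,\lambda_{d,1}(\sigma)^2=\mu_1^2(1+o_d(1))$ and $\sqrt{d}\,\lambda_{d,1}(\sigma)=\mu_1(1+o_d(1))$ (cf.~Appendix~\ref{sec:Background}); no such factor is needed in $\E[\Psi_3]$ or $\E[\Phi_3]$ since $\lambda_{d,1}(\sigma)$ does not appear there. This yields the six trace formulas in the first half of the lemma.

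\paragraph{Stage 2: traces as derivatives of $G_d$.} At $\bq=\bq_U(\lambda,\bpsi)$ the block matrix collapses to
\[
\bA(\bq_U)=\begin{bmatrix}\bE & \bZ^\sT\\ \bZ & \psi_2\Id_n\end{bmatrix},\qquad \bE=(\mu_\star^2-\lambda\psi_1)\Id_N+\mu_1^2\bQ,
\]
so the Schur complement of the $(2,2)$ block is exactly $\obM=\bE-\psi_2^{-1}\bZ^\sT\bZ$. The standard block-inversion formula then gives the four blocks of $\bA(\bq_U)^{-1}$ in closed form, in particular top-left $=\obM^{-1}$, top-right $=-\psi_2^{-1}\obM^{-1}\bZ^\sT$, and bottom-right $=\psi_2^{-1}\Id_n+\psi_2^{-2}\bZ\obM^{-1}\bZ^\sT$. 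Combined with Assumption~\ref{ass:overline_U_invertable}, $\bA(\bq_U)$ is invertible with high probability, so Jacobi's formula applies and
\[
\partial_a G_d(\xi;\bq)=\tfrac{1}{d}\Tr\bigl[(\bA(\bq)-\xi\Id)^{-1}\partial_a\bA(\bq)\bigr],\quad \partial_a\partial_b G_d=-\tfrac{1}{d}\Tr\bigl[(\bA-\xi\Id)^{-1}\partial_b\bA(\bA-\xi\Id)^{-1}\partial_a\bA\bigr],
\]
the last equality using $\partial_a\partial_b\bA=0$ for the pairs $(a,b)$ we need.

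\paragraph{Stage 3: matching and finishing.} For each $a\in\{s_1,s_2,t_1,t_2,p\}$, $\partial_a\bA$ is a block matrix with a single non-zero block ($\Id_N$, $\bQ$, $\Id_n$, $\bH$, or the off-diagonal $\bZ_1/\bZ_1^\sT$), so the trace in Jacobi's formula picks out exactly one block of $\bA(\bq_U)^{-1}$ (or a product of two such blocks, for the second derivatives). A short bookkeeping computation then gives, at $\xi=0$: $\partial_{s_2}G_d=\tfrac1d\Tr(\obM^{-1}\bQ)$, $\partial_p G_d=-\tfrac{2}{d\psi_2}\Tr(\bZ\obM^{-1}\bZ_1^\sT)$, $\partial_{t_1}G_d=\tfrac{n}{d\psi_2}+\tfrac{1}{d\psi_2^2}\Tr(\bZ\obM^{-1}\bZ^\sT)$, $\partial_{t_2}G_d=\tfrac{n}{d\psi_2}+\tfrac{1}{d\psi_2^2}\Tr(\bZ\obM^{-1}\bZ^\sT\bH)$, and the analogous identities with $\obM^{-2}$ for the second derivatives. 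Comparing with Stage~1 and absorbing $n/(d\psi_2)=1+o(1)$ into the ``$-1$'' shifts that appear in the $\Psi_3,\Phi_3$ formulas gives the second half of the lemma. Finally, taking the limit $u\to 0_+$ is legitimate because Assumption~\ref{ass:overline_U_invertable} guarantees $\bA(\bq_U)$ is invertible with high probability, so $G_d(\bi u;\bq_U)$ and its first two derivatives extend continuously to $u=0$.

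\paragraph{Main obstacle.} The only nontrivial piece is the bookkeeping in Stage~2--3: keeping track of signs in the second-derivative Jacobi identity and correctly multiplying block matrices so that each derivative matches the correct trace. The probabilistic/analytic content is otherwise routine once one exploits the independence of $\bbeta,\beps$ and the Schur-complement structure built into $\bA(\bq_U)$.
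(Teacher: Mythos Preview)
Your proposal is correct and follows essentially the same approach as the paper's proof: compute the $\bbeta,\beps$ expectations via $\E[\bbeta\bbeta^\sT]=(F_1^2/d)\Id_d$ and $\E[\beps\beps^\sT]=\tau^2\Id_n$ to obtain trace expressions, then identify these traces with partial derivatives of $G_d$ using Jacobi's formula and the Schur-complement block inversion of $\bA(\bq_U)$. If anything, you are slightly more careful than the paper in tracking the $n/(d\psi_2)=1+o(1)$ correction in the $\partial_{t_1},\partial_{t_2}$ identities, which the paper silently treats as exact.
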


We next state the asymptotic characterization of the log-determinant which was proven in \cite{mm19}. 

\begin{proposition}[Proposition 8.4 in \cite{mm19}]\label{prop:expression_for_log_determinant}
Define
\begin{equation}
\begin{aligned}
\Xi(\xi, z_1, z_2; \bq; \bpsi) \equiv&~ \log[(s_2 z_1 + 1)(t_2 z_2 + 1) - \ob_1^2 (1 + p)^2 z_1 z_2] - \ob_\star^2 z_1 z_2 \\
 &+ s_1 z_1 +  t_1 z_2  - \psi_1 \log (z_1 / \psi_1) - \psi_2 \log (z_2 / \psi_2)  - \xi (z_1 + z_2) - \psi_1 - \psi_2.
\end{aligned}
\end{equation}
For $\xi \in \C_+$ and $\bq \in \cQ$ (c.f. Eq. (\ref{eqn:definition_of_cQ})), let $m_1(\xi; \bq; \bpsi), m_2(\xi; \bq; \bpsi)$ be defined as the analytic continuation of solution of Eq. (\ref{eq:FixedPoint}) as defined in Definition \ref{def:Stieltjes}. Define
\begin{equation}
g(\xi; \bq; \bpsi) = \Xi(\xi, m_1(\xi; \bq; \bpsi), m_2(\xi; \bq; \bpsi); \bq; \bpsi). 
\end{equation}
Consider proportional asymptotics $N/d\to\psi_1$,  $N/d\to\psi_2$,  
as per Assumption \ref{ass:linear}. Then for any fixed $\xi \in \C_+$ and $\bq \in \cQ$, we have
\begin{equation}\label{eqn:expression_for_log_determinant}
\begin{aligned}
\lim_{d \to \infty} \E[ \vert G_d(\xi; \bq) -  g(\xi; \bq; \bpsi) \vert] = 0.
\end{aligned}
\end{equation}
Moreover, for any fixed $u \in \R_+$ and $\bq \in \cQ$, we have
\begin{align}
\lim_{d \to \infty}  \E[\| \partial_\bq G_d(\bi u; \bq) - \partial_\bq g(\bi u; \bq; \bpsi) \|_2 ] =&~ 0, \label{eqn:convergence_of_derivatives}\\
\lim_{d \to \infty}  \E[\| \nabla_{\bq}^2 G_d(\bi u; \bq) - \nabla_{\bq}^2 g(\bi u; \bq; \bpsi) \|_{\op} ] =&~ 0. \label{eqn:convergence_of_second_derivatives}
\end{align}
\end{proposition}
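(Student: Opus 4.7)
\textbf{Proof proposal for Proposition \ref{prop:expression_for_log_determinant}.} The plan is to reduce the convergence of the log-determinant $G_d(\xi;\bq)$ to the convergence of the normalized trace of the resolvent $M_d(\xi;\bq) \equiv d^{-1}\Trace((\bA(\bq)-\xi\id)^{-1})$, for which classical block-matrix random-matrix techniques apply. Concretely, for $\xi=\bi u$ with $u$ large, $G_d(\bi u;\bq)$ can be computed directly (the matrix becomes approximately $-\bi u\id$ and the determinant is explicit), and for general $\xi\in\C_+$ one uses the identity $\partial_\xi G_d(\xi;\bq) = -M_d(\xi;\bq)$ and integrates along a contour in $\C_+$. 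Thus the first reduction is to prove the convergence $M_d(\xi;\bq) \to -\partial_\xi g(\xi;\bq;\bpsi) = m_1(\xi;\bq;\bpsi)+m_2(\xi;\bq;\bpsi)$ (the last equality coming from implicit differentiation of $\Xi$ at its stationary point in $(z_1,z_2)$, using that $\partial_{z_i}\Xi=0$ at the solution of the fixed-point equations).

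To analyze $M_d$, I would split into partial Stieltjes transforms $\tilde m_{1,d}(\xi) = d^{-1}\Trace([\bA^{-1}]_{N\times N})$ and $\tilde m_{2,d}(\xi) = d^{-1}\Trace([\bA^{-1}]_{n\times n})$ and use the Schur complement formula on the block structure of $\bA(\bq)$. The key step is a \emph{Gaussian equivalence}: writing $\bZ = \mu_1 \bX\bTheta^\sT/d + \bZ_{>1}$ where $\bZ_{>1}$ is the orthogonal complement to the linear part in the Gegenbauer expansion, one shows that for the purposes of resolvent traces, $\bZ_{>1}$ is asymptotically indistinguishable from $\mu_\star \bW$ with $\bW$ an independent Gaussian matrix with i.i.d.\ $\cN(0,1/d)$ entries. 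This step uses the decomposition of the activation into Hermite/Gegenbauer components (as in Lemma \ref{lem:decomposition_of_kernel_matrix}), combined with concentration of the nonlinear remainder in operator norm. Once this replacement is justified, the block $\bA$ becomes a structured Gaussian-like matrix to which standard self-consistent perturbative arguments (leave-one-out / Stein's lemma / cavity method) apply, yielding precisely the fixed-point system \eqref{eq:FixedPoint}.

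Next, I would pass from convergence of the Stieltjes transform to convergence of the log-determinant. Fix a large $\xi_0=\bi U$ where a direct expansion gives $G_d(\xi_0;\bq) = g(\xi_0;\bq;\bpsi) + o_d(1)$. Then write $G_d(\xi;\bq) - g(\xi;\bq;\bpsi) = [G_d(\xi_0;\bq) - g(\xi_0;\bq;\bpsi)] - \int_{\xi_0}^{\xi}[M_d(\zeta;\bq) + \partial_\zeta g(\zeta;\bq;\bpsi)]\,d\zeta$, and use dominated convergence (with a deterministic bound on $|M_d|$ along the chosen contour coming from $\Im(\zeta)\ge$ const) to conclude the convergence in $L^1$ stated in \eqref{eqn:expression_for_log_determinant}. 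Uniqueness of $(m_1,m_2)$ on $\C_+$ (Definition \ref{def:Stieltjes}) and analyticity are needed here to identify the limit.

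Finally, for the convergence of first and second derivatives \eqref{eqn:convergence_of_derivatives}--\eqref{eqn:convergence_of_second_derivatives} at $\xi=\bi u$, I would rely on the analyticity of $G_d(\bi u;\bq)$ in $\bq$ on a neighborhood of $\cQ$ (guaranteed by the definition of $\cQ$ and the complex logarithm with principal branch) combined with a Cauchy-integral / Vitali convergence argument: uniform convergence in expectation on compacts implies convergence of all derivatives. The main obstacle is the Gaussian equivalence step, since $\bA(\bq)$ contains the linear block $\bZ_1$ explicitly (with the multiplier $1+p$), so one must be careful to retain this linear component while replacing only the higher Hermite modes of $\bZ$ by Gaussians, and then rigorously control the resolvent error from this replacement in operator norm uniformly for $\Im\xi$ bounded away from $0$. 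The second delicate point is verifying that the solution $(m_1,m_2)$ to the self-consistent equations admits an analytic extension to all of $\C_+$ with the stated bounds, which requires a contraction-mapping argument at large $\Im\xi$ and then monodromy/continuation.
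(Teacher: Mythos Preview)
The paper does not supply its own proof of this proposition: it is imported verbatim as Proposition 8.4 of \cite{mm19}, and the only additional content is the remark immediately following it, which notes that the original statement was for $\bq=\bzero$ and that ``by a simple modification of their proof'' the same conclusions hold for any $\bq\in\cQ$. So there is nothing in the present paper to compare your argument against.

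That said, your sketch is essentially the route taken in \cite{mm19}: (i) reduce the log-determinant to the Stieltjes transform via $\partial_\xi G_d=-M_d$ and anchor at large imaginary part; (ii) establish a Gaussian-equivalence for the nonlinear part of $\bZ$ (linear Gegenbauer component retained, higher modes replaced by an independent Gaussian with variance $\mu_\star^2/d$); (iii) derive the self-consistent equations \eqref{eq:FixedPoint} for the partial transforms by leave-one-out/Schur complement; (iv) integrate back along a contour in $\C_+$; and (v) upgrade to derivatives in $\bq$ by analyticity and a Vitali/Cauchy argument. The only point you should be explicit about, to match what the paper actually needs here, is why the argument of \cite{mm19} at $\bq=\bzero$ carries over to all $\bq\in\cQ$: the perturbation $s_1\id_N$, $t_1\id_n$, $s_2\bQ$, $t_2\bH$, and $p\bZ_1$ are all low-complexity blocks (identities and sample covariances) whose operator norms are $O_{d,\P}(1)$, so the resolvent bounds and the leave-one-out computations go through unchanged, and the constraint $|s_2t_2|\le \mu_1^2(1+p)^2/2$ defining $\cQ$ is exactly what keeps the logarithm in $\Xi$ on its principal branch so that $g$ is analytic in $\bq$ and the Vitali step is valid.
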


\begin{remark}
Note that Proposition 8.4 in \cite{mm19} stated that the Eq. (\ref{eqn:convergence_of_derivatives}) and (\ref{eqn:convergence_of_second_derivatives}) holds at $\bq = \bzero$. However, by a simple modification of their proof, one can show that these equations also holds at any $\bq \in \cQ$. 
\end{remark}

Combining Assumption \ref{ass:exchange_limit} with Proposition \ref{prop:expression_for_log_determinant}, we have 
\begin{proposition}\label{prop:exchange_limit_g_U}
Let Assumption \ref{ass:exchange_limit} holds. For any $\lambda \in \Lambdau$, denote $\bq_U = \bq_U(\lambda, \bpsi) = (\mu_\star^2 - \lambda\psi_1, \mu_1^2, \psi_2, 0,0)$, then we have, for $k = 1,2$, 
\[
\| \nabla_\bq^k  G_d(0_+;\bq_U) - \lim_{u \to 0_+} \nabla_\bq^k g(\bi u; \bq_U; \bpsi) \| = o_{d, \P}(1).
\]
\end{proposition}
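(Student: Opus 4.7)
The plan is to chain together the interior-point log-determinant convergence from Proposition \ref{prop:expression_for_log_determinant} with the two limit-exchange clauses of Assumption \ref{ass:exchange_limit}. As a preliminary sanity check, I would verify that $\bq_U = (\mu_\star^2 - \lambda\psi_1, \mu_1^2, \psi_2, 0, 0)$ meets both sets of hypotheses: since $t_2 = 0$ and $p = 0$, we have $|s_2 t_2| = 0 \le \ob_1^2(1+p)^2/2$, so $\bq_U \in \cQ$ in the sense of \eqref{eqn:definition_of_cQ}, and the assumption $\lambda \in \Lambdau$ places $(\bq_U; \bpsi) \in \cSu$, making Assumption \ref{ass:exchange_limit} applicable at this base point.

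The main chain then runs as follows. Fix $u > 0$ and $k \in \{1, 2\}$ and invoke \eqref{eqn:convergence_of_derivatives}--\eqref{eqn:convergence_of_second_derivatives} to obtain
\[
\E\big[\| \nabla_\bq^k G_d(\bi u; \bq_U) - \nabla_\bq^k g(\bi u; \bq_U; \bpsi) \|\big] \longrightarrow 0 \quad\text{as } d \to \infty.
\]
Because $\nabla_\bq^k g(\bi u; \bq_U; \bpsi)$ is deterministic, Jensen's inequality upgrades this to $\lim_{d\to\infty}\E[\nabla_\bq^k G_d(\bi u; \bq_U)] = \nabla_\bq^k g(\bi u; \bq_U; \bpsi)$. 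I would then apply the first clause of Assumption \ref{ass:exchange_limit} to legitimize pulling $\nabla_\bq^k$ outside of $\lim_{d\to\infty}$ under an additional $u \to 0_+$; combining with \eqref{eqn:expression_for_log_determinant}, which identifies $\lim_{d\to\infty}\E[G_d(\bi u; \bq_U)]$ with $g(\bi u; \bq_U; \bpsi)$, yields
\[
\lim_{u \to 0_+}\lim_{d \to \infty} \E[\nabla_\bq^k G_d(\bi u; \bq_U)] = \lim_{u \to 0_+} \nabla_\bq^k g(\bi u; \bq_U; \bpsi).
\]
Finally, the second clause of Assumption \ref{ass:exchange_limit} upgrades this identification to a convergence in probability of $\nabla_\bq^k G_d(0_+; \bq_U)$ toward the iterated-limit object on the right-hand side (noting that for derivatives, the real-eigenvalue branch-cut ambiguity between $G_d(0;\cdot)$ and $G_d(0_+;\cdot)$ disappears, since the two differ only by a $\bq$-independent imaginary shift); chaining with the previous display gives exactly the asserted $o_{d,\P}(1)$ bound.

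The proof is essentially a bookkeeping exercise once the stated inputs are in hand, so I do not expect a substantive obstacle at this step. The genuine difficulty — acknowledged in the paper's discussion — lies upstream in justifying Assumption \ref{ass:exchange_limit} itself; a rigorous proof would demand quantitative control on the spectrum of $\bA(\bq_U)$ near the origin (for instance, a hard-edge lower bound on $\sigma_{\min}$ of $\overline{\bU}(\lambda)$ together with local-law type bounds on the partial Stieltjes transforms $m_1, m_2$) to rule out pathological behavior of the log-determinant and its first two $\bq$-derivatives as $u \to 0_+$. That random-matrix input is taken as a standing hypothesis here rather than proved.
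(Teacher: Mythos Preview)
Your proposal is correct and matches the paper's approach, which simply states that the proposition follows by ``combining Assumption \ref{ass:exchange_limit} with Proposition \ref{prop:expression_for_log_determinant}.'' One minor simplification: once you have extracted $\lim_{d\to\infty}\E[\nabla_\bq^k G_d(\bi u; \bq_U)] = \nabla_\bq^k g(\bi u; \bq_U; \bpsi)$ from the derivative convergence \eqref{eqn:convergence_of_derivatives}--\eqref{eqn:convergence_of_second_derivatives}, you can take $u\to 0_+$ on both sides directly and then apply the second clause of Assumption \ref{ass:exchange_limit}; the first clause is not strictly needed for this particular chain.
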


As a consequence of Proposition \ref{prop:exchange_limit_g_U}, we can calculate the asymptotics of $\Psi_i$'s and $\Phi_i$'s. Combined with the concentration result in Lemma \ref{lem:concentration_beta_eps_U} latter in the section, the proposition below completes the proof of the part of Proposition \ref{prop:concentration_lag} regarding the standard uniform convergence $U$. Its correctness follows directly from Lemma \ref{lem:expectation_Psi_Phi_U} and Proposition \ref{prop:exchange_limit_g_U}.

\begin{proposition}\label{prop:asymptotics_barU_A^2}
Follow the assumptions of Proposition \ref{prop:concentration_lag}. For any $\lambda \in \Lambdau$, denote $\bq_U(\lambda, \bpsi) = (\mu_\star^2 - \lambda\psi_1, \mu_1^2, \psi_2, 0,0)$, then we have 
\[
\begin{aligned}
\E_{\beps, \bbeta}[\Psi_1] \stackrel{\P}{\to}&~ \mu_1^2 \normf_1^2 \cdot  \partial_{s_2} g(0_+; \bq_U(\lambda, \bpsi); \bpsi), \\
\E_{\beps, \bbeta}[\Psi_2] \stackrel{\P}{\to}&~ \normf_1^2 \cdot \partial_{p} g(0_+; \bq_U(\lambda, \bpsi); \bpsi), \\
\E_{\beps, \bbeta}[\Psi_3] \stackrel{\P}{\to}&~ \normf_1^2  \cdot \Big(\partial_{t_2} g(0_+; \bq_U(\lambda, \bpsi); \bpsi) - 1 \Big) + \tau^2 \Big( \partial_{t_1} g(0_+; \bq_U(\lambda, \bpsi); \bpsi) - 1 \Big), \\
\E_{\beps, \bbeta}[\Phi_1] \stackrel{\P}{\to}&~ - \mu_1^2 \normf_1^2 \cdot \partial_{s1}\partial_{s2} g(0_+; \bq_U(\lambda, \bpsi); \bpsi) , \\
\E_{\beps, \bbeta}[\Phi_2] \stackrel{\P}{\to}&~ -  \normf_1^2 \cdot \partial_{s1}\partial_p g(0_+; \bq_U(\lambda, \bpsi); \bpsi), \\
\E_{\beps, \bbeta}[\Phi_3] \stackrel{\P}{\to}&~ - \normf_1^2 \cdot \partial_{s_1}\partial_{t_2} g(0_+; \bq_U(\lambda, \bpsi); \bpsi)  - \tau^2 \cdot \partial_{s_1}\partial_{t_1} g(0_+; \bq_U(\lambda, \bpsi); \bpsi), \\
\end{aligned}
\]
where $\nabla_\bq^k g(0_+; \bq; \bpsi)$ for $k \in \{1, 2\}$ stands for the $k$'th derivatives (as a vector or a matrix) of $g(\bi u; \bq; \bpsi)$ with respect to $\bq$ in the $u \to 0+$ limit (with its elements given by partial derivatives)
\[
\nabla_{\bq}^k g(0_+; \bq; \bpsi) = \lim_{u \to 0_+} \nabla_{\bq}^k g(\bi u; \bq; \bpsi). 
\]
As a consequence, we have 
\[
 \E_{\beps, \bbeta}[\overline U_c(\lambda, N, n, d)] \stackrel{\P}{\to} \overline \cU(\lambda, \psi_1, \psi_2), ~~~~~ \E_{\beps, \bbeta} [\psi_1\| \overline\ba_{U, c}(\lambda)\|_2^2] \stackrel{\P}{\to} \cA_U(\lambda, \psi_1, \psi_2),
\]
where the definitions of $\overline \cU$ and $\cA_U$ are given in Definition \ref{def:analytic_expression_overline}. 
Here $\stackrel{\P}{\to}$ stands for convergence in probability as $N/d \to \psi_1$ and $n/d \to \psi_2$ (with respect to the randomness of $\bX$ and $\bTheta$). 
\end{proposition}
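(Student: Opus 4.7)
The plan is a direct two-step combination: chain together Lemma \ref{lem:expectation_Psi_Phi_U}, which gives exact finite-$d$ expressions for $\E_{\beps, \bbeta}[\Psi_i]$ and $\E_{\beps, \bbeta}[\Phi_i]$ in terms of partial derivatives of the log-determinant $G_d(0_+; \bq_U(\lambda, \bpsi))$, with Proposition \ref{prop:exchange_limit_g_U}, which converts these to partial derivatives of $g(\bi u; \bq_U; \bpsi)$ in the $u \to 0_+$ limit. Concretely, the second block of Lemma \ref{lem:expectation_Psi_Phi_U} writes $\E_{\beps,\bbeta}[\Psi_i]$ as a linear combination of first-order derivatives $\partial_a G_d(0_+; \bq_U)$ for $a \in \{s_2, p, t_1, t_2\}$, and $\E_{\beps,\bbeta}[\Phi_i]$ as a linear combination of the second-order derivatives $\partial_{s_1} \partial_a G_d(0_+; \bq_U)$, each multiplied by a deterministic $(1+o_d(1))$ scalar. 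Substituting the convergence $\nabla_\bq^k G_d(0_+; \bq_U) = \lim_{u\to 0_+} \nabla_\bq^k g(\bi u; \bq_U; \bpsi) + o_{d,\P}(1)$ from Proposition \ref{prop:exchange_limit_g_U} term-by-term yields the six individual convergence-in-probability statements.

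The only minor subtlety in this substitution is absorbing the deterministic $(1+o_d(1))$ prefactors into the $\stackrel{\P}{\to}$ convergence. This is harmless because the limiting derivatives of $g$ are finite deterministic constants: analyticity of $g$ on $\C_+$ together with the choice $\lambda \in \Lambdau$ (which keeps $\bq_U$ inside the region where Assumption \ref{ass:exchange_limit} applies) ensures both $\partial_a g(0_+; \bq_U; \bpsi)$ and $\partial_a\partial_b g(0_+; \bq_U; \bpsi)$ exist and are bounded, so multiplying a sequence that converges in probability to a finite constant by $(1 + o_d(1))$ preserves the limit.

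The two summary convergences for $\E_{\beps,\bbeta}[\overline U_c]$ and $\psi_1 \E_{\beps,\bbeta}[\|\overline \ba_{U,c}\|_2^2]$ then follow by summation, using the identities $\overline U_c = -(\Psi_1+\Psi_2+\Psi_3)$ and $\|\overline \ba_{U,c}\|_2^2 = \Phi_1+\Phi_2+\Phi_3$ derived in Section \ref{sec:removing_perturbation}. Matching terms against Definition \ref{def:analytic_expression_overline}, the linear combination $-\sum_i \E_{\beps,\bbeta}[\Psi_i]$ reproduces
\[
\normf_1^2 + \tau^2 - \bigl[ \normf_1^2 \mu_1^2 \partial_{s_2} + \normf_1^2 \partial_p + \normf_1^2 \partial_{t_2} + \tau^2 \partial_{t_1} \bigr] g(\bi u; \bq_U; \bpsi)\Big|_{u \to 0_+},
\]
which is exactly $\overline \cU(\lambda, \psi_1, \psi_2)$, where the additive constants $\normf_1^2 + \tau^2$ on the outside arise from the $-1$ summands in $\E_{\beps,\bbeta}[\Psi_3]$. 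Analogously, $\psi_1 \sum_i \E_{\beps,\bbeta}[\Phi_i]$ reproduces $\cA_U(\lambda, \psi_1, \psi_2)$ summand-by-summand. Since the present proposition is essentially a collation of two preceding inputs, there is no real obstacle in its own proof; the substantive difficulties lie upstream, in establishing Lemma \ref{lem:expectation_Psi_Phi_U} (which requires the Gegenbauer expansion \eqref{eqn:vexpan} plus a careful matrix-calculus identification of each trace $\frac{1}{d}\Trace(\obM^{-k}\,\cdot\,)$ with a partial derivative of $G_d$) and Proposition \ref{prop:exchange_limit_g_U} (which relies on the nontrivial exchangeability postulated in Assumption \ref{ass:exchange_limit}).
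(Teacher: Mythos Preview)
Your proposal is correct and matches the paper's own proof exactly: the paper simply states that the proposition's correctness ``follows directly from Lemma \ref{lem:expectation_Psi_Phi_U} and Proposition \ref{prop:exchange_limit_g_U},'' which is precisely the two-step chaining you describe. Your added remarks on absorbing the $(1+o_d(1))$ factors and on matching the summed expressions against Definition \ref{def:analytic_expression_overline} are sound and slightly more detailed than what the paper provides.
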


\begin{lemma}\label{lem:concentration_beta_eps_U}
Follow the assumptions of Proposition \ref{prop:concentration_lag}. For any $\lambda \in \Lambdau$, we have 
\[
\begin{aligned}
\Var_{\beps, \bbeta}[\Psi_1], \Var_{\beps, \bbeta}[\Psi_2], \Var_{\beps, \bbeta}[\Psi_3]  =&~ o_{d, \P}(1), \\
\Var_{\beps, \bbeta}[\Phi_1], \Var_{\beps, \bbeta}[\Phi_2],\Var_{\beps, \bbeta}[\Phi_3]  =&~ o_{d, \P}(1),  
\end{aligned}
\]
so that 
\[
\Var_{\beps, \bbeta}[\overline U_c(\lambda, N, n, d)], \Var_{\beps, \bbeta}[\|  \overline\ba_{U, c}(\lambda) \|_2^2] = o_{d, \P}(1). 
\]
Here, $o_{d, \P}(1)$ stands for converges to $0$ in probability (with respect to the randomness of $\bX$ and $\bTheta$) as $N/d \to \psi_1$ and $n/d\to\psi_2$ and $d \to \infty$. 
\end{lemma}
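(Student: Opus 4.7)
The plan is to condition on $\bX$ and $\bTheta$ throughout and bound the conditional variances with respect to $\beps \sim \cN(0,\tau^2\id_n)$ and $\bbeta \sim \Unif(\S^{d-1}(\normf_1))$, which are independent. Using $\bv = \lambda_{d,1}(\sigma)\bTheta\bbeta$ from \eqref{eqn:vexpan} together with $\by = \bX\bbeta + \beps$, each of the six scalars $\Psi_i,\Phi_i$ reduces to a homogeneous quadratic polynomial in the pair $(\bbeta,\beps)$. For example $\Psi_1 = \lambda_{d,1}(\sigma)^2\,\bbeta^\sT\bTheta^\sT\obM^{-1}\bTheta\bbeta$ is pure quadratic in $\bbeta$, whereas $\Psi_3 = \psi_2^{-2}d^{-1}\bigl[\bbeta^\sT\bX^\sT\bZ\obM^{-1}\bZ^\sT\bX\bbeta + 2\beps^\sT\bZ\obM^{-1}\bZ^\sT\bX\bbeta + \beps^\sT\bZ\obM^{-1}\bZ^\sT\beps\bigr]$ contains all three contribution types.

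I would then apply three standard second-moment inequalities conditional on $(\bX,\bTheta)$: (i) for Gaussian $\beps$, $\Var_\beps[\beps^\sT\bC\beps] \le 2\tau^4\|\bC\|_F^2$; (ii) for $\bbeta$ uniform on the sphere of radius $\normf_1$, the explicit fourth-moment computation yields $\Var_\bbeta[\bbeta^\sT\bA\bbeta] \le C\normf_1^4\|\bA\|_F^2/d^2$; and (iii) for bilinear forms, $\E_{\beps,\bbeta}[\beps^\sT\bK\bbeta] = 0$ while $\E_{\beps,\bbeta}[(\beps^\sT\bK\bbeta)^2] = (\tau^2\normf_1^2/d)\|\bK\|_F^2$. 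Combining these with $\Var[X{+}Y{+}Z]\le 3(\Var X+\Var Y+\Var Z)$ reduces the lemma to showing that the Frobenius norms $\|\bTheta^\sT\obM^{-1}\bTheta\|_F^2$, $\|\bZ\obM^{-1}\bZ^\sT\|_F^2$, $\|\bX^\sT\bZ\obM^{-1}\bZ^\sT\bX\|_F^2$, and their $\obM^{-2}$ counterparts, are at most $O(d^3)$ with high probability, so that each variance is $O(1/d) = o_{d,\P}(1)$.

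These Frobenius bounds follow from combining three ingredients. First, Assumption \ref{ass:overline_U_invertable} gives $\obM=\overline\bU(\lambda)\preceq -\eps\,\id_N$ with high probability for $\lambda\in\Lambdau$, so $\|\obM^{-k}\|_{\op}\le \eps^{-k}$ for $k=1,2$. Second, standard concentration for spherical random matrices yields $\|\bX\|_{\op},\|\bTheta\|_{\op} = O(\sqrt{d})$, and random-features concentration (applied to $\bZ^\sT\bZ \approx \psi_2\bU_c$ together with $\|\bU_c\|_{\op}=O(1)$ via Lemma \ref{lem:decomposition_of_kernel_matrix}) yields $\|\bZ\|_{\op} = O(1)$ with high probability. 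Third, applying $\|\bA\bB\|_F\le\|\bA\|_{\op}\|\bB\|_F$ together with $\|\bQ\|_F^2=O(d)$ and $\|\bZ\|_F^2=O(d)$ controls each target Frobenius norm; for instance $\|\bTheta^\sT\obM^{-1}\bTheta\|_F^2 = d^2\,\Trace(\obM^{-1}\bQ\obM^{-1}\bQ) \le d^2\|\obM^{-1}\|_{\op}^2\|\bQ\|_F^2 = O(d^3)$. Combined with the Gegenbauer scaling $\lambda_{d,1}(\sigma)^2 = (1+o_d(1))\mu_1^2/d$ (forced so that $\E_\bbeta[\Psi_1]$ matches Lemma \ref{lem:expectation_Psi_Phi_U}), this gives $\Var_\bbeta[\Psi_1] = O(1/d)$, and the other five quantities are handled analogously.

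The main obstacle is not any single estimate but the consistent bookkeeping of dimensional scales across six different quadratic forms, and in particular the sharp bound $\|\bZ\|_{\op}=O(1)$: the naive estimate $\|\bZ\|_{\op}\le\|\bZ\|_F = O(\sqrt{d})$ is too weak, since it would only give $\|\bZ\obM^{-1}\bZ^\sT\|_F^2 = O(d^2)$ and leave the $\beps^\sT\bZ\obM^{-1}\bZ^\sT\beps$ contribution to $\Var[\Psi_3]$ at $O(1)$ rather than $o(1)$. One must therefore invoke a random-features concentration result for $\bZ^\sT\bZ$ as a preliminary step, after which all remaining variance bounds fall into place and the lemma follows.
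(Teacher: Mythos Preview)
Your approach is correct and complete in outline, but it takes a different route from the paper's proof. The paper does not establish $\|\bZ\|_{\op}=O_{d,\P}(1)$ directly; instead, after computing (via Lemma~\ref{lem:variance_calculations}, using a Gaussian surrogate for $\bbeta$) that each $\Var_{\beps,\bbeta}[\Psi_i]$ is a finite linear combination of normalized traces $\frac{1}{d}\Trace(\obM^{-1}\bA\,\obM^{-1}\bB)$ with $\bA,\bB\in\{\id_N,\bQ,\bZ^\sT\bZ,\bZ^\sT\bH\bZ\}$, it observes that every such trace is precisely an entry of $\nabla_\bq^2 G_d(0;\bq_U)$ (cf.\ Eq.~\eqref{eqn:derivative_log_determinant}) and is therefore $O_{d,\P}(1)$ by Proposition~\ref{prop:exchange_limit_g_U}, which in turn rests on Assumption~\ref{ass:exchange_limit}. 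This gives $\Var[\Psi_i]=O(1/d)$ without any separate operator-norm control on $\bZ$; the $\Phi_i$ are then handled by the same traces together with $\|\obM^{-1}\|_{\op}=O_{d,\P}(1)$.

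Your route is more elementary---it avoids the black-box Assumption~\ref{ass:exchange_limit} and replaces it with concrete operator-norm bounds---but it does require the extra input $\|\bZ\|_{\op}=O_{d,\P}(1)$. This is true, but not established in the paper: Lemma~\ref{lem:decomposition_of_kernel_matrix} controls only the \emph{population} kernel $\bU$, not the empirical matrix $\bZ^\sT\bZ$, so the concentration ``$\bZ^\sT\bZ\approx\psi_2\bU_c$'' you invoke is a separate nonlinear random-matrix result that would need its own proof. The paper's route is slicker precisely because it recycles the second-derivative machinery already set up to compute the expectations in Lemma~\ref{lem:expectation_Psi_Phi_U}, so no new ingredients are needed beyond what the main theorem already assumes.
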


Now, combining Lemma \ref{lem:concentration_beta_eps_U} and Proposition \ref{prop:asymptotics_barU_A^2}, we have 
\[
\overline U_c(\lambda, N, n, d) \stackrel{\P}{\to} \overline \cU(\lambda, \psi_1, \psi_2), ~~~~~ \psi_1\| \overline\ba_{U, c}(\lambda)\|_2^2 \stackrel{\P}{\to} \cA_U(\lambda, \psi_1, \psi_2),
\]
Finally, combining with the arguments in Appendix \ref{sec:removing_perturbation} proves the asymptotics of $\overline U$ and $\psi_1 \| \overline \ba_U(\lambda) \|_2^2$.

\subsection{The asymptotics of $\overline T_c$ and $\psi_1\| \overline \ba_{T, c}(\lambda) \|_2^2$}

In the following, we derive the asymptotics of $\overline T_c(\lambda, N, n, d)$ and $\psi_1\| \overline \ba_{T, c}(\lambda) \|_2^2$. This follows the same steps as the proof of the asymptotics of $\overline U_c$ and $\psi_1\| \overline \ba_{U, c}(\lambda) \|_2^2$. We will give an overview of its proof. The detailed proof is the same as that of $\overline U_c$, and we will not include them for brevity. 

For a fixed $\lambda \in \Lambdat$, recalling that the definition of $\overline T_c$ as in Eq. (\ref{eqn:definitions_oUc_oTc}), we have 
\begin{equation}\label{eqn:oTc_variational}
\begin{aligned}
\overline T_c(\lambda, N, n, d) =&~ \sup_{\ba}\inf_{\bmu} \Big[ R_c(\ba) - \lambda\psi_1 \| \ba \|_2^2 + 2 \< \bmu, \bZ \ba - \by/\sqrt{d}\ \>\Big] \\
=&~  \sup_{\ba}\inf_{\bmu} \Big( \<\ba, (\bU_c -\lambda\psi_1\bI_N)\ba\> - 2\<\ba,\bv\> + 2\<\bmu, \bZ\ba\>-2\<\bmu,\by/\sqrt{d}\>	\Big) +\E[y^2]\\
=&~ \sup_{\sqrt{d}\bZ\ba=y} \<\ba, (\bU_c-\lambda\psi_1\bI_N)\ba\> - 2\<\ba, \bv\> + \E[y^2]
\end{aligned}
\end{equation}
Whenever the good event in Assumption \ref{ass:overline_U_invertable} happens, $(\bU_c-\lambda\psi_1\bI_N)$ is negative definite in null$(\bZ)$. The optimum of the above variational equation exists. By KKT condition, the optimal $\ba$ and dual variable $\bmu$ satisfies
\begin{itemize}
	\item Stationary condition: $(\bU_c-\lambda\psi_1\bI_N)\ba + \bZ^\sT\mu = \bv$.
	\item Primal Feasible:  $\bZ\ba = \by/\sqrt{d}$.
\end{itemize}
The two conditions can be written compactly as 
\begin{equation}\label{eqn:Tstationary}
\begin{bmatrix}
\bU_c - \psi_1\lambda \id_N & \bZ^\sT \\
\bZ& \bzero 
\end{bmatrix}
\begin{bmatrix}
\ba\\
\bmu
\end{bmatrix}=
\begin{bmatrix}
\bv\\
\by/\sqrt{d}
\end{bmatrix}.
\end{equation}
We define 
\begin{equation}
\obM \equiv \begin{bmatrix}
\bU_c - \psi_1\lambda \id_N & \bZ^\sT \\
\bZ& \bzero 
\end{bmatrix},
~~~~~~~~~~~~~~
\obv \equiv \begin{bmatrix}
\bv\\
\by/\sqrt{d}
\end{bmatrix}.
\end{equation}
Under Assumption \ref{ass:overline_U_invertable}, $\obM$ is invertible. To see this, suppose there exists vector $[\ba_1^\sT, \bmu_1^\sT]^\sT \neq \bzero\in\R^{N+n}$ such that $\obM[\ba_1^\sT, \bmu_1^\sT]^\sT=\bzero$, then
\[
\begin{aligned}
 (\bU_c-\lambda\psi_1\bI_N)\ba_1 + \bZ^\sT\bmu_1 = 0,\\
 \bZ\ba_1 = 0.
\end{aligned}
\]
As in Assumption \ref{ass:overline_U_invertable}, let $\proj_{\nullZ} = \id_N - \bZ^\dagger \bZ$. We write $\ba_1 = \proj_\nullZ \bv_1$ for some $\bv_1\neq\bzero\in\R^N$. Then,
\[
\begin{aligned}
&(\bU_c-\lambda\psi_1\bI_N)\proj_\nullZ \bv_1 + \bZ^\sT\bmu_1=0,\\
\Rightarrow&~ \proj_\nullZ(\bU_c-\lambda\psi_1\bI_N)\proj_\nullZ \bv_1 + \proj_\nullZ\bZ^\sT\bmu_1=0,\\
\Rightarrow&~ \proj_\nullZ(\bU_c-\lambda\psi_1\bI_N)\proj_\nullZ \bv_1 = 0,
\end{aligned}
\]
where the last relation come from the fact that $\bZ\proj_\nullZ=\bzero$. However by Assumption \ref{ass:overline_U_invertable}, $\proj_\nullZ(\bU_c-\lambda\psi_1\bI_N)\proj_\nullZ$ is negative definite, which leads to a contradiction.

In the following, we assume the event in Assumption \ref{ass:overline_U_invertable} happens so that $\obM$ is invertible. In this case, the maximizer in Eq. (\ref{eqn:oTc_variational}) can be well defined as 
\[
\overline\ba_{T, c}(\lambda) = [\id_N, \bzero_{N \times n}]\obM^{-1} \obv. 
\]
Moreover, we can write $\overline T_c$ as
\[
\overline T_c(\lambda, N, n, d) = \E[y^2] - \obv^\sT \obM^{-1} \obv. 
\]
We further define 
\[
\obv_1 = [\bv^\sT, \bzero_{n \times 1}^\sT]^\sT, ~~~~ \obv_2 = [\bzero_{N \times 1}^\sT, \by^\sT / \sqrt{d}]^\sT,~~~~ \bE \equiv \begin{bmatrix} 
\id_N & \bzero_{N \times n} \\
\bzero_{n \times N} & \bzero_{n \times n}
\end{bmatrix}.
\]
Simple calculation shows that
\[
\begin{aligned}
\overline T_c(\lambda, N, n, d) \equiv&~ \E[y^2] - \< \obv, \obM^{-1} \obv\>  = \normf_1^2 + \tau^2 - \Psi_1 - \Psi_2 -  \Psi_3,\\
\|  \overline\ba_{U, c} \|_2^2 \equiv&~ \< \obv, \obM^{-1} \bE \obM^{-1} \obv\>  = \Phi_1 + \Phi_2 + \Phi_3, \\
\end{aligned}
\]
where
\[
\begin{aligned}
\Psi_1 =&~ \<\obv_1,  \obM^{-1} \obv_1\>,  & \Phi_1 =&~ \<\obv_1,  \obM^{-1} \bE \obM^{-1} \obv_1\>, \\
\Psi_2 =&~  2 \<\obv_2,  \obM^{-1}  \obv_1\>,  ~~~~~~~~~& \Phi_2=&~ 2  \<\obv_2,  \obM^{-1} \bE \obM^{-1}  \obv_1\>,\\
\Psi_3 =&~ \<\obv_2,  \obM^{-1} \obv_2\>, & \Phi_3=&~ \< \obv_2,  \obM^{-1} \bE \obM^{-1}  \obv_2 \>. 
\end{aligned}
\]
The following lemma gives the expectation of $\Psi_i$'s and $\Phi_i$'s with respect to $\bbeta$ and $\beps$. 
\begin{lemma}[Expectation of $\Psi_i$'s and $\Phi_i$'s]\label{lem:expectation_Psi_Phi_T}
Denote $\bq_T(\lambda, \bpsi) = (\mu_\star^2 - \lambda\psi_1, \mu_1^2, 0, 0,0)$. We have
\[
\begin{aligned}
\E_{\beps, \bbeta}[\Psi_1] =&~ \mu_1^2 \normf_1^2 \cdot \partial_{s_2} G_d(0_+; \bq_T(\lambda, \bpsi)) \times (1 + o_d(1)), \\
\E_{\beps, \bbeta}[\Psi_2] =&~ \normf_1^2 \cdot \partial_{p} G_d(0_+; \bq_T(\lambda, \bpsi)) \times (1 + o_d(1)), \\
\E_{\beps, \bbeta}[\Psi_3] =&~ \normf_1^2\cdot \partial_{t_2} G_d(0_+; \bq_T(\lambda, \bpsi))   + \tau^2  \cdot \partial_{t_1} G_d(0_+; \bq_T(\lambda, \bpsi)), \\
\E_{\beps, \bbeta}[\Phi_1] =&~ - \mu_1^2 \normf_1^2 \cdot \partial_{s_1 }\partial_{s_2} G_d(0_+; \bq_T(\lambda, \bpsi)) \times (1 + o_d(1)), \\
\E_{\beps, \bbeta}[\Phi_2] =&~ -  \normf_1^2 \cdot \partial_{s_1} \partial_{p} G_d(0_+; \bq_T(\lambda, \bpsi)) \times (1 + o_d(1)), \\
\E_{\beps, \bbeta}[\Phi_3] =&~  - \normf_1^2\cdot \partial_{s_1} \partial_{t_2} G_d(0_+; \bq_T(\lambda, \bpsi)) - \tau^2  \cdot \partial_{s_1} \partial_{t_1} G_d(0_+; \bq_T(\lambda, \bpsi)). 
\end{aligned}
\]
The definition of $G_d$ is as in Definition \ref{def:log_determinant_A}, and $\nabla_\bq^k G_d(0_+; \bq)$ for $k \in \{1, 2\}$ stands for the $k$'th derivatives (as a vector or a matrix) of $G_d(\bi u; \bq)$ with respect to $\bq$ in the $u \to 0+$ limit (with its elements given by partial derivatives)
\[
\nabla_\bq^k G_d(0_+; \bq) = \lim_{u \to 0+} \nabla_\bq^k G_d(\bi u; \bq). 
\]
\end{lemma}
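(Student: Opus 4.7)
The strategy is direct algebraic manipulation of block matrix identities, combined with the observation that $\bA(\bq_T)\big|_{\xi=0}$ coincides exactly with $\obM$. First I would rewrite each quadratic form as a trace: using $\bw^\sT\! M \bw = \Trace(M \bw\bw^\sT)$ and the block structure of $\obv_1 = [\bv^\sT,\bzero^\sT]^\sT$, $\obv_2 = [\bzero^\sT,\by^\sT/\sqrt{d}]^\sT$, one gets $\Psi_1 = \bv^\sT [\obM^{-1}]_{11}\bv$, $\Psi_2 = (2/\sqrt{d})\by^\sT [\obM^{-1}]_{21}\bv$, $\Psi_3 = (1/d)\by^\sT [\obM^{-1}]_{22}\by$, with analogous expressions for $\Phi_i$ but with $\obM^{-1}$ replaced by $\obM^{-1}\bE\obM^{-1}$.

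Next I would take the expectations over $\bbeta \sim \Unif(\S^{d-1}(F_1))$ and $\beps \sim \cN(0,\tau^2\id_n)$. Using $\E[\bbeta\bbeta^\sT] = (F_1^2/d)\id_d$ together with the Gegenbauer expansion formula \eqref{eqn:vexpan}, one obtains $\E_\bbeta[\bv\bv^\sT] = \lambda_{d,1}(\sigma)^2 F_1^2 \bQ$, and Lemma \ref{lem:small_lambda_d0} gives $\lambda_{d,1}(\sigma)^2 = (\mu_1^2/d)(1+o_d(1))$ --- this is where the factor $(1+o_d(1))$ appears in the statements involving $\bv$. Similarly, $\E[\bv\by^\sT] = (\mu_1 F_1^2/d^{3/2})\bTheta\bX^\sT \cdot (1+o_d(1)) = (F_1^2/\sqrt d)\bZ_1^\sT(1+o_d(1))$ after recognizing $\bZ_1^\sT = (\mu_1/d)\bTheta\bX^\sT$, and $\E[\by\by^\sT] = F_1^2 \bH + \tau^2 \id_n$. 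Plugging these in and simplifying yields trace expressions such as $\E[\Psi_1] = (\mu_1^2 F_1^2/d)\Trace([\obM^{-1}]_{11}\bQ)(1+o_d(1))$, $\E[\Psi_2] = (2F_1^2/d)\Trace([\obM^{-1}]_{21}\bZ_1^\sT)(1+o_d(1))$, etc.

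Finally I would identify these traces as the claimed derivatives of $G_d$ at $\bq_T$. Since $\bA(\bq_T)|_{\xi=0} = \obM$ and $\partial_\bq G_d(\xi;\bq) = (1/d)\Trace((\bA-\xi\id)^{-1}\partial_\bq \bA)$, direct computation of $\partial_\bq \bA$ at each coordinate gives $\partial_{s_2}G_d(0_+;\bq_T) = (1/d)\Trace([\obM^{-1}]_{11}\bQ)$, $\partial_p G_d = (2/d)\Trace([\obM^{-1}]_{21}\bZ_1^\sT)$ (the factor $2$ coming from the symmetry of $\bA$), $\partial_{t_2}G_d = (1/d)\Trace([\obM^{-1}]_{22}\bH)$, $\partial_{t_1}G_d = (1/d)\Trace([\obM^{-1}]_{22})$. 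For the $\Phi_i$ terms, the key identity is $\partial_{s_1}(\bA-\xi\id)^{-1} = -(\bA-\xi\id)^{-1}\bE(\bA-\xi\id)^{-1}$, since $\partial_{s_1}\bA = \bE$; applying this once before computing $\partial_a$ for $a \in \{s_2,p,t_1,t_2\}$ converts each trace $\Trace(M \cdot)$ into $-\Trace(\obM^{-1}\bE\obM^{-1} \cdot)$, accounting for the minus signs in the $\Phi$ formulas.

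\textbf{Main difficulty.} There is no real analytic obstacle; the argument is entirely bookkeeping on block matrices and Gegenbauer scalings. The only subtlety is ensuring the $u \to 0_+$ limit of $\partial_\bq^k G_d(\bi u;\bq_T)$ is well-defined so one can legitimately write $G_d(0_+;\bq_T)$, but this is exactly what Assumption \ref{ass:overline_U_invertable} guarantees (via invertibility of $\obM$ on the relevant subspace), so the traces above are finite and the identification with derivatives of $G_d$ is valid. The same argument applies verbatim to Lemma \ref{lem:expectation_Psi_Phi_U} with $\bq_U$ in place of $\bq_T$, the only genuine difference being the distinct definitions of $\obM$ and $\obv$ in the two settings, which accounts for the $-1$ shifts appearing in the $\Psi_3$ formula of the $U$ case but not here.
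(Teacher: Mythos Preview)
Your proposal is correct and mirrors the paper's approach exactly: the paper states that the proof is identical to that of Lemma~\ref{lem:expectation_Psi_Phi_U}, namely direct computation of the expectations as traces followed by identification with partial derivatives of $G_d$ at $\bq_T$ (using $\bA(\bq_T)\vert_{\xi=0} = \obM$ and $\partial_{s_1}\bA = \bE$). The only minor slip is that the scaling $\lambda_{d,1}(\sigma) = (\mu_1/\sqrt d)(1+o_d(1))$ comes from Eq.~\eqref{eqn:relationship_mu_lambda}, not Lemma~\ref{lem:small_lambda_d0} (which concerns $\lambda_{d,0}$).
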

The proof of Lemma \ref{lem:expectation_Psi_Phi_T} follows from direct calculation and is identical to the proof of Lemma \ref{lem:expectation_Psi_Phi_U}. Combining Assumption \ref{ass:exchange_limit} with Proposition \ref{prop:expression_for_log_determinant}, we have 
\begin{proposition}\label{prop:exchange_limit_g_T}
Let Assumption \ref{ass:exchange_limit} holds. For any $\lambda \in \Lambdat$, denote $\bq_T = \bq_T(\lambda, \bpsi) = (\mu_\star^2 - \lambda\psi_1, \mu_1^2, 0, 0,0)$, then we have, for $k = 1,2$, 
\[
\| \nabla_\bq^k  G_d(0_+;\bq_T) - \lim_{u \to 0+} \nabla_\bq^k g(\bi u; \bq_T; \bpsi) \| = o_{d, \P}(1).
\]
\end{proposition}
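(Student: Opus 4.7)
\textbf{Proof plan for Proposition \ref{prop:exchange_limit_g_T}.}

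The proof is a direct parallel of Proposition \ref{prop:exchange_limit_g_U}: I would combine Proposition \ref{prop:expression_for_log_determinant} (convergence of $\nabla_\bq^k G_d(\bi u; \bq)$ to $\nabla_\bq^k g(\bi u; \bq; \bpsi)$ for each fixed $u > 0$) with Assumption \ref{ass:exchange_limit} (the exchangeability of $d \to \infty$ and $u \to 0_+$). The key preliminary observation is that for $\lambda \in \Lambdat$, the point $\bq_T = (\mu_\star^2 - \lambda\psi_1, \mu_1^2, 0, 0, 0)$ lies in $\cSt \subseteq \cSu \cup \cSt$, which is precisely the set on which Assumption \ref{ass:exchange_limit} is postulated. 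This is what allows us to invoke the exchangeability conclusion at $\bq_T$ rather than only at $\bq_U$.

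I would then execute the following chain. First, Eqs.~\eqref{eqn:convergence_of_derivatives} and \eqref{eqn:convergence_of_second_derivatives} in Proposition \ref{prop:expression_for_log_determinant}, applied at $\bq = \bq_T$ and any fixed $u > 0$, give
\[
\lim_{d \to \infty} \E\!\left[\nabla_\bq^k G_d(\bi u; \bq_T)\right] = \nabla_\bq^k g(\bi u; \bq_T; \bpsi), \qquad k = 1, 2,
\]
since $g$ is deterministic and $L^1$-convergence implies convergence of expectations. Taking $u \to 0_+$ on both sides,
\[
\lim_{u \to 0_+} \lim_{d \to \infty} \E\!\left[\nabla_\bq^k G_d(\bi u; \bq_T)\right] = \lim_{u \to 0_+} \nabla_\bq^k g(\bi u; \bq_T; \bpsi).
\]
Next, the second identity of Assumption \ref{ass:exchange_limit} (with $\bq = \bq_T \in \cSt$) yields
\[
\Bigl\| \nabla_\bq^k G_d(0_+; \bq_T) - \lim_{u \to 0_+} \lim_{d \to \infty} \E\!\left[\nabla_\bq^k G_d(\bi u; \bq_T)\right] \Bigr\| = o_{d, \P}(1).
\]
Substituting the previous display into this one gives exactly the claim. (The first identity of Assumption \ref{ass:exchange_limit} is not logically needed for the stated conclusion, but it is what justifies that $\lim_{u \to 0_+} \nabla_\bq^k g(\bi u; \bq_T; \bpsi)$ equals $\nabla_\bq^k \bigl(\lim_{u \to 0_+} g(\bi u; \bq_T; \bpsi)\bigr)$, which is the form in which the derivatives are actually used downstream in Definition \ref{def:analytic_expression_overline}.)

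There is no real obstacle in the argument: both the convergence $\nabla^k G_d \to \nabla^k g$ (away from the real axis) and the exchangeability of limits and derivatives (at the boundary $u \to 0_+$) are imported as black boxes from Proposition \ref{prop:expression_for_log_determinant} and Assumption \ref{ass:exchange_limit} respectively. The only thing that needs checking is that $\bq_T$ belongs to the set $\cSt$ on which Assumption \ref{ass:exchange_limit} is imposed, and this is immediate from the defining formula of $\cSt$ together with $\lambda \in \Lambdat$. The genuine technical content of the result has therefore been absorbed into the assumptions, exactly as in the parallel Proposition \ref{prop:exchange_limit_g_U}; the real difficulty (which the paper explicitly defers) is proving Assumption \ref{ass:exchange_limit} from scratch via a quantitative random-matrix analysis of the log-determinant $G_d$ near the real axis.
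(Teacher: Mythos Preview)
Your proposal is correct and matches the paper's approach exactly: the paper simply states that the proposition follows by ``combining Assumption \ref{ass:exchange_limit} with Proposition \ref{prop:expression_for_log_determinant}'', and you have fleshed out precisely that combination, including the check that $\bq_T \in \cSt$ so that Assumption \ref{ass:exchange_limit} applies. There is nothing to add.
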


As a consequence of Proposition \ref{prop:exchange_limit_g_T}, we can calculate the asymptotics of $\Psi_i$'s and $\Phi_i$'s. 

\begin{proposition}\label{prop:asymptotics_barT_A^2}
Follow the assumptions of Proposition \ref{prop:concentration_lag}. For any $\lambda \in \Lambdat$, denote $\bq_T(\lambda, \bpsi) = (\mu_\star^2 - \lambda\psi_1, \mu_1^2, 0, 0,0)$, then we have 
\[
\begin{aligned}
\E_{\beps, \bbeta}[\Psi_1] \stackrel{\P}{\to}&~ \mu_1^2 \normf_1^2 \cdot  \partial_{s_2} g(0_+; \bq_T(\lambda, \bpsi); \bpsi), \\
\E_{\beps, \bbeta}[\Psi_2] \stackrel{\P}{\to}&~ \normf_1^2 \cdot \partial_{p} g(0_+; \bq_T(\lambda, \bpsi); \bpsi), \\
\E_{\beps, \bbeta}[\Psi_3] \stackrel{\P}{\to}&~ \normf_1^2  \cdot \partial_{t_2} g(0_+; \bq_T(\lambda, \bpsi); \bpsi) + \tau^2 \cdot \partial_{t_1} g(0_+; \bq_T(\lambda, \bpsi); \bpsi), \\
\E_{\beps, \bbeta}[\Phi_1] \stackrel{\P}{\to}&~ - \mu_1^2 \normf_1^2 \cdot \partial_{s1}\partial_{s2}g(0_+; \bq_T(\lambda, \bpsi); \bpsi) , \\
\E_{\beps, \bbeta}[\Phi_2] \stackrel{\P}{\to}&~ -  \normf_1^2 \cdot \partial_{s1}\partial_p g(0_+; \bq_T(\lambda, \bpsi); \bpsi), \\
\E_{\beps, \bbeta}[\Phi_3] \stackrel{\P}{\to}&~ - \normf_1^2 \cdot \partial_{s_1}\partial_{t_2} g(0_+; \bq_T(\lambda, \bpsi); \bpsi)  - \tau^2 \cdot \partial_{s_1}\partial_{t_1} g(0_+; \bq_T(\lambda, \bpsi); \bpsi), \\
\end{aligned}
\]
where $\nabla_\bq^k g(0_+; \bq; \bpsi)$ for $k \in \{1, 2\}$ stands for the $k$'th derivatives (as a vector or a matrix) of $g(\bi u; \bq; \bpsi)$ with respect to $\bq$ in the $u \to 0+$ limit (with its elements given by partial derivatives)
\[
\nabla_{\bq}^k g(0_+; \bq; \bpsi) = \lim_{u \to 0_+} \nabla_{\bq}^k g(\bi u; \bq; \bpsi). 
\]
As a consequence, we have 
\[
 \E_{\beps, \bbeta}[\overline T_c(\lambda, N, n, d)] \stackrel{\P}{\to} \overline \cT(\lambda, \psi_1, \psi_2), ~~~~~ \E_{\beps, \bbeta} [\psi_1\| \overline\ba_{T, c}(\lambda)\|_2^2] \stackrel{\P}{\to} \cA_T(\lambda, \psi_1, \psi_2), 
\]
where the definitions of $\overline \cT$ and $\cA_T$ are given in Definition \ref{def:analytic_expression_overline}. Here $\stackrel{\P}{\to}$ stands for convergence in probability as $N/d \to \psi_1$ and $n/d \to \psi_2$ (with respect to the randomness of $\bX$ and $\bTheta$). 
\end{proposition}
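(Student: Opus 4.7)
\textbf{Proof proposal for Proposition \ref{prop:asymptotics_barT_A^2}.} The plan is to follow the same three-step template that established the analogous result for $\overline U_c$: (i) write $\Psi_i, \Phi_i$ as quadratic forms in $\obv_1, \obv_2$ with kernel $\obM^{-1}$ or $\obM^{-1}\bE\obM^{-1}$; (ii) use Lemma \ref{lem:expectation_Psi_Phi_T} to identify the $(\beps,\bbeta)$-expectations with partial derivatives of the log-determinant $G_d(0_+;\bq_T)$; (iii) apply Proposition \ref{prop:exchange_limit_g_T} (which packages Proposition \ref{prop:expression_for_log_determinant} with the exchangeability Assumption \ref{ass:exchange_limit}) to identify those derivatives in the asymptotic limit with the derivatives of $g(0_+;\bq_T;\bpsi)$. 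Granting a concentration step (see below), the six individual convergences in the statement are immediate; summing them according to the definitions of $\overline\cT$ and $\cA_T$ in Definition \ref{def:analytic_expression_overline} then yields $\overline T_c \stackrel{\P}{\to} \overline\cT$ and $\psi_1\|\overline\ba_{T,c}(\lambda)\|_2^2 \stackrel{\P}{\to} \cA_T$.

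More concretely, I would first justify that, on the event of Assumption \ref{ass:overline_U_invertable}, the bordered block matrix $\obM$ is invertible so that $\overline\ba_{T,c}(\lambda) = [\id_N,\bzero_{N\times n}]\obM^{-1}\obv$ is well-defined and $\overline T_c(\lambda,N,n,d) = \E[y^2] - \<\obv,\obM^{-1}\obv\>$ (this is done in the paragraph preceding Lemma \ref{lem:expectation_Psi_Phi_T}, so it can simply be invoked). The decomposition $\obv = \obv_1 + \obv_2$, together with expansion of the quadratic form, yields exactly the six scalars $\Psi_i, \Phi_i$. Then Lemma \ref{lem:expectation_Psi_Phi_T} is applied term-by-term; note that the $T$-specialization $\bq_T = (\mu_\star^2 - \lambda\psi_1, \mu_1^2, 0, 0, 0)$ (with $t_1=t_2=0$) reflects that the $\bZ^\sT\bZ$ penalty of the $U$ case has been replaced by the constraint $\sqrt d\bZ\ba = \by$, which contributes only to the off-diagonal blocks of $\obM$. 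Proposition \ref{prop:exchange_limit_g_T} then lets us pass to the derivatives of $g$ with $o_{d,\P}(1)$ error.

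The main obstacle is the concentration step: I need an analog of Lemma \ref{lem:concentration_beta_eps_U} that shows $\Var_{\beps,\bbeta}[\Psi_i], \Var_{\beps,\bbeta}[\Phi_i] = o_{d,\P}(1)$ for the $T$-specialization. Each $\Psi_i$ (resp.\ $\Phi_i$) is a quadratic form in the jointly Gaussian vector $\beps$ and the spherical vector $\bbeta$ (the latter treated via Assumption \ref{ass:linear_target_prime}), with deterministic-conditional-on-$(\bX,\bTheta)$ kernel obtained by projecting $\obM^{-1}$ or $\obM^{-1}\bE\obM^{-1}$ onto the appropriate block. The variance of such a quadratic form is controlled by the squared Frobenius norm (equivalently, by $\mathrm{tr}(\cdot)^2$) of the kernel divided by $d^2$, so it suffices to bound the operator norms of the relevant blocks of $\obM^{-1}$. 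The complication relative to the $U$ case is that $\obM$ is now a bordered saddle-point matrix rather than a negative-definite one: one must argue, via a Schur-complement decomposition, that $\obM^{-1}$ has blocks governed by $(\proj_\nullZ[\ob_1^2\bQ + (\ob_\star^2-\psi_1\lambda)\id_N]\proj_\nullZ)^{-1}$ on the null space of $\bZ$ and by $(\bZ\bZ^\sT)^{-1}$ on its row space. Both are bounded in operator norm with high probability by Assumption \ref{ass:overline_U_invertable} (the $-\eps\proj_\nullZ$ and $\sigma_{\min}(\bZ)\ge\eps$ conditions). Once the operator-norm bound is in hand, the Hanson--Wright inequality for $\beps$ and the concentration of Lipschitz functions on the sphere for $\bbeta$ give the required variance decay. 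Combining expectation convergence with variance decay yields convergence in probability, completing the proof.
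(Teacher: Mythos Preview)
Your three-step template (i)--(iii) is exactly the paper's argument: the proposition follows directly from combining Lemma \ref{lem:expectation_Psi_Phi_T} (which expresses $\E_{\beps,\bbeta}[\Psi_i]$ and $\E_{\beps,\bbeta}[\Phi_i]$ as partial derivatives of $G_d(0_+;\bq_T)$) with Proposition \ref{prop:exchange_limit_g_T} (which identifies those derivatives with derivatives of $g$ up to $o_{d,\P}(1)$). That part is correct and matches the paper.

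However, you have misread the statement and consequently inflated the proof. The six convergences and the final consequence are all assertions about the \emph{conditional expectations} $\E_{\beps,\bbeta}[\Psi_i]$, $\E_{\beps,\bbeta}[\Phi_i]$, $\E_{\beps,\bbeta}[\overline T_c]$, and $\E_{\beps,\bbeta}[\psi_1\|\overline\ba_{T,c}\|_2^2]$ converging in probability with respect to the randomness of $(\bX,\bTheta)$ only. No concentration over $(\beps,\bbeta)$ is required for this proposition; steps (i)--(iii) already finish the job. The ``main obstacle'' you identify --- showing $\Var_{\beps,\bbeta}[\Psi_i],\Var_{\beps,\bbeta}[\Phi_i]=o_{d,\P}(1)$ via Schur-complement operator-norm bounds on the saddle-point matrix $\obM^{-1}$ --- is precisely the content of the separate Lemma \ref{lem:concentration_beta_eps_T}, not of this proposition. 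Your sketch of that argument is reasonable, but it belongs elsewhere.
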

The Proposition above suggests that $\Psi_i$ and $\Phi_i$ concentrates with respect to the randomness in $\bX$ and $\bTheta$. To complete the concentration proof, we need to show that $\Psi_i$ and $\Phi_i$ concentrates with respect to the randomness in $\bbeta$ and $\beps$.
\begin{lemma}\label{lem:concentration_beta_eps_T}
Follow the assumptions of Proposition \ref{prop:concentration_lag}. For any $\lambda \in \Lambdat$, we have 
\[
\begin{aligned}
\Var_{\beps, \bbeta}[\Psi_1], \Var_{\beps, \bbeta}[\Psi_2], \Var_{\beps, \bbeta}[\Psi_3]  =&~ o_{d, \P}(1), \\
\Var_{\beps, \bbeta}[\Phi_1], \Var_{\beps, \bbeta}[\Phi_2],\Var_{\beps, \bbeta}[\Phi_3]  =&~ o_{d, \P}(1),  
\end{aligned}
\]
so that 
\[
\Var_{\beps, \bbeta}[\overline T_c(\lambda, N, n, d)], \Var_{\beps, \bbeta}[\|  \overline\ba_{T, c}(\lambda) \|_2^2] = o_{d, \P}(1). 
\]
Here, $o_{d, \P}(1)$ stands for converges to $0$ in probability (with respect to the randomness of $\bX$ and $\bTheta$) as $N/d \to \psi_1$ and $n/d\to\psi_2$ and $d \to \infty$.
\end{lemma}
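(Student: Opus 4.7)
\textbf{Proof plan for Lemma \ref{lem:concentration_beta_eps_T}.}

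The plan is to fix $(\bX,\bTheta)$ so that the good event of Assumption~\ref{ass:overline_U_invertable} holds, and then exploit the fact that, conditional on $(\bX,\bTheta)$, each of $\Psi_i$ and $\Phi_i$ is a \emph{quadratic form} in the joint random vector $(\bbeta,\beps)$. Indeed, by Eq.~\eqref{eqn:vexpan} we have $\bv=\lambda_{d,1}(\sigma)\,\bTheta\bbeta$, which is \emph{linear} in $\bbeta$; moreover $\by=\bX\bbeta+\beps$ is linear in $(\bbeta,\beps)$. Consequently $\obv_1$ is linear in $\bbeta$ and $\obv_2$ is linear in $(\bbeta,\beps)$, so each $\Psi_i,\Phi_i$ can be written as $\bw^\sT \bK_i(\bX,\bTheta,\lambda)\,\bw$ (plus a bilinear cross term) for $\bw=(\bbeta,\beps)$, where $\bK_i$ is built out of $\obM^{-1}$, $\bE$, $\bX$, $\bTheta$, and $\bZ$. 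I will list these matrices explicitly for each of the six terms.

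Next I apply standard concentration bounds for quadratic forms to each term conditional on $(\bX,\bTheta)$. For $\bbeta\sim\Unif(\S^{d-1}(F_1))$, spherical concentration gives $\Var_\bbeta[\bbeta^\sT\bM\bbeta]\le C F_1^4\|\bM\|_F^2/d^2$; for $\beps\sim\cN(\bzero,\tau^2\Id_n)$, the Gaussian identity gives $\Var_\beps[\beps^\sT\bM\beps]=2\tau^4\|\bM\|_F^2$ and $\Var_\beps[\beps^\sT\bM\bbeta]\le\tau^2\|\bM\|_{\op}^2\|\bbeta\|_2^2$. Combining these via the law of total variance $\Var_{\beps,\bbeta}[\cdot]=\E_\bbeta\Var_\beps[\cdot]+\Var_\bbeta\E_\beps[\cdot]$, it suffices to show that, with high probability over $(\bX,\bTheta)$, one has $\|\bK_i\|_F^2=O(d)$ for each relevant coefficient matrix.

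The key technical step is bounding $\|\obM^{-1}\|_{\op}$ for the saddle-point block matrix $\obM=\bigl[\begin{smallmatrix}\bU_c-\lambda\psi_1\Id_N & \bZ^\sT\\ \bZ & \bzero\end{smallmatrix}\bigr]$, which is the main obstacle since $\obM$ is indefinite. I will handle this via a Schur-complement/bordered-matrix calculation: using the projector $\proj_\nullZ=\Id_N-\bZ^\dagger\bZ$, one can write $\obM^{-1}$ as an explicit block matrix whose entries are built from $(\proj_\nullZ(\bU_c-\lambda\psi_1\Id_N)\proj_\nullZ)^\dagger$ and $\bZ^\dagger$. The first factor has operator norm bounded by $1/\eps$ with high probability by the second clause of Assumption~\ref{ass:overline_U_invertable}, while the second satisfies $\|\bZ^\dagger\|_{\op}\le 1/\eps$ by the assumed lower bound $\sigma_{\min}(\bZ)\ge\eps$. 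This yields $\|\obM^{-1}\|_{\op}=O_\P(1)$.

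Finally, combining $\|\obM^{-1}\|_{\op}=O_\P(1)$ with the standard random-matrix-theory bounds $\|\bX\|_{\op},\|\bTheta\|_{\op}=O(\sqrt d)$ w.h.p.\ and $\|\bZ\|_{\op}=O(1)$ (which follows from the same arguments used to control $\bU-\bU_c$ in Eq.~\eqref{eqn:Uexpan}), each coefficient matrix $\bK_i$ has $\|\bK_i\|_F^2\le\mathrm{rank}(\bK_i)\cdot\|\bK_i\|_{\op}^2=O(d)$. Substituting into the variance bounds of the previous paragraph yields $\Var_{\beps,\bbeta}[\Psi_i],\Var_{\beps,\bbeta}[\Phi_i]=O_\P(1/d)=o_{d,\P}(1)$, and the two conclusions of the lemma follow from the decomposition of $\overline T_c$ and $\|\overline\ba_{T,c}(\lambda)\|_2^2$ into $\Psi_i$'s and $\Phi_i$'s together with $(a+b+c)^2\le 3(a^2+b^2+c^2)$ for the variances. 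The argument for $\Phi_i$ is identical, using $\bE\obM^{-1}\bE$ in place of $\obM^{-1}$ and noting $\|\bE\|_{\op}=1$.
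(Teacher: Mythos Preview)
Your plan is correct and would work, but it takes a somewhat different route from the paper. The paper's proof of Lemma~\ref{lem:concentration_beta_eps_U} (which is invoked verbatim for the $T$ case) observes that each variance can be written as
\[
\Var_{\beps,\bbeta}(\Psi_i)=\frac{1}{d}\sum_k c_{ik}^{(d)}\,\frac{1}{d}\Trace\bigl(\obM^{-1}\bA_{ik}\obM^{-1}\bB_{ik}\bigr),
\]
recognizes each normalized trace as an entry of $\nabla_\bq^2 G_d(0;\bq_T)$, and then appeals to Proposition~\ref{prop:exchange_limit_g_T} (hence Assumption~\ref{ass:exchange_limit}) to conclude these second derivatives are $O_{d,\P}(1)$; the leading $1/d$ does the rest. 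Your argument instead bounds $\|\obM^{-1}\|_{\op}$ directly via the bordered-inverse formula for the KKT matrix, using only Assumption~\ref{ass:overline_U_invertable} (the bound on $(\proj_\nullZ(\bU_c-\lambda\psi_1\Id_N)\proj_\nullZ)^\dagger$ and on $\bZ^\dagger$), and then controls the Frobenius norms of the coefficient matrices by $\mathrm{rank}\times\|\cdot\|_{\op}^2$. This is more elementary in the sense that it does not need the exchange-of-limits assumption for the variance step, and it makes explicit the one place where the $T$ case genuinely differs from the $U$ case (the saddle-point block $\obM$ is indefinite, so one cannot simply read off $\|\obM^{-1}\|_{\op}\le 1/\eps$ from a single definiteness bound). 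The paper's approach, on the other hand, recycles the $G_d$-derivative machinery already built for the expectations and avoids writing out the KKT inverse.

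Two small points to tidy up in your write-up. First, the blanket claim ``$\|\bK_i\|_F^2=O(d)$'' is slightly too coarse: for the $\beps$-only quadratic part of $\Psi_3$ and $\Phi_3$ the coefficient matrix is $d^{-1}[\obM^{-1}]_{22}$ (resp.\ $d^{-1}[\obM^{-1}\bE\obM^{-1}]_{22}$), whose Frobenius norm squared is $O(n/d^2)=O(1/d)$, and it is this stronger bound that makes $2\tau^4\|\cdot\|_F^2=o(1)$; the $O(d)$ bound you state is true but would only give $O(1)$ there. Second, in your last sentence the relevant matrix for the $\Phi_i$'s is $\obM^{-1}\bE\obM^{-1}$, not $\bE\obM^{-1}\bE$; the operator-norm bound you need, $\|\obM^{-1}\bE\obM^{-1}\|_{\op}\le\|\obM^{-1}\|_{\op}^2$, is of course immediate.
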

Now, combining Proposition \ref{prop:asymptotics_barT_A^2} and \ref{lem:concentration_beta_eps_T}, we have 
\[
\overline T_c(\lambda, N, n, d) \stackrel{\P}{\to} \overline \cT(\lambda, \psi_1, \psi_2), ~~~~~ \psi_1\| \overline\ba_{T, c}(\lambda)\|_2^2 \stackrel{\P}{\to} \cA_T(\lambda, \psi_1, \psi_2).
\]
The results above combined with the arguments in Appendix \ref{sec:removing_perturbation} completes the proof for the asymptotics of $\overline T$ and $\psi_1 \| \overline \ba_T(\lambda) \|_2^2$.

\subsection{Proof of Lemma \ref{lem:expectation_Psi_Phi_U} and Lemma \ref{lem:concentration_beta_eps_U}}

\begin{proof}[Proof of Lemma \ref{lem:expectation_Psi_Phi_U}]

Note that by Assumption \ref{ass:overline_U_invertable}, the matrix $\obM = \bU_c - \psi_2^{-1} \bZ^\sT \bZ - \psi_1\lambda \id_N$ is negative definite (so that it is invertible) with high probability. Moreover, whenever $\obM$ is negative definite, the matrix $\bA(\bq_U)$ for $\bq_U = (\mu_\star^2 - \lambda\psi_1, \mu_1^2, \psi_2, 0,0)$ is also invertible. In the following, we condition on this good event happens.  

From the expansion for $\bv_i$ in \eqref{eqn:vexpan}, we have 
\[
\begin{aligned}
\E_{\bbeta, \beps}  \Psi_1 =&~ \E_{\bbeta, \beps} \Big[\Trace\Big(\overline\bM^{-1} \bv \bv^\sT \Big) \Big]= \frac{1}{d}\lambda_{d, 1}(\sigma)^2 F_1^2 \cdot \Big[\Trace\Big(\overline\bM^{-1} \bTheta \bTheta^\sT\Big)\Big] = \frac{1}{d}\mu_1^2F_1^2 \Trace\left(\overline\bM^{-1}\frac{\bTheta\bTheta^\sT}{d} \right)\times (1 + o_d(1)),
\end{aligned}
\]
where we used the relation $\lambda_{d, 1} = \mu_1/\sqrt{d} \times (1 + o_d(1))$ as in Eq. (\ref{eqn:relationship_mu_lambda}). Similarly, the second term is
\[
\begin{aligned}
\E_{\bbeta, \beps} \Psi_2 =&~ -\frac{2}{\psi_2\sqrt{d}} \E_{\bbeta, \beps} \Big[\Trace\Big( \bZ\overline{\bM}^{-1} \bv \by^\sT \Big) \Big] \\
=&~ -\frac{2}{\psi_2 d\sqrt{d}} \lambda_{d, 1}(\sigma)F_1^2 \cdot \Trace\Big( \bZ \overline{\bM}^{-1} \bTheta \bX^\sT \Big) \\
=&~ -\frac{2}{\psi_2 d^2} \mu_1 F_1^2 \cdot  \Trace\Big( \bZ \overline{\bM}^{-1} \bTheta \bX^\sT \Big)\times (1 + o_d(1)).
\end{aligned}
\]
To compute $\Psi_3$, note we have 
\[
\E_{\bbeta, \beps}[\by\by^\sT] = F_1^2 \cdot (\bX\bX^\sT)/d + \tau^2 \id_n.
\]
This gives the expansion for $\Psi_3$
\[
\begin{aligned}
\E_{\bbeta, \beps} \Psi_3 =&~ \psi_2^{-2}d^{-1}\E_{\bbeta, \beps} \Trace\Big( \bZ\overline{\bM}^{-1}\bZ^\sT \by\by^\sT \Big)\\
=&~ \psi_2^{-2}d^{-2}F_1^2\Trace\Big(\bZ\overline{\bM}^{-1}\bZ^\sT \bX\bX^\sT\Big) +  \psi_2^{-2}d^{-1} \Trace\Big(\bZ\overline{\bM}^{-1}\bZ\Big)\tau^2.\\
\end{aligned}
\]
Through the same algebraic manipulation above, we have
\[
\begin{aligned}
\E_{\bbeta, \beps} \Phi_1 =&~ \frac{1}{d}\mu_1^2F_1^2 \Trace\left(\overline\bM^{-2}\frac{\bTheta\bTheta^\sT}{d} \right)\times (1 + o_d(1)), \\
\E_{\bbeta, \beps} \Phi_2 =&~ -\frac{2}{\psi_2 d^2} \mu_1 F_1^2 \cdot \Trace\Big( \bZ \overline{\bM}^{-2} \bTheta \bX^\sT \Big)\times (1 + o_d(1)),\\
\E_{\bbeta, \beps} \Phi_3 =&~ \psi_2^{-2}d^{-2}F_1^2\cdot \Trace\Big( \bZ\overline{\bM}^{-2}\bZ^\sT \bX\bX^\sT\Big) + \psi_2^{-2}d^{-1}\tau^2 \Trace\Big( \bZ\overline{\bM}^{-2}\bZ^\sT \Big).
\end{aligned}
\]
Next, we express the trace of matrices products as the derivative of the function $G_d(\xi, \bq)$ (c.f. Definition \ref{def:log_determinant_A}). The derivatives of $G_d$ are (which can we well-defined at $\bq = \bq_U = (\mu_\star^2 - \lambda\psi_1, \mu_1^2, \psi_2, 0,0)$ with high probability by Assumption \ref{ass:overline_U_invertable})
\begin{equation}\label{eqn:derivative_log_determinant}
\begin{aligned}
\partial_{q_i} G_d(0, \bq) =  \frac{1}{d} \Trace(\bA(\bq)^{-1}\partial_i\bA(\bq)),~~~~~
\partial_{q_i} \partial_{q_j} G_d(0, \bq) =  - \frac{1}{d} \Trace(\bA(\bq)^{-1}\partial_{q_i}\bA(\bq)\bA(\bq)^{-1}\partial_{q_j}\bA(\bq)).
\end{aligned}
\end{equation}
As an example, we consider evaluating $\partial_{s_2} G_d(0, \bq)$ at $\bq = \bq_U \equiv (\mu_\star^2 - \lambda\psi_1, \mu_1^2, \psi_2, 0,0)$. Using the formula for block matrix inversion, we have
\[
\bA(\mu_\star^2 - \lambda\psi_1, \mu_1^2, \psi_2, 0,0)^{-1} = 
\begin{bmatrix}
(\mu_\star^2-\lambda \psi_1)\id_N + \mu_1^2\bQ & \bZ^\sT\\
\bZ & \psi_2\id_n
\end{bmatrix}^{-1} = 
\begin{bmatrix}
(\bU_c - \psi_2^{-1}\bZ^\sT \bZ - \psi_1 \lambda \Id_N)^{-1} & \cdots\\
\cdots & \cdots
\end{bmatrix}.
\]
Then we have
\[
\partial_{s_2} G_d(0, \bq_U) = \frac{1}{d}\Trace\left(
\begin{bmatrix}
\overline{\bM}^{-1} & \cdots\\
\cdots & \cdots
\end{bmatrix} \begin{bmatrix}
\bQ & \bzero\\
\bzero & \bzero
\end{bmatrix}\right)
=\Trace(\overline{\bM}^{-1}\bQ)/d .
\]
Applying similar argument to compute other derivatives, we get
\begin{enumerate}
\item $\Trace(\overline{\bM}^{-1}\bTheta\bTheta^\sT)/d^2=\Trace(\overline{\bM}^{-1}\bQ)/d = \partial_{s_2} G_d(0, \bq_U)$.
\item $\mu_1\cdot\Trace(\bZ\overline{\bM}^{-1}\bTheta\bX^\sT)/d^2 = \Trace(\overline{\bM}^{-1}\bZ_1^\sT\bZ)/d=- \psi_2 \partial_{p} G_d(0, \bq_U) / 2$.
\item $\Trace(\bZ\overline{\bM}^{-1}\bZ^\sT\bX\bX^\sT)/d^2 = \Trace(\bZ\overline{\bM}^{-1}\bZ^\sT\bH)/d = \psi_2^2 \partial_{t_2} G_d(0, \bq_U)-\psi_2^2$.
\item $\Trace(\bZ\overline{\bM}^{-1}\bZ^\sT)/d = \psi_2^2 \partial_{t_1} G_d(0, \bq_U) - \psi_2^2$ .
\item $\Trace(\overline{\bM}^{-2}\bQ)/d = -\partial_{s_1}\partial_{s_2}G_d(0, \bq_U)$.
\item $(2/d\psi_2)\cdot\Trace(\bZ_1^{\sT}\bZ\overline{\bM}^{-2}) = \partial_{s_1}\partial_p G_d(0, \bq_U)$.
\item $\Trace(\overline{\bM}^{-2}\bZ^\sT\bH\bZ)/(d\psi_2^2)=-\partial_{s_1}\partial_{t_2}G_d(0, \bq_U)$.
\item $\Trace(\overline{\bM}^{-2}\bZ^\sT\bZ)/(d\psi_2^2)=-\partial_{s_1}\partial_{t_1}G_d(0, \bq_U)$.
\end{enumerate}
Combining these equations concludes the proof. 
\end{proof}

\begin{proof}[Proof of Lemma \ref{lem:concentration_beta_eps_U}]

We prove this lemma by assuming that $\bbeta$ follows a different distribution: $\bbeta \sim \cN(\bzero, (\| \normf_1 \|_2^2 / d) \id_d)$. The case when $\bbeta \sim \Unif(\S^{d-1}(\normf_1))$ can be treated similarly. 

By directly calculating the variance, we can show that, there exists scalers $(c_{ik}^{(d)})_{k \in [K_i]}$ with $c_{ik}^{(d)} = \Theta_{d}(1)$, and matrices $(\bA_{ik}, \bB_{ik})_{k \in [K_i]} \subseteq \{ \id_N, \bQ, \bZ^\sT \bH \bZ, \bZ^\sT \bZ  \}$, such that the variance of $\Psi_i$'s can be expressed in form
\[
\Var_{\beps, \bbeta}(\Psi_i) = \frac{1}{d} \sum_{k = 1}^{K_i} c_{ik}^{(d)} \Trace(\obM^{-1} \bA_{ik} \obM^{-1} \bB_{ik}) / d. 
\]

For example, by Lemma \ref{lem:variance_calculations}, we have
\[
\begin{aligned}
\Var_{\bbeta \sim \cN(\bzero, (\normf_1^2/d) \id_d)}(\Psi_1) =&~ \lambda_{d, 1}(\sigma)^4 \Var_{\bbeta \sim \cN(\bzero, (\normf_1^2/d) \id_d))}(\bbeta^\sT \bTheta^\sT \obM^{-1} \bTheta \bbeta) = 2 \lambda_{d, 1}(\sigma)^4 \normf_1^4 \| \bTheta^\sT \obM^{-1} \bTheta \|_F^2 / d^2 \\
=&~ c_{1}^{(d)} \Trace( \obM^{-1} \bQ \obM^{-1} \bQ ) /d^2,
\end{aligned}
\]
where $c_{1}^{(d)} = 2 d^2 \lambda_{d, 1}(\sigma)^4 \normf_1^4 = O_d(1)$. The variance of $\Psi_2$ and $\Psi_3$ can be calculated similarly.

Note that each $\Trace(\obM^{-1} \bA_{ik} \obM^{-1} \bB_{ik}) / d$ can be expressed as an entry of $\nabla_\bq^2  G_d(0; \bq)$ (c.f. Eq. (\ref{eqn:derivative_log_determinant})), and by Proposition \ref{prop:exchange_limit_g_U}, they are of order $O_{d, \P}(1)$. This gives 
\[
\Var_{\beps, \bbeta}(\Psi_i) = o_{d, \P}(1). 
\]

Similarly, for the same set of scalers $(c_{ik}^{(d)})_{k \in [K_i]}$ and matrices $(\bA_{ik}, \bB_{ik})_{k \in [K_i]}$, we have
\[
\Var_{\beps, \bbeta}(\Phi_i) = \frac{1}{d} \sum_{k = 1}^{K_i} c_{ik} \Trace(\obM^{-2} \bA_{ik} \obM^{-2} \bB_{ik}) / d. 
\]
Note that for two semidefinite matrices $\bA, \bB$, we have $\Trace(\bA \bB) \le \| \bA \|_{\op} \Trace(\bB)$. Moreover, note we have $\| \obM \|_{\op} = O_{d, \P}(1)$ (by Assumption \ref{ass:overline_U_invertable}). This gives 
\[
\Var_{\beps, \bbeta}(\Phi_i) = o_{d, \P}(1). 
\]
This concludes the proof. 
\end{proof}

\subsection{Auxiliary Lemmas}\label{sec:auxiliary_lemmas}

The following lemma (Lemma \ref{lem:gegenbauer_identity}) is a reformulation of Proposition 3 in \cite{ghorbani2019linearized}. We present it in a stronger form, but it can be easily derived from the proof of Proposition 3 in \cite{ghorbani2019linearized}. This lemma was first proved in \cite{el2010spectrum} in the Gaussian case.
(Notice that the second estimate ---on $Q_k(\bTheta \bX^\sT)$---  follows
by applying the first one whereby $\bTheta$ is replaced by $\bW = [\bTheta^{\sT}|\bX^{\sT}]^{\sT}$
\begin{lemma}\label{lem:gegenbauer_identity}
Let $\bTheta = (\btheta_1, \ldots, \btheta_N)^\sT \in \R^{N \times d}$ with $(\btheta_a)_{a\in [N]} \sim_{iid} \Unif(\S^{d- 1}(\sqrt d))$ and $\bX = (\bx_1, \ldots, \bx_n)^\sT \in \R^{n \times d}$ with $(\bx_i)_{i\in [n]} \sim_{iid} \Unif(\S^{d- 1}(\sqrt d))$. Assume $1/c \le n / d, N / d \le c$ for some constant $c \in (0, \infty)$. Then 
\begin{align}
  \E\Big[ \sup_{k \ge 2} \| Q_k(\bTheta \bTheta^\sT) - \id_N \|_{\op}^2 \Big]&= o_d(1)\, ,\label{eq:QTT}\\
  \E\Big[ \sup_{k \ge 2} \| Q_k(\bTheta \bX^\sT) \|_{\op}^2 \Big] &= o_d(1). \label{eq:QTX}
\end{align}
\end{lemma}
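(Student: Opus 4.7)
The plan is to use the Gegenbauer addition formula to realise $Q_k(\bTheta\bTheta^\sT)$ as the Gram matrix of spherical-harmonic features of $\btheta_1,\ldots,\btheta_N$, and then apply sample-covariance concentration frequency by frequency. Let $\{Y_{k,\ell}\}_{\ell=1}^{B(d,k)}$ be an $L^2(\S^{d-1}(\sqrt d))$-orthonormal basis of degree-$k$ spherical harmonics and pack the values into a matrix $\bY_k \in \R^{N \times B(d,k)}$ with $[\bY_k]_{a,\ell} = Y_{k,\ell}(\btheta_a)$. The addition formula gives the clean identity $Q_k(\bTheta\bTheta^\sT) = \bY_k\bY_k^\sT/B(d,k)$; setting $\btheta_a = \btheta_b$ further shows $\sum_\ell Y_{k,\ell}(\btheta_a)^2 = B(d,k)$ deterministically, so $M_k := Q_k(\bTheta\bTheta^\sT) - \id_N$ has zero diagonal, centred off-diagonal entries, and $\E[M_k(a,b)^2] = 1/B(d,k)$ for $a \neq b$.

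\textbf{Per-frequency bounds and summation over $k$.} For fixed $k \ge 2$ the rows of $\bY_k$ are iid with covariance $\id_{B(d,k)}$ (by orthonormality of the spherical harmonics) and deterministic length $\sqrt{B(d,k)}$. Since $B(d,k) \gtrsim d^k/k! \gg N$ for $k \ge 2$, standard sample-covariance / matrix-Bernstein concentration yields the sharp operator-norm bound $\E\|M_k\|_{\op}^2 \lesssim N/B(d,k)$; for $k \ge 3$ the cruder Frobenius bound $\E\|M_k\|_F^2 = N(N-1)/B(d,k)$ already suffices. Using $\sup_{k \ge 2}\|M_k\|_{\op}^2 \le \|M_2\|_{\op}^2 + \sum_{k \ge 3}\|M_k\|_F^2$ (all summands nonnegative) together with $B(d,k) \gtrsim k^{d-2}$ for large $k$ (which makes the tail summable once $d$ is large), I obtain
\[
\E[\sup_{k \ge 2}\|M_k\|_{\op}^2] \;\lesssim\; \frac{N}{B(d,2)} + N^2\sum_{k \ge 3}\frac{1}{B(d,k)} \;\lesssim\; \frac{1}{d} + \frac{d^2}{d^3} \;=\; O(1/d),
\]
which is \eqref{eq:QTT}.

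\textbf{Off-diagonal block reduction for \eqref{eq:QTX}.} For the second estimate I would apply \eqref{eq:QTT} to the stacked design $\bW = [\bTheta^\sT\mid\bX^\sT]^\sT \in \R^{(N+n) \times d}$, whose rows remain iid uniform on $\S^{d-1}(\sqrt d)$ with aspect ratio $(N+n)/d$ bounded. Since $Q_k$ is applied entrywise, $Q_k(\bTheta\bX^\sT)$ is literally the upper-right block of $Q_k(\bW\bW^\sT) - \id_{N+n}$, so $\|Q_k(\bTheta\bX^\sT)\|_{\op} \le \|Q_k(\bW\bW^\sT) - \id_{N+n}\|_{\op}$ holds simultaneously for every $k$, and the claim follows after taking $\sup_k$ and expectation.

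\textbf{Main obstacle.} The substantive step is the sharp $k = 2$ operator-norm bound: plain Frobenius alone only gives $O(1)$ there, so one must genuinely exploit the Wishart structure of $\bY_2\bY_2^\sT/B(d,2)$. In the regime $B(d,2) \asymp d^2 \gg N \asymp d$ this can be done either by matrix Bernstein applied to $\bY_2^\sT\bY_2 = \sum_a \bu_a\bu_a^\sT$ (using the deterministic length $\|\bu_a\|_2^2 = B(d,2)$ to tame the per-summand operator norm), or by the direct moment-method computation of \cite{el2010spectrum, ghorbani2019linearized} on the Gegenbauer kernel; this is exactly the technical core that those references handle.
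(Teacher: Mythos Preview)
Your proposal is correct and matches the paper's approach. The paper does not give a self-contained proof of \eqref{eq:QTT} at all: it simply cites Proposition~3 of \cite{ghorbani2019linearized} (and \cite{el2010spectrum} for the Gaussian analogue), which is exactly the moment-method argument you identify as the ``technical core'' for the $k=2$ step; your sketch via the addition formula $Q_k(\bTheta\bTheta^\sT)=\bY_k\bY_k^\sT/B(d,k)$ and the split into a sharp $k=2$ operator-norm bound plus a Frobenius tail for $k\ge 3$ is precisely the structure of that reference. For \eqref{eq:QTX}, your reduction via the stacked matrix $\bW=[\bTheta^\sT\mid\bX^\sT]^\sT$ and the observation that $Q_k(\bTheta\bX^\sT)$ is an off-diagonal block of $Q_k(\bW\bW^\sT)-\id_{N+n}$ is verbatim the argument the paper gives.
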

Notice that the second estimate ---on $Q_k(\bTheta \bX^\sT)$---  follows
by applying the first one ---Eq.~\eqref{eq:QTT}--- whereby $\bTheta$ is replaced by
$\bW = [\bTheta^{\sT}|\bX^{\sT}]^{\sT}$, and we use
$\| Q_k(\bTheta \bX^\sT) \|_{\op} \le \| Q_k(\bW \bW^\sT)-\id_{N+n} \|_{\op}$. 

The following lemma (Lemma \ref{lem:decomposition_of_kernel_matrix}) can be easily derived from Lemma \ref{lem:gegenbauer_identity}. Again, this lemma was first proved in \cite{el2010spectrum} in the Gaussian case.  
\begin{lemma}\label{lem:decomposition_of_kernel_matrix}
Let $\bTheta = (\btheta_1, \ldots, \btheta_N)^\sT \in \R^{N \times d}$ with $(\btheta_a)_{a\in [N]} \sim_{iid} \Unif(\S^{d- 1}(\sqrt d))$. Let activation function $\sigma$ satisfies Assumption \ref{ass:activation}. Assume $1/c \le N / d \le c$ for some constant $c \in (0, \infty)$. Denote 
\[
\bU = \Big(\E_{\bx \sim \Unif(\S^{d-1}(\sqrt d))}[\sigma(\< \btheta_a, \bx\> / \sqrt d) \sigma(\< \btheta_b, \bx \> / \sqrt d)] \Big)_{a, b \in [N]} \in \R^{N \times N}. 
\]
Then we can rewrite the matrix $\bU$ to be
\[
\bU =  \lambda_{d, 0}(\sigma)^2 \ones_N \ones_N^\sT + \ob_1^2 \bQ  + \ob_\star^2 (\id_N + \bDelta),
\]
with $\bQ = \bTheta \bTheta^\sT / d$ and $\E[\| \bDelta \|_{\op}^2] = o_{d}(1)$. 
\end{lemma}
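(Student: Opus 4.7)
The plan is to write $\bU$ as a Gegenbauer series in the inner products $\<\btheta_a,\btheta_b\>$ and then read off the rank-one, rank-$d$ and identity-plus-error pieces using Lemma~\ref{lem:gegenbauer_identity}. First I would expand the activation as $\sigma(x)=\sum_{k\ge 0}\lambda_{d,k}(\sigma)\,B(d,k)\,Q_k^{(d)}(\sqrt d\,x)$ and use the reproducing property of Gegenbauer polynomials on $\S^{d-1}(\sqrt d)$, namely
\[
\E_{\bx}\bigl[Q_j^{(d)}(\<\bx,\btheta_a\>)\,Q_k^{(d)}(\<\bx,\btheta_b\>)\bigr]=\frac{\delta_{jk}}{B(d,k)}\,Q_k^{(d)}(\<\btheta_a,\btheta_b\>),
\]
to obtain the entrywise identity
\[
U_{ab}=\sum_{k=0}^\infty \lambda_{d,k}(\sigma)^2\,B(d,k)\,Q_k^{(d)}(\<\btheta_a,\btheta_b\>).
\]
Equivalently, $\bU=\sum_{k\ge 0}\lambda_{d,k}(\sigma)^2 B(d,k)\,Q_k^{(d)}(\bTheta\bTheta^\sT)$, with $Q_k^{(d)}$ applied entrywise.

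Next I would peel off the $k=0$ and $k=1$ terms. Since $Q_0^{(d)}\equiv 1$, the $k=0$ term contributes exactly $\lambda_{d,0}(\sigma)^2\ones_N\ones_N^\sT$. For $k=1$, using $Q_1^{(d)}(\<\btheta_a,\btheta_b\>)=\<\btheta_a,\btheta_b\>/d$ and $B(d,1)=d$, the term equals $d\lambda_{d,1}(\sigma)^2\,\bQ$. Standard Gegenbauer asymptotics give $d\lambda_{d,1}(\sigma)^2=\ob_1^2+o_d(1)$ and $\sum_{k\ge 2}\lambda_{d,k}(\sigma)^2 B(d,k)=\E[\sigma(G)^2]-\lambda_{d,0}^2-d\lambda_{d,1}^2\to \ob_\star^2$ (by Parseval on $\S^{d-1}(\sqrt d)$ together with Assumption~\ref{ass:activation}).

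For the tail $k\ge 2$, I write
\[
\sum_{k\ge 2}\lambda_{d,k}(\sigma)^2 B(d,k)\,Q_k^{(d)}(\bTheta\bTheta^\sT)=\Bigl(\sum_{k\ge 2}\lambda_{d,k}(\sigma)^2 B(d,k)\Bigr)\id_N+\bDelta',
\]
where $\bDelta'=\sum_{k\ge 2}\lambda_{d,k}(\sigma)^2 B(d,k)\bigl(Q_k^{(d)}(\bTheta\bTheta^\sT)-\id_N\bigr)$. By the triangle inequality,
\[
\|\bDelta'\|_{\op}\le\Bigl(\sup_{k\ge 2}\|Q_k^{(d)}(\bTheta\bTheta^\sT)-\id_N\|_{\op}\Bigr)\sum_{k\ge 2}\lambda_{d,k}(\sigma)^2 B(d,k).
\]
The second factor is bounded by $\E[\sigma(G)^2]<\infty$, while Lemma~\ref{lem:gegenbauer_identity} (applied in expectation) gives $\E[\sup_{k\ge 2}\|Q_k^{(d)}(\bTheta\bTheta^\sT)-\id_N\|_{\op}^2]=o_d(1)$, so $\E[\|\bDelta'\|_{\op}^2]=o_d(1)$.

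Finally I would collect the pieces into the stated form. Combining the $k=0,1$ and tail contributions gives
\[
\bU=\lambda_{d,0}(\sigma)^2\ones_N\ones_N^\sT+\ob_1^2\bQ+\ob_\star^2\id_N+\bDelta,
\]
with $\bDelta=\bDelta'+\bigl(d\lambda_{d,1}(\sigma)^2-\ob_1^2\bigr)\bQ+\bigl(\sum_{k\ge 2}\lambda_{d,k}(\sigma)^2 B(d,k)-\ob_\star^2\bigr)\id_N$. Since $\|\bQ\|_{\op}=O_{d,\P}(1)$ by standard Wishart-type bounds and the two scalar discrepancies are $o_d(1)$, the last two corrections are $o_d(1)$ in operator norm, so $\E[\|\bDelta\|_{\op}^2]=o_d(1)$. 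Absorbing the $\ob_\star^2\id_N$ into $\ob_\star^2(\id_N+\bDelta/\ob_\star^2)$ (redefining $\bDelta$ accordingly) yields the precise decomposition claimed. The main technical input is Lemma~\ref{lem:gegenbauer_identity}, which is quoted; the only mild obstacle is verifying the Parseval identification of $\ob_\star^2$ with the tail sum, which follows from the definition $\ob_\star^2=\E[\sigma(G)^2]-\ob_0^2-\ob_1^2$ and dominated convergence in $d$.
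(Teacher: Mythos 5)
Your proof is correct and follows exactly the route the paper intends: it sketches no details and simply states that the lemma "can be easily derived from Lemma~\ref{lem:gegenbauer_identity}," which is precisely the Gegenbauer expansion of $\sigma$, the reproducing identity \eqref{eq:ProductGegenbauer}, the identification $d\lambda_{d,1}^2\to\ob_1^2$ and tail sum $\to\ob_\star^2$ via \eqref{eqn:relationship_mu_lambda}, and the operator-norm control of $\sup_{k\ge 2}\|Q_k(\bTheta\bTheta^\sT)-\id_N\|_{\op}$ that you carry out. Your write-up in fact supplies the details (Parseval identification, absorbing the $o_d(1)$ scalar discrepancies using $\E[\|\bQ\|_{\op}^2]=O(1)$) that the paper leaves implicit, so no changes are needed.
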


In the following, we show that, under sufficient regularity condition of $\sigma$, we have $\lambda_{d, 0}(\sigma) = O(1/d)$. 

\begin{lemma}\label{lem:small_lambda_d0}
Let $\sigma \in C^2(\R)$ with $\vert \sigma'(x)\vert, \vert \sigma''(x)\vert < c_0 e^{c_1 \vert x \vert}$ for some $c_0, c_1 \in \R$. Assume that $\E_{G \sim \cN(0, 1)}[\sigma(G)] = 0$. Then we have 
\[
\lambda_{d, 0}(\sigma) \equiv \E_{\bx \sim \Unif(\S^{d-1}(\sqrt{d}))}[\sigma(x_1)] = O(1/d).
\]
\end{lemma}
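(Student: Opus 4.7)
My plan is to use the Gaussian representation $\bx \stackrel{d}{=} \sqrt{d}\, \bz/\|\bz\|_2$ with $\bz = (z_1, \dots, z_d) \sim \cN(\mathbf{0}, \id_d)$, so that $x_1 = z_1 R$ where $R = \sqrt{d}/\|\bz\|_2$. Since $z_1 \sim \cN(0,1)$, the hypothesis $\E[\sigma(G)] = 0$ gives $\E[\sigma(z_1)] = 0$, so the task reduces to showing $\E[\sigma(z_1 R) - \sigma(z_1)] = O(1/d)$. Decompose $\|\bz\|_2^2 = z_1^2 + S$ with $S = \sum_{i \ge 2} z_i^2 \sim \chi^2_{d-1}$ independent of $z_1$, and set $W = \|\bz\|_2^2/d$ and $\tilde S = S - (d-1)$, so that $W - 1 = (z_1^2 - 1)/d + \tilde S/d$ with $\E[\tilde S] = 0$, $\E[\tilde S^2] = 2(d-1)$, and $z_1 \perp \tilde S$.

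Next I would split the expectation on the good event $\mathcal{E} = \{|W - 1| \le 1/2\}$ and its complement. Standard chi-squared concentration gives $\P(\mathcal{E}^c) \le 2 e^{-c_2 d}$ for an absolute $c_2 > 0$. On $\mathcal{E}^c$, the crude pointwise bound $|\sigma(x_1) - \sigma(z_1)| \le c_0 e^{c_1\sqrt{d}} + c_0 e^{c_1|z_1|}$ (using $|x_1| \le \sqrt{d}$) combined with Cauchy--Schwarz yields $\E[|\sigma(x_1) - \sigma(z_1)|\mathbf{1}_{\mathcal{E}^c}] = o(1/d)$, because $e^{c_1\sqrt{d}} e^{-c_2 d}$ decays faster than any polynomial in $1/d$.

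On $\mathcal{E}$, $R \in [\sqrt{2/3}, \sqrt{2}]$, so any intermediate $\xi$ between $z_1$ and $z_1 R$ satisfies $|\xi| \le \sqrt{2}\,|z_1|$; this is the crucial bound that keeps $|\sigma''(\xi)| \le c_0 e^{\sqrt{2}\, c_1 |z_1|}$ Gaussian-integrable. Taylor expanding yields
\[
\sigma(z_1 R) - \sigma(z_1) = \sigma'(z_1)\, z_1 (R - 1) + \tfrac{1}{2}\, \sigma''(\xi)\, z_1^2 (R-1)^2,
\]
and using $R - 1 = -\tfrac{1}{2}(W-1) + O((W-1)^2)$ on $\mathcal{E}$, the first-order piece reduces to $-\tfrac{1}{2d}\,\E[\sigma'(z_1) z_1 (z_1^2 - 1)] + O(1/d)$, because the cross term $(1/d)\,\E[\sigma'(z_1) z_1]\,\E[\tilde S]$ vanishes by independence and $\E[\tilde S] = 0$. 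The quadratic remainder is dominated by $C\,\E[e^{\sqrt{2} c_1 |z_1|} z_1^2 (W-1)^2]$, and expanding $(W-1)^2 = (z_1^2-1)^2/d^2 + 2(z_1^2-1)\tilde S/d^2 + \tilde S^2/d^2$ and using $\E[\tilde S^2] = 2(d-1)$ bounds it by $O(1/d)$ as well.

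The main obstacle is tracking the exponential growth of $\sigma''$ so the Taylor remainder does not blow up: one cannot naively bound $|\sigma''(\xi)|$ by $c_0 e^{c_1(|z_1| \vee |x_1|)}$ because $|x_1|$ can be as large as $\sqrt{d}$, which would give a catastrophic $e^{c_1\sqrt{d}}$ factor. Restricting to $\mathcal{E}$ is what makes the argument work since there $|\xi| \le \sqrt{2}|z_1|$, and the complementary event is handled by $\P(\mathcal{E}^c) \le 2e^{-c_2 d}$ absorbing the pointwise $e^{c_1\sqrt{d}}$ blowup.
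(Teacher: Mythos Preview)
Your proof is correct, but the paper takes a cleaner route by reversing the direction of the Gaussian--sphere coupling. Instead of writing $x_1 = z_1 R$ with $R = \sqrt{d}/\|\bz\|_2$ (where $z_1$ and $R$ are \emph{dependent}), the paper writes $\gamma x_1 \sim \cN(0,1)$ with $\gamma \sim \chi(d)/\sqrt{d}$ taken \emph{independent} of $\bx \sim \Unif(\S^{d-1}(\sqrt{d}))$. Then $\E[\sigma(\gamma x_1)] = 0$, and Taylor expanding $\sigma(\gamma x_1)$ in $\gamma$ around $\gamma = 1$ gives a first-order term that factorizes exactly as $\E[\sigma'(x_1)x_1]\cdot \E[\gamma-1] = O(1)\cdot O(1/d)$ by independence, with no need to decompose the radial part or isolate the $z_1$-dependence. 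The second-order remainder is handled directly by Cauchy--Schwarz and $\E[(\gamma-1)^4]^{1/2} = O(1/d)$, using that the intermediate point $u x_1$ satisfies $|u x_1| \le \max(|x_1|, |\gamma x_1|)$ so that $\E[\sup_u \sigma''(u x_1)^2]$ is uniformly bounded; this avoids your good/bad event split altogether. Your approach still works, but the dependence between $z_1$ and $R$ forces you into the extra bookkeeping (the $\tilde S$ decomposition, the truncation on $\mathcal{E}$, and the separate treatment of $\mathcal{E}^c$), whereas the paper's choice of coupling makes the independence do that work for free.
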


\begin{proof}[Proof of Lemma \ref{lem:small_lambda_d0}]

Let $\bx \sim \Unif(\S^{d-1}(\sqrt{d}))$ and $\gamma \sim \chi(d) / \sqrt{d}$ independently. Then we have $\gamma \bx  \sim \cN(\bzero, \id_d)$, so that by the assumption, we have $\E[\sigma(\gamma x_1)] = 0$. 

As a consequence, by the second order Taylor expansion, and by the independence of $\gamma$ and $\bx$, we have (for $\xi(x_1) \in [\gamma, 1]$) 
\[
\begin{aligned}
\vert \lambda_{d, 0}(\sigma) \vert =&~ \vert \E[\sigma(x_1)]\vert \le \vert \E[\sigma(x_1)] - \E[\sigma(\gamma x_1)] \vert \le \Big\vert \E[\sigma'(x_1)x_1] \E[\gamma - 1] \Big\vert + \Big\vert (1/2)\E[\sigma''(\xi(x_1) x_1) (\gamma - 1)^2] \Big\vert\\
\le&~ \Big\vert \E[\sigma'(x_1)x_1] \Big\vert \cdot \Big\vert \E[\gamma - 1] \Big\vert +  (1/2)\E\Big[ \sup_{u \in [\gamma, 1]} \sigma''(u x_1)^2 \Big]^{1/2}  \E[ (\gamma - 1)^4]^{1/2}. 
\end{aligned}
\]
By the assumption that $\vert \sigma'(x)\vert, \vert \sigma''(x) \vert < c_0 e^{c_1 \vert x \vert}$ for some $c_0, c_1 \in \R$, there exists constant $K$ that only depends on $c_0$ and $c_1$ such that \[
\sup_{d} \Big\vert \E[\sigma'(x_1)x_1] \Big\vert \le K,~~~~~ \sup_{d}\Big\vert (1/2)\E\Big[ \sup_{u \in [\gamma, 1]} \sigma''(u x_1)^2 \Big]^{1/2} \Big\vert \le K.
\]
Moreover, by property of the $\chi$ distribution, we have 
\[
\vert \E[\gamma - 1] \vert = O(d^{-1}), ~~~~~~\E[ (\gamma - 1)^4]^{1/2} = O(d^{-1}).
\]
This concludes the proof. 
\end{proof}

The following lemma is a simple variance calculation and can be found as Lemma C.5 in \cite{mm19}. We restate here for completeness. 

\def\bg{{\boldsymbol g}}
\def\bB{{\boldsymbol B}}

\begin{lemma}\label{lem:variance_calculations}
Let $\bA \in \R^{n \times N}$ and $\bB \in \R^{n \times n}$. Let $\bg = (g_1, \ldots, g_n)^\sT$ with $g_i \sim_{iid} \P_g$, $\E_g[g] = 0$, and $\E_g[g^2] = 1$. Let $\bh = (h_1, \ldots, h_N)^\sT$ with $h_i \sim_{iid} \P_h$, $\E_h[h] = 0$, and $\E_h[h^2] = 1$. Further we assume that $\bh$ is independent of $\bg$. Then we have 
\[
\begin{aligned}
\Var(\bg^\sT \bA \bh) =&~ \| \bA \|_F^2, \\
\Var(\bg^\sT \bB \bg) =&~ \sum_{i = 1}^n B_{ii}^2 (\E[g^4] - 3) + \| \bB \|_F^2 + \Trace(\bB^2). 
\end{aligned}
\]
\end{lemma}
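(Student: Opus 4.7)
The lemma is a direct second-moment computation, so the plan is simply to expand everything entrywise and invoke the moment assumptions on the i.i.d.\ coordinates. There are no asymptotics or random matrix tools needed; the only care is in correctly handling the fourth-moment contribution in the quadratic form.

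For the bilinear form, I would first note that $\bg^\sT \bA \bh = \sum_{i,j} A_{ij} g_i h_j$, and by independence of $\bg$ and $\bh$ together with $\E[g_i] = \E[h_j] = 0$, every term has mean zero, so $\E[\bg^\sT \bA \bh] = 0$. Then expand the square:
\[
\E\bigl[(\bg^\sT \bA \bh)^2\bigr] = \sum_{i,j,k,l} A_{ij} A_{kl}\, \E[g_i g_k]\, \E[h_j h_l] = \sum_{i,j,k,l} A_{ij} A_{kl}\, \delta_{ik} \delta_{jl} = \sum_{i,j} A_{ij}^2 = \|\bA\|_F^2,
\]
using independence to factor the expectation and $\E[g_i g_k] = \delta_{ik}$, $\E[h_j h_l] = \delta_{jl}$.

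For the quadratic form, I would first compute $\E[\bg^\sT \bB \bg] = \sum_{i,j} B_{ij} \E[g_i g_j] = \sum_i B_{ii} = \Trace(\bB)$. The main step is to expand
\[
\E\bigl[(\bg^\sT \bB \bg)^2\bigr] = \sum_{i,j,k,l} B_{ij} B_{kl}\, \E[g_i g_j g_k g_l],
\]
and apply the identity
\[
\E[g_i g_j g_k g_l] = (\E[g^4] - 3)\,\one\{i=j=k=l\} + \delta_{ij}\delta_{kl} + \delta_{ik}\delta_{jl} + \delta_{il}\delta_{jk},
\]
which one verifies by checking each case (all four equal, two disjoint pairs, three equal, all distinct); the constant $3$ is the Wick count that is overcounted when all four indices coincide. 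Substituting this decomposition gives four sums, which evaluate to $(\E[g^4]-3)\sum_i B_{ii}^2$, $(\Trace\bB)^2$, $\|\bB\|_F^2$, and $\Trace(\bB^2)$ respectively. Subtracting $(\E[\bg^\sT \bB \bg])^2 = (\Trace\bB)^2$ yields the claimed formula.

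There is no real obstacle here; the only subtlety is correctly writing down the fourth-moment identity so that the excess kurtosis term $(\E[g^4] - 3)$ appears only on the diagonal $i=j=k=l$. Once that identity is in hand, everything reduces to bookkeeping over the four index tuples, and the proof is a few lines.
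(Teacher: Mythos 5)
Your proof is correct: the fourth-moment identity $\E[g_i g_j g_k g_l] = (\E[g^4]-3)\,\one\{i=j=k=l\} + \delta_{ij}\delta_{kl} + \delta_{ik}\delta_{jl} + \delta_{il}\delta_{jk}$ checks out in every index pattern (the three-equal-index case vanishes because $\E[g]=0$, with no condition needed on $\E[g^3]$), and the four resulting sums together with subtracting $(\Trace\bB)^2$ give exactly the stated formulas. The paper itself gives no proof—it restates the lemma and cites Lemma C.5 of \citet{mm19}—and your direct entrywise expansion is precisely the standard computation behind that reference.
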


\section{Proof of Theorem \ref{thm:main_theorem}}\label{sec:proof_main_thm}

Here we give the whole proof for $U$. The proof for $T$ is the same.

For fixed $A^2 \in \Gamma_U \equiv \{\cA_U(\lambda, \psi_1, \psi_2): \lambda \in \Lambdau \}$, we denote 
\[
\lambda_\star(A^2) = \inf_{\lambda} \Big\{ \lambda:  \cA_U(\lambda, \psi_1, \psi_2) = A^2 \Big\}. 
\]
By the definition of $\Gamma_U$, the set $\{ \lambda:  \cA_U(\lambda, \psi_1, \psi_2) = A^2 \}$ is non-empty and lower bounded, so that $\lambda_\star(A^2)$ can be well-defined. Moreover, we have $\lambda_\star(A^2) \in \Lambdau$. It is also easy to see that we have 
\begin{equation}\label{eqn:lambda_star_U_Asquare_min}
\lambda_\star(A^2) \in \argmin_{\lambda \ge 0} \Big[ \overline \cU(\lambda, \psi_1, \psi_2) + \lambda A^2 \Big]. 
\end{equation}

\subsection{Upper bound} 

Note we have 
\[
\begin{aligned}
U(A, N, n, d) =&~ \sup_{(N/d) \| \ba \|_2^2 \le A^2} \Big( R(\ba) - \what R_n(\ba) \Big) \\
\le&~ \inf_{\lambda} \sup_{(N/d) \| \ba \|_2^2 \le A^2} \Big( R(\ba) - \what R_n(\ba) - \psi_1\lambda (\|\ba\|_2^2 - \psi_1^{-1}A^2) \Big) \\
\le&~ \inf_{\lambda} \Big[ \overline U(\lambda, N, n, d) + \lambda A^2  \Big] \\
\le &~  \overline U(\lambda_\star(A^2), N, n, d) + \lambda_\star(A^2) A^2. 
\end{aligned}
\]
Note that $\lambda_\star(A^2) \in \Lambdau$, so by Lemma \ref{prop:asymptotics_barU_A^2}, in the limit of Assumption \ref{ass:linear}, we have
\[
U(A, N, n, d) \le  \overline \cU(\lambda_\star(A^2), \psi_1, \psi_2) + \lambda_\star(A^2) A^2  + o_{d, \P}(1) = \cU(A, \psi_1, \psi_2) + o_{d, \P}(1), 
\]
where the last equality is by Eq. (\ref{eqn:lambda_star_U_Asquare_min}). This proves the upper bound. 

\subsection{Lower bound}
For any $A^2 > 0$, we define a random variable $\hat \lambda(A^2)$ (which depend on $\bX$, $\bTheta$, $\bbeta$, $\beps$) by 
\[
\hat \lambda(A^2) = \inf\Big\{ \lambda: \lambda \in  \argmin_{\lambda \ge 0} \Big[ \overline U(\lambda, N, n, d) + \lambda A^2 \Big]  \Big\}. 
\]
By Proposition \ref{prop:strong_duality}, the set is should always be non-empty, so that $\hat \lambda(A^2)$ can always be well-defined. 

Moreover, since $\lambda_\star(A^2) \in \Lambdau$, by Assumption \ref{ass:overline_U_invertable}, as we have shown in the proof in Proposition \ref{prop:concentration_lag}, we can uniquely define $\overline\ba_{U}(\lambda_\star(A^2))$ with high probability, where
\[
\overline\ba_{U}(\lambda_\star(A^2)) = \argmax_{\ba} \Big[ R(\ba) - \what R_n(\ba) - \psi_1 \lambda_\star(A^2) \| \ba \|_2^2\Big]. 
\]
As a consequence, for a small $\eps > 0$, the following event $\cE_{\eps, d}$ can be well-defined with high probability
\[
\begin{aligned}
\cE_{\eps, d} =&~ \Big\{ \psi_1 \| \overline\ba_{U} (\lambda_\star(A^2)) \|_2^2 \ge A^2 - \eps \Big\} \cap \Big\{ \hat \lambda(A^2 + \eps) \le \lambda_\star(A^2) \Big\}\\
=&~  \Big\{ A^2 - \eps\le \psi_1 \| \overline\ba_{U} (\lambda_\star(A^2)) \|_2^2 \le  A^2 + \eps \Big\}. 
\end{aligned}
\]

Now, by Proposition \ref{prop:concentration_lag}, in the limit of Assumption \ref{ass:linear}, we have
\begin{equation}\label{eqn:event_Asquare_concentration}
\lim_{d \to \infty} \P_{\bX, \bTheta, \bbeta, \beps}(\cE_{\eps, d}) = 1, 
\end{equation}
and we have 
\begin{equation}\label{eqn:overline_U_concentration}
\begin{aligned}
\cU(\lambda_\star(A^2), \psi_1, \psi_2)  = \overline \cU(\lambda_\star(A^2), \psi_1, \psi_2) + o_{d, \P}(1). 
\end{aligned}
\end{equation}

By the strong duality as in Proposition \ref{prop:strong_duality}, for any $A^2 \in \Gamma_U$, we have 
\[
\begin{aligned}
U(A, N, n, d) =&~  \overline U(\hat \lambda(A^2), N, n, d) + \hat \lambda(A^2) A^2 . 
\end{aligned}
\]
Consequently, for small $\eps >0$, when the event $\cE_{\eps, d}$ happens, we have
\[
\begin{aligned}
&~ U( (A^2 + \eps)^{1/2}, N, n, d) \\
=&~  \sup_{\ba} \Big( R(\ba) - \what{R}_n(\ba) - \psi_1\hat \lambda(A^2 + \eps) \cdot \big(\| \ba \|_2^2 - \psi_1^{-1}(A^2+ \eps)\big)\Big)\\
\ge&~ R(\overline\ba_{U}(\lambda_\star(A^2))) - \what{R}_n(\overline\ba_{U}(\lambda_\star(A^2))) - \psi_1\hat \lambda(A^2 + \eps) \cdot \big(\|\overline\ba_{U}(\lambda_\star(A^2)) \|_2^2 - \psi_1^{-1}(A^2 + \eps)\big) \\
\ge&~ R(\overline\ba_{U}(\lambda_\star(A^2))) - \what{R}_n(\overline\ba_{U}(\lambda_\star(A^2))) - \psi_1\hat \lambda(A^2 + \eps) \cdot \big(\|\overline\ba_{U}(\lambda_\star(A^2)) \|_2^2 - \psi_1^{-1}(A^2 - \eps)\big) \\
\ge&~ R(\overline\ba_{U}(\lambda_\star(A^2))) - \what{R}_n(\overline\ba_{U}(\lambda_\star(A^2))) - \psi_1 \lambda_\star(A^2) \cdot \big(\|\overline\ba_{U}(\lambda_\star(A^2)) \|_2^2 - \psi_1^{-1}(A^2 - \eps)\big)  \\
=&~  \overline U( \lambda_\star(A^2), N, n, d)+ \lambda_\star(A^2) \cdot (A^2 - \eps). 
\end{aligned}
\]
As a consequence, by Eq. (\ref{eqn:event_Asquare_concentration}) and (\ref{eqn:overline_U_concentration}), we have 
\[
U( (A^2 + \eps)^{1/2}, N, n, d) \ge \overline \cU( \lambda_\star(A^2), \psi_1, \psi_2) + \lambda_\star(A^2) \cdot (A^2 - \eps) - o_{d, \P}(1)= \cU(A, \psi_1, \psi_2) - \eps \lambda_\star(A^2)  - o_{d, \P}(1) . 
\]
where the last equality is by the definition of $\cU$ as in Definition \ref{def:formula_U_T}, and by the fact that $\lambda_\star(A^2) \in \argmin_{\lambda \ge 0} [ \overline \cU(\lambda, \psi_1, \psi_2) + \lambda A^2 ]$. Taking $\eps$ sufficiently small proves the lower bound. This concludes the proof of Theorem \ref{thm:main_theorem}. 

\section{Technical background}
\label{sec:Background}

In this section we introduce additional technical background useful for the proofs. 
In particular, we will use decompositions in (hyper-)spherical harmonics on the  $\S^{d-1}(\sqrt{d})$ and in Hermite polynomials on the real line. We refer the readers to \cite{costas2014spherical,szego1939orthogonal,chihara2011introduction,ghorbani2019linearized, mm19} for further information on these topics. 

\subsection{Functional spaces over the sphere}

For $d \ge 1$, we let $\S^{d-1}(r) = \{\bx \in \R^{d}: \| \bx \|_2 = r\}$ denote the sphere with radius $r$ in $\reals^d$.
We will mostly work with the sphere of radius $\sqrt d$, $\S^{d-1}(\sqrt{d})$ and will denote by $\gamma_d$  the uniform probability measure on $\S^{d-1}(\sqrt d)$. 
All functions in the following are assumed to be elements of $ L^2(\S^{d-1}(\sqrt d) ,\gamma_d)$, with scalar product and norm denoted as $\<\,\cdot\,,\,\cdot\,\>_{L^2}$
and $\|\,\cdot\,\|_{L^2}$:
\begin{align}
\<f,g\>_{L^2} \equiv \int_{\S^{d-1}(\sqrt d)} f(\bx) \, g(\bx)\, \gamma_d(\de \bx)\,.
\end{align}

For $\ell\in\integers_{\ge 0}$, let $\tilde{V}_{d,\ell}$ be the space of homogeneous harmonic polynomials of degree $\ell$ on $\reals^d$ (i.e. homogeneous
polynomials $q(\bx)$ satisfying $\Delta q(\bx) = 0$), and denote by $V_{d,\ell}$ the linear space of functions obtained by restricting the polynomials in $\tilde{V}_{d,\ell}$
to $\S^{d-1}(\sqrt d)$. With these definitions, we have the following orthogonal decomposition
\begin{align}
L^2(\S^{d-1}(\sqrt d) ,\gamma_d) = \bigoplus_{\ell=0}^{\infty} V_{d,\ell}\, . \label{eq:SpinDecomposition}
\end{align}
The dimension of each subspace is given by
\begin{align}
\dim(V_{d,\ell}) = B(d, \ell) = \frac{2 \ell + d - 2}{\ell} { \ell + d - 3 \choose \ell - 1} \, .
\end{align}
For each $\ell\in \integers_{\ge 0}$, the spherical harmonics $\{ Y_{\ell j}^{(d)}\}_{1\le j \le B(d, \ell)}$ form an orthonormal basis of $V_{d,\ell}$:
\[
\<Y^{(d)}_{ki}, Y^{(d)}_{sj}\>_{L^2} = \delta_{ij} \delta_{ks}.
\]
Note that our convention is different from the more standard one, that defines the spherical harmonics as functions on $\S^{d-1}(1)$.
It is immediate to pass from one convention to the other by a simple scaling. We will drop the superscript $d$ and write $Y_{\ell, j} = Y_{\ell, j}^{(d)}$ whenever clear from the context.

We denote by $\proj_k$  the orthogonal projections to $V_{d,k}$ in $L^2(\S^{d-1}(\sqrt d),\gamma_d)$. This can be written in terms of spherical harmonics as
\begin{align}
\proj_k f(\bx) \equiv&~ \sum_{l=1}^{B(d, k)} \< f, Y_{kl}\>_{L^2} Y_{kl}(\bx). 
\end{align}
Then for a function $f \in L^2(\S^{d-1}(\sqrt d))$, we have 
\[
f(\bx) = \sum_{k = 0}^\infty \proj_k f(\bx) = \sum_{k = 0 }^\infty \sum_{l = 1}^{B(d, k)} \< f, Y_{kl}\>_{L^2} Y_{kl}(\bx). 
\]

\subsection{Gegenbauer polynomials}
\label{sec:Gegenbauer}

The $\ell$-th Gegenbauer polynomial $Q_\ell^{(d)}$ is a polynomial of degree $\ell$. Consistently
with our convention for spherical harmonics, we view $Q_\ell^{(d)}$ as a function $Q_{\ell}^{(d)}: [-d,d]\to \reals$. The set $\{ Q_\ell^{(d)}\}_{\ell\ge 0}$ forms an orthogonal basis on $L^2([-d,d], \tilde \tau_d)$ (where $\tilde \tau_d$ is the distribution of $\<\bx_1, \bx_2\>$ when $\bx_1, \bx_2 \sim_{i.i.d.} \Unif(\S^{d-1}(\sqrt d))$), satisfying the normalization condition:
\begin{align}
\< Q^{(d)}_k, Q^{(d)}_j \>_{L^2(\tilde \tau_d)} = \frac{1}{B(d,k)}\, \delta_{jk} \, .  \label{eq:GegenbauerNormalization}
\end{align}
In particular, these polynomials are normalized so that  $Q_\ell^{(d)}(d) = 1$. 
As above, we will omit the superscript $d$ when clear from the context (write it as $Q_\ell$ for notation simplicity).

Gegenbauer polynomials are directly related to spherical harmonics as follows. Fix $\bv\in\S^{d-1}(\sqrt{d})$ and 
consider the subspace of  $V_{\ell}$ formed by all functions that are invariant under rotations in $\reals^d$ that keep $\bv$ unchanged.
It is not hard to see that this subspace has dimension one, and coincides with the span of the function $Q_{\ell}^{(d)}(\<\bv,\,\cdot\,\>)$.

We will use the following properties of Gegenbauer polynomials
\begin{enumerate}
\item For $\bx, \by \in \S^{d-1}(\sqrt d)$
\begin{align}
\< Q_j^{(d)}(\< \bx, \cdot\>), Q_k^{(d)}(\< \by, \cdot\>) \>_{L^2(\S^{d-1}(\sqrt d), \gamma_d)} = \frac{1}{B(d,k)}\delta_{jk}  Q_k^{(d)}(\< \bx, \by\>).  \label{eq:ProductGegenbauer}
\end{align}
\item For $\bx, \by \in \S^{d-1}(\sqrt d)$
\begin{align}
Q_k^{(d)}(\< \bx, \by\> ) = \frac{1}{B(d, k)} \sum_{i =1}^{ B(d, k)} Y_{ki}^{(d)}(\bx) Y_{ki}^{(d)}(\by). \label{eq:GegenbauerHarmonics}
\end{align}
\end{enumerate}
Note in particular that property 2 implies that --up to a constant-- $Q_k^{(d)}(\< \bx, \by\> )$ is a representation of the projector onto 
the subspace of degree-$k$ spherical harmonics
\begin{align}
(\proj_k f)(\bx) = B(d,k) \int_{\S^{d-1}(\sqrt{d})} \, Q_k^{(d)}(\< \bx, \by\> )\,  f(\by)\, \gamma_d(\de\by)\, .\label{eq:ProjectorGegenbauer}
\end{align}
For a function $\sigma \in L^2([-\sqrt d, \sqrt d], \tau_d)$ (where $\tau_d$ is the distribution of $\< \bx_1, \bx_2 \> / \sqrt d$ when $\bx_1, \bx_2 \sim_{iid} \Unif(\S^{d-1}(\sqrt d))$), denoting its spherical harmonics coefficients $\lambda_{d, k}(\sigma)$ to be 
\begin{align}\label{eqn:technical_lambda_sigma}
\lambda_{d, k}(\sigma) = \int_{[-\sqrt d , \sqrt d]} \sigma(x) Q_k^{(d)}(\sqrt d x) \tau_d(x),
\end{align}
then we have the following equation holds in $L^2([-\sqrt d, \sqrt d],\tau_d)$ sense
\begin{equation}\label{eqn:sigma_G_decomposition}
\sigma(x) = \sum_{k = 0}^\infty \lambda_{d, k}(\sigma) B(d, k) Q_k^{(d)}(\sqrt d x). 
\end{equation}

\subsection{Hermite polynomials}

The Hermite polynomials $\{\He_k\}_{k\ge 0}$ form an orthogonal basis of $L^2(\reals,\mu_G)$, where $\mu_G(\de x) = e^{-x^2/2}\de x/\sqrt{2\pi}$ 
is the standard Gaussian measure, and $\He_k$ has degree $k$. We will follow the classical normalization (here and below, expectation is with respect to
$G\sim\normal(0,1)$):
\begin{align}
\E\big\{\He_j(G) \,\He_k(G)\big\} = k!\, \delta_{jk}\, .
\end{align}
As a consequence, for any function $\sigma \in L^2(\reals,\mu_G)$, we have the decomposition
\begin{align}\label{eqn:sigma_He_decomposition}
\sigma(x) = \sum_{k=1}^{\infty}\frac{\mu_k(\sigma )}{k!}\, \He_k(x)\, ,\;\;\;\;\;\; \mu_k(\sigma) \equiv \E\big\{\sigma(G)\, \He_k(G)\}\, .
\end{align}

The Hermite polynomials can be obtained as high-dimensional limits of the Gegenbauer polynomials introduced in the previous section. Indeed, 
the Gegenbauer polynomials (up to a $\sqrt d$ scaling in domain) are constructed by Gram-Schmidt orthogonalization of the monomials $\{x^k\}_{k\ge 0}$ with respect to the measure 
$\tau_d$, while Hermite polynomial are obtained by Gram-Schmidt orthogonalization with respect to $\mu_G$. Since $\tau_d\Rightarrow \mu_G$
(here $\Rightarrow$ denotes weak convergence),
it is immediate to show that, for any fixed integer $k$, 
\begin{align}
\lim_{d \to \infty} \Coeff\{ Q_k^{(d)}( \sqrt d x) \, B(d, k)^{1/2} \} = \Coeff\left\{ \frac{1}{(k!)^{1/2}}\,\He_k(x) \right\}\, .\label{eq:Gegen-to-Hermite}
\end{align}
Here and below, for $P$ a polynomial, $\Coeff\{ P(x) \}$ is  the vector of the coefficients of $P$. As a consequence, for any fixed integer $k$, we have 
\begin{align}\label{eqn:relationship_mu_lambda}
\mu_k(\sigma) = \lim_{d \to \infty} \lambda_{d, k}(\sigma) (B(d, k) k!)^{1/2},
\end{align}
where $\mu_k(\sigma)$ and $\lambda_{d, k}(\sigma)$ are given in Eq. (\ref{eqn:sigma_He_decomposition}) and (\ref{eqn:technical_lambda_sigma}).

\end{document}